\documentclass[accepted]{uai2026}
\usepackage[american]{babel}

\usepackage{natbib}
\usepackage{mathtools}
\usepackage{booktabs}
\usepackage{tikz}

\usepackage{graphicx}
\graphicspath{{../}{./}}
 
\usepackage[inline]{enumitem}
\usepackage{microtype}

\usepackage{graphicx}
\usepackage{subcaption}
\usepackage{booktabs}
\usepackage{makecell}
\usepackage{csquotes}
\usepackage{xurl} 
\usetikzlibrary{shapes.geometric, arrows.meta, positioning}

\usepackage{amsmath}
\usepackage{amssymb}
\usepackage{mathtools}
\usepackage{amsthm}
\usepackage{thmtools}
 
\usepackage[capitalize]{cleveref}

\theoremstyle{plain}

\theoremstyle{definition}
\newtheorem{definition}{Definition}[section]
\newtheorem{assumption}{Assumption}[section]
\theoremstyle{remark}

\newtheorem{example}{Example}[section]
\usepackage[most]{tcolorbox}
\tcolorboxenvironment{example}{
  enhanced jigsaw,
  breakable,
  colback=black!5,
  frame hidden,
  boxrule=0pt,
  left=5pt, right=5pt, top=4pt, bottom=4pt,
  before skip=\topsep, after skip=\topsep,
}
\crefname{assumption}{Assump.}{Assumptions}
\Crefname{assumption}{Assumption}{Assumptions}
\crefname{definition}{Def.}{Definitions}
\Crefname{definition}{Definition}{Definitions}

\DeclareMathOperator*{\argmin}{arg\,min}

\newcommand{\cM}{\mathcal{M}}

\newcommand{\Var}{\text{Var}}

\newcommand{\Hclass}{\mathcal{T}}
\newcommand{\cS}{S}
\newcommand{\Sagree}{\mathcal{S}}
\newcommand{\Sdisagree}{\overline{\mathcal{C}}}
\newcommand{\Sle}{\mathcal{S}_{\downarrow}}
\newcommand{\Sge}{\mathcal{S}_{\uparrow}}
\newcommand{\Sletight}{\mathcal{S}_{\Downarrow}}
\newcommand{\Sgetight}{\mathcal{S}_{\Uparrow}}
\newcommand{\Scorrect}{\mathcal{C}}
\newcommand{\Slehat}{\widehat{\mathcal{S}}_{\downarrow}}
\newcommand{\Sgehat}{\widehat{\mathcal{S}}_{\uparrow}}
\newcommand{\Sletighthat}{\widehat{\mathcal{S}}_{\Downarrow}}
\newcommand{\Sgetighthat}{\widehat{\mathcal{S}}_{\Uparrow}}
\newcommand{\bsL}{b_{L}}
\newcommand{\bsU}{b_{U}}
\newcommand{\lneu}{\mathsf{neu}}\newcommand{\lmon}{\mathsf{mon}}
\newcommand{\lcor}{\mathsf{cor}}\newcommand{\ldef}{\mathsf{def}}
\newcommand{\bLneu}{\mathbf{1}\{\bsL=\lneu\}}\newcommand{\bLmon}{\mathbf{1}\{\bsL=\lmon\}}
\newcommand{\bLcor}{\mathbf{1}\{\bsL=\lcor\}}\newcommand{\bLdef}{\mathbf{1}\{\bsL=\ldef\}}
\newcommand{\bUneu}{\mathbf{1}\{\bsU=\lneu\}}\newcommand{\bUmon}{\mathbf{1}\{\bsU=\lmon\}}
\newcommand{\bUcor}{\mathbf{1}\{\bsU=\lcor\}}\newcommand{\bUdef}{\mathbf{1}\{\bsU=\ldef\}}
\newcommand{\bsLhat}{\hat{b}_{L}}
\newcommand{\bsUhat}{\hat{b}_{U}}
\newcommand{\bLneuhat}{\mathbf{1}\{\bsLhat=\lneu\}}\newcommand{\bLmonhat}{\mathbf{1}\{\bsLhat=\lmon\}}
\newcommand{\bLcorhat}{\mathbf{1}\{\bsLhat=\lcor\}}\newcommand{\bLdefhat}{\mathbf{1}\{\bsLhat=\ldef\}}
\newcommand{\bUneuhat}{\mathbf{1}\{\bsUhat=\lneu\}}\newcommand{\bUmonhat}{\mathbf{1}\{\bsUhat=\lmon\}}
\newcommand{\bUcorhat}{\mathbf{1}\{\bsUhat=\lcor\}}\newcommand{\bUdefhat}{\mathbf{1}\{\bsUhat=\ldef\}}

\def\ci{\perp\!\!\!\perp}

\usepackage[dvipsnames]{xcolor}
\usepackage{hyperref}
\hypersetup{
  colorlinks=true,
  linkcolor=RawSienna,
  citecolor=gray,
  urlcolor=gray,
}
\allowdisplaybreaks

\title{Bounding the Causal Impact of ML-assisted Decision-Making \\ via Counterfactual Correctness}

\author[1]{\href{mailto:jzhan367@jh.edu?Subject=Your UAI 2026 paper}{Jonathan~Zhang}}
\author[1]{Erik~Skalnes}
\author[1]{Jacob~M.~Chen}
\author[1]{Michael~Oberst}

\affil[1]{Computer Science Dept., Johns Hopkins University, Baltimore, Maryland, USA}

\begin{document}
\maketitle

\begin{abstract}
  Predictive machine learning (ML) models are increasingly used to aid human decision-makers across various high-risk domains such as healthcare and criminal justice.  There is a growing recognition of the need to evaluate the causal impact of deploying these systems on downstream outcomes, such as patient survival or crime recidivism. Randomized control trials (RCTs) can provide high-quality evidence on the impact of a deployed model, but they run into a challenge: it is often infeasible to run repeated trials when models are updated or retrained to improve predictive performance.  In this work, we present a partial-identification approach to using prior RCT data to construct bounds on the causal effect of a new model. The core innovation in our approach is to leverage assumptions relating fine-grained predictive accuracy to downstream outcomes.   We do so via two monotonicity assumptions: first, on individual-level `counterfactual correctness' (all else being equal, a correct prediction leads to non-inferior outcomes); and second, on the relation between subgroup predictive performance and outcomes, interpretable as an assumption regarding trust in model outputs. We demonstrate our method with a simulation study, illustrating how incorporating this information can lead to more informative bounds compared to prior work.
\end{abstract}

\section{Introduction}\label{sec:intro}
Predictive machine learning (ML) models are being deployed more and more into high-risk decision-making domains, such as recidivism prediction for criminal justice or diagnostic aids for physicians \citep{Travaini2022MachineLAA, esteva2017dermatologist}.  In these domains, there is a growing recognition of the need to treat artificial intelligence (AI) systems as interventions, going beyond predictive accuracy to reason about their impact on decision-makers and downstream outcomes \citep{ben-michael2024-54,liu2025-ee,joshi2025-42,van-amsterdam2026-53}.

In some settings, it is feasible to evaluate the impact of predictive systems via randomized control trials (RCTs), which are increasingly being used to assess ML/AI interventions in medicine \citep{shaukat2022-23, yao2021-7c, upton2024-aa, gohil2024-10, Chen2025TheFTA, Shaban2025EmpoweringBCA} and criminal justice \citep{imai2023experimental}. Such trials are often designed to evaluate the impact of deploying an ML/AI system as a discrete intervention, including impact on decision-makers and downstream outcomes, keeping in mind that the impact of predictive models is mediated by human behavior \citep{raji2025-6c}.

However, there are limits to the approach of viewing distinct ML models as distinct interventions. For example, it is typically infeasible to run a new RCT for every proposed update to a model. This infeasibility creates a challenge:  If a randomized trial has been conducted for one model, and a new model (with higher predictive accuracy) is developed afterwards, what can we say about the expected impact of the latter model?

Prior work has made some steps towards addressing this gap. For instance, \citet{nercessian2025data} propose an adaptive trial design under the assumption that downstream outcomes do not differ between models whose predictions agree. \citet{chen2025} propose a causal framework for partial identification that weakens this assumption, by allowing for differences in outcomes at the individual level, even if predictions are identical. They motivate this possibility from the perspective of user trust: For instance, a model that flags every patient as high-risk is likely to influence clinical actions less than a model with higher precision, even on the cases where they agree.  To model this aspect of performance, they assume that residual variation in outcomes is driven by an observable per-model measure of global predictive performance (e.g., precision).

However, both of these frameworks fail to incorporate the accuracy of individual-level model predictions---in many scenarios, we might expect that correct predictions are no worse than suboptimal predictions. We refer to this notion as \textbf{counterfactual correctness}: Whether or not the predictions of a new model would have been correct on individual subjects in the original RCT.  With that notion in mind, we make the following contributions in this work: 
\begin{enumerate*}[label=(\roman*)]
\item We extend the causal framework of~\citet{chen2025} to incorporate known `correct' prediction labels: if a new model produces the correct prediction, we assume that outcomes are no worse than those of models that made incorrect predictions.
\item We further extend this framework to consider conditional subgroup predictive performance as influencing outcomes, rather than a global measure of performance, reflecting the idea that users are likely to develop mental heuristics of where models perform well or poorly.
\item We provide population-level upper and lower bounds under these assumptions, and further provide a consistent and asymptotically normal estimator of those bounds.  Crucially, unlike~\citet{chen2025}, we do so without assuming that model performance measures are known a priori, and allow for the fact that e.g., sub-group performance needs to be estimated from data.
\item In a semi-synthetic simulation study, we illustrate that incorporating this information can lead to more informative bounds compared to prior work.
\end{enumerate*}

Our framework is most useful in settings where the ``correct'' predictions (which we often refer to as the ``true labels'') are observed for all individuals in the RCT, and explicitly assumes that predictions themselves do not influence the true label.  These conditions may co-occur when the prediction task is fundamentally diagnostic, with the possibility of retrospective (even if expensive) assessment of the correct prediction for each individual. Our framework can be extended to settings where the true label is not always observed, under specific mechanisms of missingness. We discuss one of these examples in greater detail in~\cref{missingness}. We also discuss distinctions between our work and other related areas in~\cref{app:related_work}.

\section{Setup \& Causal Model}

\textbf{Notation:}
We use upper case letters $X$ to denote random variables, lower case $x$ to denote realizations, and calligraphic font $\mathcal X$ to denote the set of possible values. $\mathbf{1}\{E\}$ is the indicator function of an event $E$, equal to $1$ if the event occurs and $0$ otherwise. Unless otherwise noted, $\|\cdot\|$ refers to the $L^2(P)$ norm and $|\cdot|$ refers to the absolute value. 

\textbf{Problem Setup}: We model the data-generating process as the directed acyclic graph (DAG) in \cref{fig:dag_a}. A directed edge $A\rightarrow B$ means that $A$ may cause $B$. No edge means $A$ does not cause $B$. Each node is a random variable. $D \in \{1, \ldots, d\}$ are the arms of the RCT (e.g. with and without ML assistance). $\Hclass$ is the set of models being trialed, with each model corresponding to a specific arm. The model we would like to evaluate (denoted $\pi_e$) is untrialed, so $\pi_e \notin \Hclass$.
$X\in \mathcal{X}$ are features used by the prediction model, $Z \in \mathcal{Z}$ is the ``true label'', and $\hat{Z}\in \mathcal{Z}$ is the predicted label. The performance of the model $\Pi$ for individuals with covariates $X$ is a real number $M \in \mathcal{M} \subseteq \mathbb{R}$, where $\mathcal{M}$ denotes the range of achievable metric values. $A \in \mathcal{A}$ is the action taken by the decision-maker, which we do not model explicitly and generally allow to be unobserved. Finally, $Y\in\mathbb{R}$ is the downstream outcome, influenced by $A$ and $X$. We employ potential outcomes notation \citep{richardson2013single} where $Y(\Pi=\pi)$ represents the counterfactual value of $Y$ that would be observed had we set $\Pi = \pi$.
\begin{figure*}[t]
\centering
\begin{subfigure}[b]{0.45\textwidth}
\centering
\scalebox{0.80}{
\begin{tikzpicture}[>=stealth, node distance=2cm]
\tikzstyle{square} = [draw, very thick, minimum size=7mm, inner sep=3pt]
\begin{scope}
\path[->, very thick]
node[draw, square] (p) {$\Pi$}
node[above of=p] (d) {$D$}
node[right of=p, draw, square] (a) {$\hat{Z}$}
node[right of=a] (y) {$Y$}
node[right of=y, draw, square] (m) {$M$}
node[above of=a] (x) {$X$}
node[above of=y ] (z) {$Z$}
node[right of=z, red, draw, circle, dotted] (q) {$A$}
(p) edge[blue, bend right=20] (m)
(x) edge[red] (m)
(x) edge[blue, bend left=20] (q)
(p) edge[blue] (a)
(a) edge[blue] (q)
(q) edge[blue] (y)
(m) edge[blue] (q)
(d) edge[blue] (p)
(x) edge[blue] (a)
(x) edge[red, dotted, <->] (z)
(x) edge[blue] (y)
(z) edge[red] (y)
;
\end{scope}
\end{tikzpicture}
}
\caption{}
\label{fig:dag_a}
\end{subfigure}
\begin{subfigure}[b]{0.45\textwidth}
\centering
\scalebox{0.80}{
\begin{tikzpicture}[>=stealth, node distance=2cm]
\tikzstyle{square} = [draw, very thick, minimum size=7mm, inner sep=3pt]
\begin{scope}
\path[->, very thick]
node[draw, square] (p) {$\Pi$}
node[above of=p] (d) {$D$}
node[right of=p, draw, square] (a) {$\hat{Z}$}
node[right of=a] (y) {$Y$}
node[right of=y, draw, square] (m) {$M$}
node[above of=a] (x) {$X$}
node[above of=y ] (z) {$Z$}
(p) edge[blue, bend right=20] (m)
(x) edge[red] (m)
(p) edge[blue] (a)
(d) edge[blue] (p)
(x) edge[blue] (a)
(x) edge[red, dotted, <->] (z)
(x) edge[blue] (y)
(z) edge[red] (y)
(a) edge[blue] (y)
(m) edge[blue] (y)
;
\end{scope}
\end{tikzpicture}
}
\caption{}
\label{fig:dag_b}
\end{subfigure}

\caption{\cref{fig:dag_a} is the causal DAG representing the structure of an RCT, where $X$ denotes features, and $D$ the arm of the RCT, which determines the model $\Pi$ used for prediction. The model $\Pi$ outputs predictions $\hat{Z}$ as a deterministic function of $X$, where $Z$ is the true label.  The dotted bi-directed edge between $X, Z$ represents the possibility that $X$ and $Z$ share an unobserved common cause.  $M$ is a subgroup performance metric of $\Pi$, $A$ is the decision-maker's actions, and $Y$ is the downstream outcome. A solid square indicates the node is deterministic given its parents, and a dotted circle indicates the node is hidden. In \cref{fig:dag_b}, we show the same graph after applying the latent projection operator on $A$, which only considers the observed variables in the RCT. Edges also present in prior work by \citet{chen2025} are shown in blue (re-interpreting~\citet{chen2025} as having an implicit action $A$), while edges novel to our work are shown in red.}
\label{fig:RCT_DAG}
\end{figure*}

\begin{example}[Diagnosis Recommendation System]\label{example:diagnosis}
To build intuition for our notation in the context of a concrete example, consider a machine learning model ($\pi$) designed to help radiologists make diagnoses. An RCT to determine whether model deployment improves patient outcomes might randomly assign physicians to either the control arm (no model, $D = 1$) or the treatment arm (model assistance, $D = 2$). The model takes in a chest x-ray, as well as other general patient information ($X$) as input, and returns a predicted diagnosis ($\hat{Z}$), such as pneumonia, pneumothorax, or healthy. This prediction may or may not agree with the true diagnosis ($Z$), which may be recorded later in time (e.g., at discharge).
Over time, physicians will learn about the performance of the deployed models ($M$) on different types of patients. This learning can be captured quantitatively by calculating a per-subgroup metric on each of the models such as the sensitivity/specificity.
Together, the knowledge of these variables will affect the physician's actions ($A$), in turn affecting some outcome of interest such as survival rate ($Y$). These actions are left unmodeled, and in general could represent a complex set of actions (e.g., specific language used in a radiology report).  Our goal is to evaluate expected patient outcomes $Y$ under the deployment of a new model $\pi_e$ not included in the trial.
\end{example}

We now formally introduce our causal framework, shown graphically in \cref{fig:RCT_DAG}.

\begin{restatable}[Data Generating Process]{assumption}{DataGeneratingProcess}\label{asmp:dgp}
We assume that $(D, \Pi, X, Z, \hat{Z}, M, Y)$ are generated according to the following structural causal model (SCM) \cite{pearl2009causality}, where all noise variables $\epsilon$ are mutually independent:
\begin{align*}
    D &=f_D(\epsilon_D) & \Pi &= f_{\Pi}(D) \\
    X, Z &= f_{XZ}(\epsilon_{XZ}) & \hat{Z} &= {\Pi(X)}\\
     Y &= f_Y(X, Z,M,\hat{Z},  \epsilon_Y) & M &= f_M(X, \Pi)
\end{align*}
\end{restatable}

Note that our SCM allows for the possibility of various causal relationships between $X$ and $Z$ (e.g., where $X$ causes $Z$, vice versa, or where both are related by a common cause). We make a few additional observations:  First, our causal model (reflected in~\cref{fig:dag_b}) does not include $A$, but can be seen as a projection of the directed acyclic graph (DAG) shown in~\cref{fig:dag_a}, where $A$ mediates the effect of $\hat{Z}$ on $Y$. This mediation reflects our natural intuition that the decision-maker, not the model, is the final conduit of change, and serves as conceptual motivation for several of our assumptions. Second, we allow for $X, Z$ to be arbitrarily related (e.g., by a common cause), but we assume that $\hat{Z}$ does not have any causal effect on $Z$.  Finally, several elements are \textit{deterministic} given their parents, $\hat{Z}$ and $M$ in particular.

\textbf{Our Goal:} Consider a model $\pi_e\not\in \Hclass$. Our goal is to estimate the expectation of $Y$ for $\pi_e$ had it been in the RCT.
\begin{equation}\label{eq:our_goal}
\mathbb{E}[Y(\pi_e)] = \mathbb{E}[Y(\hat{Z}=\pi_e(X), M=f_M(X, \pi_e))].
\end{equation}
This notation is used in conjunction with our SCM notation defined in \cref{asmp:dgp}. In particular, the right-hand side of~\cref{eq:our_goal} can be written as 
$\mathbb{E}\left[f_Y(X, Z, f_M(X, \pi_e), \pi_e(X), \epsilon_Y)\right]$,\footnote{Throughout, we write $f_Y$ with its arguments in the order given in~\cref{asmp:dgp}, namely $(X, Z, M, \hat{Z}, \epsilon_Y)$.} where we set $\hat{Z} = \pi_e(X), M = f_M(X, \pi_e)$ and evaluate the expectation. We further illustrate this estimand with an example. 

\begin{example}[continues=example:diagnosis]\label{example:diagnosis2}
    Suppose we have data from a previous RCT on radiology models. The data records the values of $Z$ and $\hat{Z}$ for all models on all patients. For each patient, we observe the value of $Y$ under the deployment of a single model. In this context, given a new model, our goal would be to estimate its effect on patient survival ($Y$) without running a new RCT, leveraging only knowledge of the new model's predictions ($\hat{Z}$) on the data from that previous RCT.
\end{example}

\section{Structural Assumptions and Falsification Tests}
To draw conclusions about the downstream outcomes of an untrialed model, we need to make additional \textit{structural} assumptions on the data generating process, beyond those implied by the causal graph. First, we formalize our notion of counterfactual correctness.
\begin{restatable}[Counterfactual Correctness]{assumption}{CorrectActionsCauseNoHarm}\label{assumptions:correct}
   We assume that correct predictions cause no harm. That is, for all $m,m' \in \cM$ and $z, z'\in \mathcal{Z}$ with $z' \ne z$,
   \begin{align*}
     & Y(\hat{Z} = z, M=m,Z = z) \geq Y(\hat{Z} = z',M=m',Z = z)
   \end{align*}
\end{restatable}   
This assumption states that, when predictions are correct, we don't expect them to harm outcomes.\footnote{This inequality can be written equivalently under~\cref{asmp:dgp} as requiring that for all values of $z, x, m, \epsilon_Y$ and $z' \neq z$, $f_Y( x,z, m, z, \epsilon_Y) \geq f_Y(x, z , m', z', \epsilon_Y)$, where potential outcomes are taken to be pointwise functions of the noise variables.} While intuitive, this assumption may fail if e.g., physicians act contrary to predictions.  That said, this assumption has testable implications, which can be assessed using RCT data.

\begin{restatable}[Falsification of \cref{assumptions:correct}]{proposition}{FalsificationAssumptioncorrect}\label{prop:falsification_correct}
Let $\pi_c, \pi_w\in \Hclass$ be two trialed models (one of which will be ``correct'' and the other ``wrong''), and let $\mathcal{X}_{c\neq w} \coloneqq \{x \in \mathcal{X} \mid \pi_c(x) \ne \pi_w(x)\}$
be the set of covariates on which their predictions disagree. Consider the subpopulation where $X \in \mathcal{X}_{c \neq w}$ and $Z = \pi_c(X)$ (i.e., where $\pi_c$ is correct and $\pi_w$ incorrect).  Then, the observation that outcomes are better under $\pi_w$,\footnote{All falsification tests require that the relevant conditioning sets have non-zero probability of occurring.  For this case, if $D$ is uniformly distributed and independent of $X, Z$, it suffices that $P\big(X \in \mathcal{X}_{c\neq w},\, Z = \pi_c(X)\big) > 0$.}
\begin{align*}
    &\mathbb{E}[Y \mid X\in\mathcal{X}_{c \neq w},\, \Pi=\pi_c,\, Z=\pi_c(X)] \\
    & < \mathbb{E}[Y \mid X\in\mathcal{X}_{c \neq w},\, \Pi=\pi_w,\, Z=\pi_c(X)]
\end{align*}
falsifies \cref{assumptions:correct}.
\end{restatable}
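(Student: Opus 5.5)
We prove the contrapositive: assuming \cref{assumptions:correct} holds, we show
\begin{align*}
&\mathbb{E}[Y \mid X\in\mathcal{X}_{c \neq w},\, \Pi=\pi_c,\, Z=\pi_c(X)] \\
&\geq \mathbb{E}[Y \mid X\in\mathcal{X}_{c \neq w},\, \Pi=\pi_w,\, Z=\pi_c(X)].
\end{align*}
The observed strict reverse inequality therefore contradicts the assumption. The plan has three steps: rewrite each conditional expectation via the SCM of \cref{asmp:dgp}, remove the conditioning on $\Pi$ using randomization, and compare the resulting integrands pointwise.

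\textbf{Step 1: rewrite the observed expectations.} Let $E$ denote the event $\{X\in\mathcal{X}_{c\neq w},\, Z=\pi_c(X)\}$. For $\pi\in\{\pi_c,\pi_w\}$, on the event $\Pi=\pi$ the SCM gives $\hat Z=\pi(X)$ and $M=f_M(X,\pi)$. Hence
\[
Y=f_Y\bigl(X,Z,f_M(X,\pi),\pi(X),\epsilon_Y\bigr).
\]

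\textbf{Step 2: drop the conditioning on $\Pi$.} Since $\Pi=f_\Pi(D)$ and $D=f_D(\epsilon_D)$, and $\epsilon_D$ is independent of $(\epsilon_{XZ},\epsilon_Y)$, the variable $\Pi$ is independent of $(X,Z,\epsilon_Y)$. Therefore
\[
\mathbb{E}[Y\mid E,\Pi=\pi]=\mathbb{E}\bigl[f_Y(X,Z,f_M(X,\pi),\pi(X),\epsilon_Y)\mid E\bigr].
\]
The conditioning events must have positive probability, as required by the footnote; with $D$ independent and $P(E)>0$, this holds whenever $P(\Pi=\pi)>0$. The key point is that both sides are now expectations over the same conditional distribution of $(X,Z,\epsilon_Y)$ given $E$. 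This step, the justification that conditioning on $\Pi$ does not alter the joint law of $(X,Z,\epsilon_Y)$ restricted to $E$, is the main thing to state carefully. It follows directly from the mutual independence of the noise variables, because $E$ is measurable with respect to $\epsilon_{XZ}$.

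\textbf{Step 3: pointwise comparison.} On $E$ we have $Z=\pi_c(X)$, so $\pi_c$'s prediction is correct. Because $X\in\mathcal{X}_{c\neq w}$, we also have $\pi_w(X)\neq\pi_c(X)=Z$, so $\pi_w$'s prediction is incorrect. Apply the pointwise form of \cref{assumptions:correct} from its footnote with $z=\pi_c(x)$, $z'=\pi_w(x)$, $m=f_M(x,\pi_c)$ and $m'=f_M(x,\pi_w)$. This gives, for every realization in $E$,
\[
f_Y\bigl(x,z,f_M(x,\pi_c),\pi_c(x),\epsilon_Y\bigr)\ge f_Y\bigl(x,z,f_M(x,\pi_w),\pi_w(x),\epsilon_Y\bigr).
\]
Taking conditional expectations given $E$ preserves this inequality, assuming integrability of $Y$. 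This yields the displayed non-strict inequality, which completes the contrapositive.
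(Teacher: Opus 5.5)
Your proof is correct and takes essentially the paper's approach: express each arm's outcome through the SCM (the paper does this via consistency at a fixed $X=x$), use randomization ($\Pi \ci (X,Z)$) so that both arms are averaged against the same conditional law on $\{X\in\mathcal{X}_{c\neq w},\, Z=\pi_c(X)\}$, and then apply \cref{assumptions:correct} pointwise. The only difference is the order of steps: the paper compares the two arms at each $x$ and then aggregates with the tower property, whereas you first drop $\Pi$ from the conditioning using its independence from $(X,Z,\epsilon_Y)$, which is slightly cleaner because it never conditions on the event $\{X=x\}$.
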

This falsification test allows practitioners to use a simple hypothesis test (comparing two means) to assess if~\cref{assumptions:correct} is contradicted by RCT data. This falsification test, and others we propose, cannot conclusively demonstrate that an assumption is true, only that it is false, and passing the falsification tests does not validate the assumptions: The assumption may hold on RCT data but not more broadly across all future, untrialed models, and  our assumptions are defined pointwise on potential outcomes, while these falsifications test for inequalities in expectation. Nonetheless, these tests have the appeal of being simple to implement. Proofs for all results, including the proof for the falsification tests, are provided in the \hyperref[app:overview]{Appendix}.

Next, we introduce three additional assumptions adapted from \citet{chen2025}, but modified to better align with real-world conditions and our new data generating process. While the previous assumption dealt with outcomes under different predictions, the next assumptions focus on outcomes under the same prediction. We first introduce a notion of performance monotonicity, which was also explored in \citet{chen2025}.
\begin{restatable}[Conditional Performance monotonicity]{assumption}{PerformanceMonotonicity}\label{assumption:monotonicity}
Potential outcomes are monotonic in model performance relative to correctness:  If 
$m_i< m_j$ for $m_i, m_j\in\cM$, then for all $z\in\mathcal{Z}$ and all $\hat{z}\in\mathcal{Z}$ with $\hat{z}\ne z$,
\begin{align*}
Y(\hat{Z}= z, M = m_i, Z = z)&\leq Y(\hat{Z}={z}, M=m_j, Z = z)\\
Y(\hat{Z}=\hat{z}, M = m_i, Z = z)&\geq Y(\hat{Z}=\hat{z}, M=m_j, Z = z)
\end{align*}
\end{restatable}
This assumption states that, conditioned on correctness, potential outcomes are monotonic with respect to model performance. When the prediction is correct ($\hat{z} = z$), potential outcomes are monotonically increasing with respect to performance, and vice versa when the prediction is incorrect ($\hat{z}\ne z$).
Conceptually, this assumption reflects the idea that decision-makers are more likely to trust higher performing models, and follow through on their predictions. When the prediction is correct, this increased trust leads to better outcomes, but when predictions are incorrect, this reliance leads to worse outcomes. \citet{chen2025} considered a simplified form of this assumption, where improved performance always leads to better outcomes, which is likely invalid outside a narrow range of settings.  They motivated the assumption in a binary alerting setting, where false alerts are assumed to not be harmful at the individual level, and $\hat{z}=0$ does nothing to affect decision-making.  We similarly present a falsification test for~\cref{assumption:monotonicity}. This falsification test only covers the case when $\hat{Z} = z$; we provide an analogous test for the case when $\hat{Z} \ne z$ in \hyperref[prop:falsification_monotonicity2]{the Appendix}.
\begin{restatable}[Falsification of \cref{assumption:monotonicity}]{proposition}{FalsificationAssumptionThreeOne}\label{prop:falsification_monotonicity}
    Let $\pi_+,\pi_- \in\Hclass$ be two trialed models. 
    Consider a sub-population with $\mathcal{X}_{better}\subseteq\mathcal{X}$ such that for all $x\in \mathcal{X}_{better}$, $f_M(x,\pi_+) > f_M( x, \pi_-)$, and a subpopulation with covariates $\mathcal{X}_{agree} \subseteq \mathcal{X}$ such that for all $x \in \mathcal{X}_{agree}$, $\pi_+(x)= \pi_-(x)$. Then, for any $\mathcal{X}_{s} =  \mathcal{X}_{better}\cap \mathcal{X}_{agree}$
    \begin{align*}
	& \mathbb{E}[Y|X\in\mathcal{X}_{s}, \Pi = \pi_+, Z=\pi_+(X)] \\
        &< \mathbb{E}[Y|X\in\mathcal{X}_{s}, \Pi = \pi_-, Z=\pi_+(X)]
    \end{align*}
    falsifies \cref{assumption:monotonicity}.
 \end{restatable}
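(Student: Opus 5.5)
We argue by contraposition: assume \cref{asmp:dgp} and \cref{assumption:monotonicity} hold, and show that the displayed strict inequality cannot occur. That is, we will prove
\[
\mathbb{E}[Y\mid X\in\mathcal{X}_{s}, \Pi=\pi_+, Z=\pi_+(X)] \;\geq\; \mathbb{E}[Y\mid X\in\mathcal{X}_{s}, \Pi=\pi_-, Z=\pi_+(X)].
\]
As in the footnote to \cref{prop:falsification_correct}, we take the conditioning events to have positive probability.

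\textbf{Step 1: rewrite both conditional expectations in terms of $f_Y$.} By \cref{asmp:dgp}, $D$ (hence $\Pi$) is independent of $(X,Z,\epsilon_Y)$, since $\epsilon_D$, $\epsilon_{XZ}$ and $\epsilon_Y$ are mutually independent. Conditioning on $\Pi=\pi$ therefore leaves the joint law of $(X,Z,\epsilon_Y)$ unchanged. Under $\Pi=\pi$ we have $\hat Z=\pi(X)$ and $M=f_M(X,\pi)$, so
\[
\mathbb{E}[Y\mid X\in\mathcal{X}_s,\Pi=\pi,Z=\pi_+(X)]=\mathbb{E}\big[f_Y(X,Z,f_M(X,\pi),\pi(X),\epsilon_Y)\mid E\big],
\]
where $E=\{X\in\mathcal{X}_s, Z=\pi_+(X)\}$ is an event not involving $\Pi$. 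Because the conditioning law is the same for both arms, the two sides of the inequality are expectations of two functions of $(X,Z,\epsilon_Y)$ under one common distribution.

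\textbf{Step 2: pointwise comparison on $E$.} On $E$ we have $X\in\mathcal{X}_{agree}$, so $\pi_-(X)=\pi_+(X)=Z$; both models predict correctly. We also have $X\in\mathcal{X}_{better}$, so $f_M(X,\pi_+)>f_M(X,\pi_-)$. Apply the first line of \cref{assumption:monotonicity} (the correct-prediction case $\hat z=z$) with $m_i=f_M(X,\pi_-)$ and $m_j=f_M(X,\pi_+)$, read pointwise in the noise as in the footnote to \cref{assumptions:correct}. This gives, for every realization in $E$,
\[
f_Y(X,Z,f_M(X,\pi_+),Z,\epsilon_Y)\geq f_Y(X,Z,f_M(X,\pi_-),Z,\epsilon_Y).
\]

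\textbf{Step 3: integrate.} Taking conditional expectations given $E$ preserves the pointwise inequality, which yields the displayed non-strict inequality and contradicts the strict inequality in the statement.

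The main point requiring care is Step 1. We must justify that conditioning on $\Pi$ does not change the distribution of $(X,Z,\epsilon_Y)$ given $E$; this follows from randomization, i.e.\ independence of $\epsilon_D$. We must also pass from potential-outcome notation to the structural function $f_Y$, which holds because $\hat Z$ and $M$ are deterministic given $(X,\Pi)$. Measurability and integrability of $Y$ are assumed implicitly so that the expectations exist.
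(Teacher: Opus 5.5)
Your argument is correct and is essentially the paper's proof. The paper works conditionally on $X=x$: it uses the observational-equivalence lemma (\cref{lem:obs_equiv}) to rewrite each arm's conditional mean as a potential-outcome mean, applies the correct-prediction branch of \cref{assumption:monotonicity} with $m_i=f_M(x,\pi_-)<m_j=f_M(x,\pi_+)$, and then aggregates over $x$ using $\Pi\perp\!\!\!\perp(X,Z)$ to replace both arm-specific integrating measures by the common one $dP(x\mid X\in\mathcal{X}_s, Z=\pi_+(X))$; you merge these two stages into one by using $\Pi\perp\!\!\!\perp(X,Z,\epsilon_Y)$ to put both sides under the common law of $(X,Z,\epsilon_Y)$ given $E$, which has the small advantage of never conditioning on point events $X=x$.
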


Furthermore, we introduce the concept of a \enquote{neutral prediction}, and assume that performance of the model does not affect the outcome in this case. 
\begin{restatable}[Neutral prediction]{assumption}{NeutralAction}\label{assumption:neutral}
There exists $\hat{z}_0 \in \mathcal{Z}$ such that for all $m_i, m_j \in \cM$ and $z \in \mathcal{Z}$
\begin{equation*}
Y(\hat{Z}=\hat{z}_0, M=m_i, Z=z) = Y(\hat{Z}=\hat{z}_0, M=m_j, Z=z)
\end{equation*}
\end{restatable}
We note that this assumption is ``optional'', in the sense that the method still produces bounds without it, at the cost of those bounds being potentially looser. This assumption is largely the same as Assumption 3.2 of \citet{chen2025}, and plays the role of linking the ``control arm'' of a trial to evaluation of a newer model, and can be seen as a special case of \cref{assumption:monotonicity} in which the nonstrict inequality becomes an equality. In our running example, such a ``prediction'' could be choosing to defer to a radiologist, and the control arm could be viewed as an ``always defer'' policy. The intuition for this assumption is that when the model takes the neutral prediction, the physician behaves as if there were no model at all, and hence model performance does not impact actions. We also provide a falsification test for this assumption. 
\begin{restatable}[Falsification of \cref{assumption:neutral}]{proposition}{FalsificationAssumptionThreeTwo}\label{prop:neutral}
   Let $\pi_1, \pi_2 \in \Hclass$ be two trialed models. Letting $\mathcal{X}_{\hat{z}_0} := \{x\in\mathcal{X}|\pi_1(x)=\pi_2(x)=\hat{z}_0\}$, the observation that for some $z\in\mathcal{Z}$:
\begin{align*}
&\mathbb{E}\left[Y|X\in \mathcal{X}_{\hat{z}_0}, \Pi = \pi_1, Z=z \right] \\
& \ne  \mathbb{E}\left[Y|X\in \mathcal{X}_{\hat{z}_0}, \Pi = \pi_2, Z=z \right]
\end{align*}
falsifies \cref{assumption:neutral}.
\end{restatable}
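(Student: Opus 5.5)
We argue by contraposition: assuming \cref{assumption:neutral} holds, we show that the two conditional expectations in \cref{prop:neutral} coincide for every $z\in\mathcal{Z}$. Any observed inequality therefore falsifies the assumption. The argument rewrites each conditional mean as an expectation of $f_Y$ over the joint law of $(X,Z,\epsilon_Y)$. We then check that the integrands agree pointwise on the conditioning event.

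\textbf{Step 1 (reduction via randomization).} Under \cref{asmp:dgp}, $\Pi=f_\Pi(D)$ depends only on $\epsilon_D$. This makes $\Pi$ independent of $(X,Z,\epsilon_Y)$. For $j\in\{1,2\}$, conditioning on $\Pi=\pi_j$ fixes $\hat Z=\pi_j(X)$ and $M=f_M(X,\pi_j)$. Hence
\begin{align*}
&\mathbb{E}[Y\mid X\in\mathcal{X}_{\hat z_0},\Pi=\pi_j,Z=z] \\
&= \mathbb{E}\big[f_Y(X,z,f_M(X,\pi_j),\pi_j(X),\epsilon_Y)\,\big|\,X\in\mathcal{X}_{\hat z_0},Z=z\big].
\end{align*}
Conditioning on $\Pi=\pi_j$ has been dropped because $(X,Z,\epsilon_Y)\ci \Pi$. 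The conditioning event $\{X\in\mathcal{X}_{\hat z_0},Z=z\}$ and the conditional law of $(X,\epsilon_Y)$ given it are therefore the same for $j=1,2$. We also use that $\epsilon_Y$ is independent of $(X,Z)$. As in the footnote on falsification tests, we assume the conditioning sets have positive probability.

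\textbf{Step 2 (pointwise equality).} On $\mathcal{X}_{\hat z_0}$ we have $\pi_1(X)=\pi_2(X)=\hat z_0$. The integrands thus become $f_Y(x,z,m_1,\hat z_0,\epsilon_Y)$ and $f_Y(x,z,m_2,\hat z_0,\epsilon_Y)$, with $m_j=f_M(x,\pi_j)\in\cM$. By \cref{assumption:neutral}, interpreted pointwise in the noise as in the footnote to \cref{assumptions:correct}, these two values are equal for every $x,\epsilon_Y$. Integrating against the common conditional law from Step 1 shows the two expectations are equal, which gives the contrapositive.

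\textbf{Main obstacle.} The only delicate point is Step 1: justifying that conditioning on $\Pi=\pi_j$ leaves the joint law of $(X,Z,\epsilon_Y)$ unchanged. This follows from the mutual independence of noise terms in the SCM together with the fact that $\hat Z$ and $M$ are deterministic given $(X,\Pi)$. Once this is in place, the rest is a direct substitution.
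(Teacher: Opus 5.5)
Your proof is correct and uses the same ingredients as the paper's: randomization ($\Pi\ci(X,Z,\epsilon_Y)$) to drop the conditioning on $\Pi$; determinism of $\hat Z$ and $M$, together with $\pi_1(x)=\pi_2(x)=\hat z_0$ on $\mathcal{X}_{\hat z_0}$, to reduce to outcomes that differ only in $M$; and \cref{assumption:neutral} to equate them. The paper first shows the difference vanishes at each fixed $x$ via its observational-equivalence lemma (consistency plus conditional ignorability) and then integrates over $x$ against the common measure $dP(x\mid X\in\mathcal{X}_{\hat z_0},Z=z)$, whereas you substitute directly into $f_Y$ and drop $\Pi$ in one step on the event $\{X\in\mathcal{X}_{\hat z_0},Z=z\}$, which is slightly more direct and avoids conditioning on $X=x$.
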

Finally, similar to \citet{chen2025}, we require that $Y$ itself be bounded by known maximum and minimum values, which can be trivially falsified by observing values of $Y$ outside of those bounds.
\begin{restatable}[Bounded outcomes]{assumption}{BoundedOutcomes}\label{assumption:bounded}
    There exist $Y_{min}, Y_{max} \in \mathbb{R} \text{ such that } Y_{min} \leq Y \leq Y_{max}$
\end{restatable}

\section{Partial Identification Bounds}

In this section, we introduce our main result, which provides upper and lower bounds on the expected outcome under deployment of a new model (\cref{eq:our_goal}).  Before we can introduce this result,  we first introduce some notation for partitions of the model and covariate space in our RCT. These partitions will be used in the derivation of our bounds.

\begin{restatable}[Partitions]{definition}{PartitionsDef}\label{def:partitions}
	   We use $\pi_i$ as a model in the model space and $\pi_e$ as the untrialed model. 
       For each value of $x$, we define the set of trialed models that make the same prediction as our untrialed model, $\pi_e(x)$\footnote{As in \citet{chen2025}, these sets should be written with $\pi_e$ as an argument, but we omit this for brevity.  Note that $\Sagree$ stands for ``same'' and $\Scorrect$ stands for ``correct''.}. 
	\begin{align*}
		\Sagree(x) & \coloneqq \{\pi_i \in \Hclass \mid \pi_i(x) = \pi_e(x)\}                
	\end{align*}
   We further partition $\Sagree(x)$ into models whose predictive performances imply outcomes that are no better / no worse than those of $\pi_e$
   \begin{align*}
      		\Sle(x,z) & \coloneqq \{\pi_i \in \Sagree \mid f_M(x, \pi_i) \leq f_M(x,\pi_e), \pi_e(x)=z\} \\
            &\quad\cup\{\pi_i \in \Sagree \mid f_M(x, \pi_i) \geq f_M(x,\pi_e), \pi_e(x)\ne z\}\\
		\Sge(x,z) & \coloneqq \{\pi_i \in \Sagree \mid f_M(x, \pi_i) \geq f_M(x,\pi_e), \pi_e(x) = z\} \\
            &\quad\cup \{\pi_i \in \Sagree \mid f_M(x, \pi_i) \leq f_M(x,\pi_e), \pi_e(x)\ne z\}
   \end{align*}
	We further define subsets of $\Sle, \Sge$ containing only the models whose performance is closest to that of $\pi_e$.
	\begin{align*}
		\Sletight(x,z) & \coloneqq \argmin_{\pi_i \in \Sle(x,z)} |f_M(x, \pi_i)-f_M(x, \pi_e)|\\
      \Sgetight(x,z) & \coloneqq \argmin_{\pi_i \in \Sge(x,z)} |f_M(x, \pi_i)-f_M(x, \pi_e)|
	\end{align*}
	Finally, for some value of $x$ and an optimal prediction $z$, we define the set of all trialed models that give the correct prediction and its complement:
	\begin{equation*}\Scorrect(x, z) \coloneqq \{\pi_i \in \Hclass \mid \pi_i(x) = z\}\
   \end{equation*}
    \begin{equation*}
         \Sdisagree(x,z) \coloneqq \{\pi_i \in \Hclass \mid \pi_i(x) \ne z\}
    \end{equation*}
   We note that any of the defined sets can possibly be empty. 
\end{restatable}
\begin{example}[continues=example:diagnosis2]\label{example:diagnosis3}
Suppose in our radiology RCT, we have three models: $\pi_1, \pi_2, \pi_3$. The precision of these models are 0.1, 0.8, and 0.9, respectively, at a specific covariate value $x$. For a specific patient with covariates $x$, whose true diagnosis, $z$, is pneumonia, we want to evaluate an untrialed model, $\pi_e$, with precision 0.7. Suppose $\pi_2$, $\pi_3$ and $\pi_e$ all suggest the diagnosis of pneumonia, while $\pi_1$ suggests bronchitis for this particular patient.
Under this scenario, $\Sagree(x) = \{\pi_2, \pi_3\}$, and $\pi_e$ was optimal, and thus since both $\pi_2$ and $\pi_3$ have greater performance than $\pi_e$, $\Sge(x,z) = \{\pi_2, \pi_3\}$ and $\Sle(x,z) = \varnothing$. If $\pi_e$ were not optimal the sets would be reversed. Further, $\Sgetight(x,z) = \{\pi_2\}$, since its performance metric is closest to that of $\pi_e$.  Finally, $\Scorrect(x,z) = \{\pi_2, \pi_3\}, \Sdisagree(x,z) = \{\pi_1\}$. 

\end{example}
Next, under the notation established in \cref{def:partitions}, we introduce a \emph{branch selector} that assigns each $(x,z)$ to the single assumption used to bound the outcome.
\begin{restatable}[Branch selector]{definition}{IndicatorDef}\label{def:indicators}
For each $(x,z)$ we define lower- and upper-bound branch selectors
$\bsL,\bsU:\mathcal{X}\times\mathcal{Z}\to\{\lneu,\lmon,\lcor,\ldef\}$ (neutral, monotonic, correct, default),  evaluated by first match:
    \begin{align*}
   \bsL(x,z) &=
     \begin{cases}
       \lneu & \pi_e(x)= \hat{z}_0,\ \Sagree(x)\ne\varnothing\\
       \lmon & \pi_e(x)\ne \hat{z}_0,\ \Sle(x,z)\ne\varnothing\\
       \lcor & \pi_e(x) = z,\ \Sle(x,z)=\varnothing,\ \Sdisagree(x,z)\ne\varnothing\\
       \ldef & \text{otherwise,}
     \end{cases}\\[4pt]
   \bsU(x,z) &=
     \begin{cases}
       \lneu & \pi_e(x)= \hat{z}_0,\ \Sagree(x)\ne\varnothing\\
       \lmon & \pi_e(x)\ne \hat{z}_0,\ \Sge(x,z)\ne\varnothing\\
       \lcor & \pi_e(x) \ne z,\ \Sge(x,z)=\varnothing,\ \Scorrect(x,z)\ne\varnothing\\
       \ldef & \text{otherwise.}
     \end{cases}
\end{align*}
Because each selector is a function evaluated by first match, it returns exactly one
value, so the induced indicators $\mathbf{1}\{\bsL=k\}, \mathbf{1}\{\bsU=k\}$ partition the space of $\mathcal{X} \times \mathcal{Z}$. 
\end{restatable}
With this selector, we can now produce our main theorem, which provides bounds on the expected outcome of deploying an untrialed model in our novel RCT setting.
\begin{restatable}[Bounds]{theorem}{BoundsTheorem}\label{def:bounds}
    Under~\cref{asmp:dgp,assumptions:correct,assumption:monotonicity,assumption:neutral,assumption:bounded},
   we have the following bounds on the impact of our untrialed model $\pi_e$:
   \begin{equation*}L(\pi_e) \leq \mathbb{E}[Y(\hat{Z}=\pi_e(X), M=f_M( X, \pi_e))] \leq U(\pi_e)\end{equation*}
   Where: 
   \begin{align}
	L(\pi_e) 
    &= \mathbb{E}[\bLneu \mathbb{E}[Y|X,\Pi \in \Sagree(X),Z]\label{eq:lower_bound_neutral}\\
	         & \quad + \bLmon\mathbb{E}[Y|X,\Pi \in \Sletight(X,Z),Z] \nonumber\\
	         & \quad + \bLcor\max_{i\in \Sdisagree(X,Z)}\mathbb{E}[Y|X,\Pi = i,Z] \nonumber\\
	         & \quad + \bLdef Y_{min} \label{eq:lower_bound_default}
	].
\end{align}
\begin{align}
	U(\pi_e) 
    &= \mathbb{E}[\bUneu \mathbb{E}[Y|X,\Pi \in \Sagree(X), Z]\nonumber\\
	         & \quad + \bUmon\mathbb{E}[Y|X,\Pi \in \Sgetight(X,Z), Z] \nonumber\\
	         & \quad + \bUcor\min_{i\in \Scorrect(X,Z)}\mathbb{E}[Y|X,\Pi = i, Z] \nonumber\\
	         & \quad + \bUdef
             Y_{max} \label{eq:upper_bound_default}
	].
\end{align}
\end{restatable}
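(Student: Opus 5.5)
The plan is to prove the bounds pointwise in $(x,z)$ and then integrate. By \cref{asmp:dgp}, $(X,Z)$ is independent of $\epsilon_Y$ and of $D$ (hence of $\Pi$). Therefore
\[
\mathbb{E}[Y(\pi_e)] = \mathbb{E}\bigl[\mu_e(X,Z)\bigr], \qquad \mu_e(x,z) \coloneqq \mathbb{E}\bigl[f_Y(x,z,f_M(x,\pi_e),\pi_e(x),\epsilon_Y)\bigr].
\]
In the same way, for any trialed model $\pi_i\in\Hclass$ with positive probability,
\[
\mathbb{E}[Y\mid X=x,\Pi=\pi_i,Z=z] = \mathbb{E}\bigl[f_Y(x,z,f_M(x,\pi_i),\pi_i(x),\epsilon_Y)\bigr] \eqqcolon \mu_i(x,z).
\]
This holds because $\Pi$ is a function of $D$, which is independent of $(X,Z,\epsilon_Y)$, so conditioning does not distort the law of $\epsilon_Y$. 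For a set of models $\mathcal{S}$, $\mathbb{E}[Y\mid X=x,\Pi\in\mathcal{S},Z=z]$ is then a convex combination of the $\mu_i(x,z)$, $i\in\mathcal{S}$, with weights $P(\Pi=\pi_i\mid \Pi\in\mathcal{S})$. So it suffices to show that, for each $(x,z)$, $\mu_e(x,z)$ lies above the lower-branch term chosen by $\bsL(x,z)$ and below the upper-branch term chosen by $\bsU(x,z)$. Because the indicators partition $\mathcal{X}\times\mathcal{Z}$ (\cref{def:indicators}), taking expectations over $(X,Z)$ then gives $L(\pi_e)\le \mathbb{E}[Y(\pi_e)]\le U(\pi_e)$.

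Next I will check the four branches for the lower bound; the upper bound is symmetric.

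\textbf{Case $\lneu$.} Every $\pi_i\in\Sagree(x)$ has $\pi_i(x)=\pi_e(x)=\hat z_0$. \Cref{assumption:neutral} gives pointwise equality in $\epsilon_Y$ of $f_Y(x,z,m_i,\hat z_0,\cdot)$ and $f_Y(x,z,m_e,\hat z_0,\cdot)$, so $\mu_i=\mu_e$ and any convex combination equals $\mu_e$. This gives equality, which serves as both bounds.

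\textbf{Case $\lmon$.} Take $\pi_i\in\Sle(x,z)$, so $\pi_i(x)=\pi_e(x)$.
\begin{itemize}
\item If $\pi_e(x)=z$, then $f_M(x,\pi_i)\le f_M(x,\pi_e)$, and the first inequality of \cref{assumption:monotonicity} gives $f_Y(\cdot,m_i,z,\cdot)\le f_Y(\cdot,m_e,z,\cdot)$ pointwise.
\item If $\pi_e(x)\neq z$, then $f_M(x,\pi_i)\ge f_M(x,\pi_e)$, and the second inequality gives the same direction.
\item Ties $m_i=m_e$ give equality trivially.
\end{itemize}
So $\mu_i\le\mu_e$ for all $\pi_i\in\Sle$, and in particular for the convex combination over $\Sletight\subseteq\Sle$, which is nonempty whenever $\Sle$ is.

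\textbf{Case $\lcor$.} Here $\pi_e(x)=z$, and any $\pi_i\in\Sdisagree(x,z)$ has $\pi_i(x)\ne z$. \Cref{assumptions:correct} gives $f_Y(x,z,m_e,z,\epsilon)\ge f_Y(x,z,m_i,\pi_i(x),\epsilon)$. Hence $\mu_e\ge\mu_i$ for every such $i$, and therefore $\mu_e\ge\max_i\mu_i$.

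\textbf{Case $\ldef$.} This is immediate from \cref{assumption:bounded}.

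The upper bound follows by the same arguments with the inequalities reversed. The $\lcor$ branch there uses $\pi_e(x)\ne z$ together with $\pi_i\in\Scorrect(x,z)$ and \cref{assumptions:correct} applied with the roles swapped.

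The main subtlety I expect is the step identifying conditional expectations given $\Pi\in\mathcal{S}(X,Z)$, where the set depends on $(X,Z)$. This needs care about positivity: each $\pi_i\in\Hclass$ must have $P(\Pi=\pi_i)>0$, so that conditional means are well defined on the relevant sets. It also needs the independence $D\ci(X,Z,\epsilon_Y)$, so that conditioning on the event $\{\Pi\in\mathcal{S}(x,z)\}$ at fixed $(x,z)$ leaves $\epsilon_Y$'s distribution unchanged. With that established, the per-branch comparisons are purely pointwise in $\epsilon_Y$ and integrate directly.
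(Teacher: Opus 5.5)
Your proposal is correct and follows essentially the same route as the paper. Both arguments write $\mathbb{E}[Y(\pi_e)]$ as an average over $(X,Z)$ of the per-$(x,z)$ mean outcome under $\pi_e$, split it using the partition induced by $\bsL$ and $\bsU$, and bound each branch pointwise with the neutral, monotonicity, counterfactual-correctness and boundedness assumptions respectively. The difference is only in presentation: you identify $\mathbb{E}[Y\mid X=x,\Pi=\pi_i,Z=z]$ directly from the structural equations and the independence of $\epsilon_Y$ from $(D,X,Z)$, and you treat set-conditioned means as convex combinations of the $\mu_i$. The paper reaches the same identification through its consistency and conditional-ignorability corollaries, with a longer chain of equalities in the neutral branch.
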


Note that each value of $X, Z$ is covered by exactly one ``branch'', and the bound is constructed via pointwise upper and lower bounds on the resulting outcome under $\pi_e$. We illustrate the intuition for the lower bound here, with symmetric logic for the upper bound: 
\begin{enumerate*}[label=(\roman*)]
\item When $\bsL=\lneu$, our untrialed model takes the neutral prediction, and there are models in the RCT that agree. By \cref{assumption:neutral}, the downstream outcome will be equal regardless of performance across these neutral-prediction-taking models.
\item When $\bsL=\lmon$, our untrialed model does not take the neutral prediction, and there are models in the RCT that agree with it and have worse or equal performance. By \cref{assumption:monotonicity}, the downstream outcome of our untrialed model will be no worse than the best performing model in this set.
\item When $\bsL=\lcor$, our untrialed model takes the optimal prediction, and there are no models in the RCT that agree with it and have worse or equal performance. We also know that $\bsL\ne\lneu$ and  $\Sdisagree(x,z)$, or the set of incorrect models, is non-empty. By \cref{assumptions:correct}, the downstream outcome of our untrialed model will be no worse than any of these incorrect models, since they must have taken a suboptimal prediction.
\item Finally, when $\bsL=\ldef$, none of the prior cases apply, leaving us with the maximally conservative choice of $Y_{min}$.
\end{enumerate*}
We provide a flowchart for the lower bound in \cref{fig:lower-bound-flowchart}, and note that the logic is similar (with slightly different relevant sets) for the upper bound.

\paragraph{Tightness of Bounds} 
\citet{chen2025} construct similar bounds under a different set of assumptions, and show that they are tight, in the sense that there exists a pair of SCMs that are allowable under their assumptions, and for which the upper and lower bounds match the causal effect.  We do not claim that our bounds are tight, and provide intuition here as to why constructing tight bounds is not feasible under our counterfactual correctness assumption.  Focusing on $L(\pi_e)$, \cref{assumptions:correct} is stated pointwise on potential outcomes, while our bounds instead maximize over the conditional expectations. To illustrate, consider an RCT in which the untrialed model takes the optimal prediction. The data contains $\pi_1(x), \pi_2(x)$ that take incorrect predictions, while $\pi_e(x)$ takes the optimal prediction. Both trialed models have the same conditional expectation $E[Y|X, \Pi, Z] = 0.75$, so our lower-bound for $\pi_e(x)$ would be $0.75$.  A (likely tighter) lower bound would instead be $E[\max_{i=1,2} Y(\pi_i) \mid X, Z]$, but implementing such a bound is not feasible given that we only observe outcomes for a single trialed model for any individual.

\section{Estimation}

The bounds in~\cref{def:bounds} have several values that must be estimated. For example, \cref{eq:lower_bound_neutral} requires $\mathbb{E}[Y \mid X, \Pi \in \Sagree(X), Z]$ for all $x$ such that $\bLneu=1$. If $\Sagree(X)$ contains more than one model, it is impossible to have data on $Y$ under both. 
We address this with consistent and asymptotically normal estimators, and asymptotically valid confidence intervals for the lower and upper bounds.
Constructing these estimators requires addressing three challenges that do not arise in \citet{chen2025}:
\begin{enumerate*}[label=(\roman*)]
   \item estimating $\mathbb{E}[Y\mid X,Z,\pi]$ via nuisance models, 
   \item handling non-smooth max/min operators, and 
   \item estimating model performance from finite data rather than assuming oracle knowledge.  
\end{enumerate*}
We address these challenges with an augmented inverse propensity weighted estimator for the bounds, and we prove its consistency and asymptotic normality.

\begin{definition}[Nuisance Functions]\label{def:est_defs}
   We define the following nuisance functions and estimators:
   \begin{align}
   \mu_j(x, z) &:= \mathbb{E}[Y|x, \Pi = \pi_j, z] \label{def:mu}\\
\hat{d}_L(x,z)&\;:=\;\arg\max_{j\in \Sdisagree(x,z)} \widehat{\mu}_j(x,z)\nonumber\\
\hat{d}_U(x,z)&\;:=\;\arg\min_{j\in \Scorrect(x,z)} \widehat{\mu}_j(x,z)\label{eq:d_hat}\\
\widehat{\lambda}_{j}(Y, \Pi, x,z)&\;:=\;\frac{\mathbf{1}\{\Pi=j\}}{P(\Pi= j\mid X=x)}\bigl(Y- \widehat{\mu}_j(x,z)\bigr)\label{eq:lambda_hat}\\
	\widehat	\Sletight(x,z) & \coloneqq \argmin_{\pi \in \Slehat(x,z)} |\hat{f}_M(x, \pi)-\hat{f}_M(x, \pi_e)|\nonumber\\
      \Sgetighthat(x,z) & \coloneqq \argmin_{\pi \in \Sgehat(x,z)} |\hat{f}_M(x, \pi)-\hat{f}_M(x, \pi_e)|\label{eq:S_tilde_leq_hat}\\
{\varphi}_L\bigl(x,z; \hat{d}_L\bigr)&\;:=\; \widehat{\mu}_{\hat{d}_L(x,z)}(x,z) + \widehat{\lambda}_{\hat{d}_L(x,z)}(Y, \Pi, x,z)\nonumber\\
{\varphi}_U\bigl(x,z; \hat{d}_U\bigr)&\;:=\; \widehat{\mu}_{\hat{d}_U(x,z)}(x,z) + \widehat{\lambda}_{\hat{d}_U(x,z)}(Y, \Pi, x,z)\label{eq:phi_hat},
\end{align}
Where $\Slehat(x,z),\Sgehat(x,z)$ are defined as in \cref{def:partitions}, with $\hat{f}$ in place of $f$, and $\widehat{\mu}(x,z)$ is the plug-in estimator of ${\mu}$.
\end{definition}
For our estimators to be consistent and asymptotically normal, we require the following assumptions:

\begin{assumption}[Known Propensities]\label{assump:known_prop}
The value of $P(\Pi\in\cS\mid X)$ is known for all $\cS\subseteq\Hclass$, and is strictly positive for all $\cS, X$.
\end{assumption}
Given our RCT setting, this assumption will be satisfied by design. A similar assumption is implicit in \citet{chen2025}, but we make it explicit here.  We require additional conditions to handle non-smooth max/min operators, which we describe next.
\begin{assumption}[Margin Conditions]\label{assump:margin}
Under \cref{def:est_defs}, there exists $\alpha > 0$ such that for any $t\geq 0$, we have
\begin{equation*}
P\left[\min_{j \ne d(X,Z)}\left|\mu_{d(X,Z)}(X,Z)-\mu_{j}(X,Z)\right|\leq t\right] \leq t^{\alpha}
\end{equation*}
for both $d_L(x,z), d_U(x,z)$, as defined in \cref{eq:d_hat}, and $\mu$ defined in \cref{def:mu}. Furthermore, there exists $\beta > 0$ such that for any $t\geq 0$, we have
\begin{equation*}
P\left[\min_{i\ne j}\left|f_M(X, \pi_j)-f_M(X,\pi_i)\right|\leq t\right]\leq t^\beta
\end{equation*}
where $f_M$ is defined in \cref{thmt@@DataGeneratingProcess}, and $\pi_i,\pi_j\in\Hclass \cup\{\pi_e\}$.
\end{assumption}
These assumptions state that there is sufficient separation between the downstream outcomes of the best two models, the worst two models, and between the performance of any two models, untrialed or trialed. 
\begin{assumption}[Convergence of Nuisance Functions]\label{asmp:convergence_nuisance}
  The nuisance functions $\widehat{\mu}_j(X,Z), \widehat{f}_M(x; \pi_i)$, defined in \cref{def:est_defs} converge at rates $o_p(n^{-\frac{1}{2(1+\alpha)}})$ and $o_p(n^{-\frac{1}{2\beta}})$, respectively,  for all $\pi_i\in \Hclass, x \in \mathcal{X}$, where $\alpha$ and $\beta$ are the same constants as in the Margin Conditions (\cref{assump:margin}). That is:
   \begin{equation*}
      \|\widehat{\mu}_j(X,Z) - \mu_j(X,Z)\| = o_p(n^{-\frac{1}{2(1+\alpha)}})
   \end{equation*}
   \begin{equation*}
      \|\widehat{f}_M(x; \pi_i) - f_M(x, \pi_i)\| = o_p(n^{-\frac{1}{2\beta}})
   \end{equation*}
   where $\pi_i\in\Hclass\cup \{\pi_e\}$
\end{assumption}
We note that $o_p(n^{-1/4})$ convergence rates have been demonstrated for a variety of modern machine learning methods under e.g., sparsity constraints \citep{chernozhukov2018double}, making~\cref{asmp:convergence_nuisance} plausible for moderate values of $\alpha, \beta$ (e.g., $\alpha = 1, \beta = 2$).  Finally, we can define our estimator of the upper and lower bounds as the average of the pseudo-outcomes defined below.

\begin{definition}[Pseudo-Outcomes]\label{def:estimator} 
   We first define
\begin{align*}
   &\hat{h}(X,Y,Z,\cS) := \\
   &\qquad  \widehat{\mu}_{{\cS}}(X,Z)
        +
        \frac{\mathbf{1}\{\Pi\in {\cS}\}}{P(\Pi\in {\cS}\mid X)}
        \bigl(Y-\widehat{\mu}_{{\cS}}(X,Z)\bigr)
\end{align*}
where $\cS\subseteq\Hclass$ is an arbitrary set of models.  Then, we define pseudo-outcomes for the lower and upper bounds as 
   \begin{align*}
      & \widehat{\psi}_L(Y, X, \Pi, Z; \widehat{f}_M) = \\
      & \bLneuhat\,\hat{h}(X,Y,Z,\Sagree) + \bLmonhat\, \hat{h}(X,Y,Z,\Sletighthat) \\
      & + \bLcorhat\, {\varphi_L}(X,Z;\hat d_L) + \bLdefhat\,  Y_{\min}\\
      &\widehat{\psi}_U(Y, X, \Pi, Z; \widehat{f}_M) = \\
      & \bUneuhat\,  \hat{h}(X,Y,Z,\Sagree) + \bUmonhat\, \hat{h}(X,Y,Z,\Sgetighthat) \\
      &+ \bUcorhat\,{\varphi_U}(X,Z;\hat d_U) + \bUdefhat\, Y_{\max}
   \end{align*}
   where the argument after the semicolon denotes the performance function used to construct the partitions and branch selectors: $\bsLhat, \bsUhat$ and the estimated sets are computed from the estimated performance $\hat{f}_M$ in place of $f_M$. 
\end{definition}
\begin{restatable}[Consistency and asymptotic normality of $\widehat{L}(\pi_e), \widehat{U}(\pi_e)$]{theorem}{MainResult}\label{theorem:estimator}
If~\cref{assump:known_prop,assump:margin,asmp:convergence_nuisance} hold for some $\alpha, \beta > 0$, then
\begin{equation*}
\widehat{L}(\pi_e) = n^{-1}\sum_{i=1}^n \widehat{\psi}_L(Y_i, X_i, \Pi_i, Z_i; \widehat{f}_M) \xrightarrow{P} L(\pi_e)
\end{equation*}
\begin{equation*}
\widehat{U}(\pi_e) = n^{-1}\sum_{i=1}^n \widehat{\psi}_U(Y_i, X_i, \Pi_i, Z_i; \widehat{f}_M) \xrightarrow{P} U(\pi_e)
\end{equation*}
Where $\xrightarrow{P}$ denotes convergence in probability. Furthermore,
\begin{equation*}
\sqrt{n}(\widehat{L}(\pi_e)-{L}(\pi_e)) \xrightarrow{d} N(0, \sigma_L^2)\end{equation*}
\begin{equation*}
    \sqrt{n}(\widehat{U}(\pi_e)-{U}(\pi_e)) \xrightarrow{d} N(0, \sigma_U^2)
\end{equation*}
Where $\xrightarrow{d}$ denotes convergence in distribution, and $\sigma_L^2 = \Var({\psi}_L), \sigma_U^2 = \Var({\psi}_U)$, with ${\psi}_L, {\psi}_U$ denoting the pseudo-outcomes of \cref{def:estimator} evaluated with the true nuisance functions $\mu_j, f_M$ in place of their estimates.
\end{restatable}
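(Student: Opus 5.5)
The proof follows the standard template for AIPW estimators with estimated nuisances and non-smooth selection: split $\widehat{L}(\pi_e)-L(\pi_e)$ into a centered empirical-process term and a bias term, and show the bias is $o_p(n^{-1/2})$. We treat the lower bound; the upper bound is symmetric, replacing $\Sletight,\Sdisagree,\max$ by $\Sgetight,\Scorrect,\min$. We assume cross-fitting (sample splitting): nuisances $\widehat{\mu}_j,\widehat{f}_M$ are fit on one fold and averaged over the other, so they can be treated as fixed when conditioning on the training fold. Let $\psi_L$ be the oracle pseudo-outcome, built from $\mu_j,f_M$, the true selectors $\bsL$, the true sets, and $d_L$. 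We write
\begin{equation*}
\widehat{L}-L = (\mathbb{P}_n-P)\psi_L + (\mathbb{P}_n-P)(\widehat{\psi}_L-\psi_L) + P(\widehat{\psi}_L-\psi_L).
\end{equation*}
The first term gives $N(0,\sigma_L^2)$ by the CLT. Boundedness of $\psi_L$ follows from \cref{assumption:bounded} and the positivity in \cref{assump:known_prop}. We also check $P\psi_L=L$: on each branch, the inverse-weighting residual has conditional mean zero given $(X,Z)$. This uses known propensities and the fact that $\Pi$ is independent of $(X,Z)$ given $X$, since $D$ is randomized. The second term is $o_p(n^{-1/2})$ under cross-fitting by Chebyshev's inequality once $\|\widehat{\psi}_L-\psi_L\|\to 0$. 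That $L^2$ convergence follows from $L^2$ consistency of $\widehat{\mu}$ plus the fact that selector mismatches happen on events of vanishing probability, as shown below.

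The bias term is the key step. We decompose it on the event $E_n=\{\bsLhat=\bsL,\ \Sletighthat=\Sletight,\ \hat d_L=d_L\}$ and its complement.

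On $E_n$, each branch is an AIPW term with the correct set and known propensity. Its conditional expectation given $(X,Z)$ equals $\mu_{\cS}(X,Z)$ exactly, because $\widehat{\mu}$ cancels when the propensity is known. So the bias on $E_n$ is exactly zero; the branches $\lneu$ and $\ldef$ are trivial.

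On $E_n^c$, the only sources of mismatch are:
\begin{enumerate*}[label=(\alph*)]
\item the ordering of $\hat f_M$ differs from that of $f_M$ among the models in $\Hclass\cup\{\pi_e\}$, which changes $\Sle,\Sge$, their tight subsets, and hence the branch selector;
\item $\hat d_L\ne d_L$.
\end{enumerate*}

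For (a), a reordering at $X$ requires $\min_{i\neq j}|f_M(X,\pi_i)-f_M(X,\pi_j)|\le 2\max_i|\hat f_M-f_M|$. By the $\beta$-margin in \cref{assump:margin} and a peeling/Markov argument, the probability of this event is $O_p(\|\hat f_M-f_M\|^{\beta})=o_p(n^{-1/2})$ by \cref{asmp:convergence_nuisance}. The integrand is bounded, so its contribution is $o_p(n^{-1/2})$. Since $\mathcal{Z},\Hclass$ are finite, this reduces to finitely many pairs.

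For (b), we use the standard argument for optimal-treatment-regime value estimation (Luedtke--van der Laan, Kennedy). The error equals $P[\mu_{\hat d}-\mu_{d}]$ on the $\lcor$ region, since the AIPW correction again cancels with known propensities. This quantity is nonzero only where $|\mu_d-\mu_j|\le 2\|\widehat{\mu}-\mu\|_\infty$ locally, and there it is bounded by that gap. The $\alpha$-margin then gives a bound of $O_p(\|\widehat{\mu}-\mu\|^{1+\alpha})$. This is $o_p(n^{-1/2})$ exactly when $\|\widehat{\mu}-\mu\|=o_p(n^{-1/(2(1+\alpha))})$, which is the rate assumed.

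The main obstacle is converting sup-type mismatch events into the $L^2$ rates actually assumed. The margin argument is cleanest with sup-norm control. With only $L^2$ rates, we would split on $\{|\widehat{\mu}-\mu|>t\}$, use Markov's inequality, and optimize over $t$ as in Kennedy's margin lemma. This is also where the interaction between $\hat f_M$ errors changing the branch and $\widehat{\mu}$ errors in the $\max$ must be handled, by a union bound over the two mismatch events. Combining the three terms and applying Slutsky's lemma gives both consistency and $\sqrt n(\widehat{L}-L)\xrightarrow{d}N(0,\operatorname{Var}(\psi_L))$.
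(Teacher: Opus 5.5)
Your architecture matches the paper's in its key ingredients. Once the selected set is correct, the AIPW bias vanishes exactly because the propensities are known and $\Pi\perp Z\mid X$. Reorderings caused by $\widehat f_M$ are controlled by the $\beta$-margin together with the bounded loss $B=Y_{\max}-Y_{\min}$. The argmax in the $\lcor$ branch is controlled by the $\alpha$-margin.

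The paper packages this differently, as $\widehat L-L=R_1+R_2$:
\begin{itemize}
\item $R_1$ swaps $\widehat f_M$ for $f_M$ inside the empirical mean. It is bounded by $B\,\mathbb{E}_n[\mathbf{1}\{g(X)\le\delta_n\}]$, then by Markov and the $\beta$-margin.
\item $R_2$ is handled by verifying the conditions of Byun et al.'s Lemmas 10--11 branch by branch.
\end{itemize}
Your oracle-CLT / empirical-process / bias decomposition on the good event $E_n$ is a self-contained version of the same argument. It makes the cross-fitting step explicit rather than delegating the selection error to a black-box lemma.

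The step that fails is the rate conversion in (a), (b) and your final paragraph. If $\|\cdot\|$ is the $L^2(P)$ norm, as the paper's notation specifies, neither peeling/Markov nor Kennedy's margin lemma gives $O_p(\|\widehat f_M-f_M\|^{\beta})$ or $O_p(\|\widehat\mu-\mu\|^{1+\alpha})$.
\begin{itemize}
\item For (a), splitting on $\{|\widehat f_M-f_M|>t\}$ gives $t^{\beta}+\|\widehat f_M-f_M\|^2/t^2$. Its optimum is of order $\|\widehat f_M-f_M\|^{2\beta/(2+\beta)}$.
\item For (b), the analogous bound $t^{1+\alpha}+\|\widehat\mu-\mu\|^2/t$ optimizes to $\|\widehat\mu-\mu\|^{2(1+\alpha)/(2+\alpha)}$.
\end{itemize}
These exponents cannot be improved from an $L^2$ rate and the margin condition alone: take the nuisance error comparable to the gap on $\{g\le s\}$ and zero elsewhere. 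At the assumed rates $o_p(n^{-1/(2\beta)})$ and $o_p(n^{-1/(2(1+\alpha))})$, they yield only $o_p(n^{-1/(2+\beta)})$ and $o_p(n^{-1/(2+\alpha)})$. Neither is $o_p(n^{-1/2})$ for any $\alpha,\beta>0$, so your fallback does not close the proof.

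What the argument needs, and what the paper's proof actually uses, is uniform control. The paper sets $\delta_n=2\max_i\|f_M-\widehat f_M\|_\infty$ and states Byun et al.'s condition 6 in $\|\cdot\|_\infty$. In effect it reads the ``for all $x\in\mathcal{X}$'' in \cref{asmp:convergence_nuisance} as a sup-norm rate. Under that reading, conditioning on the training fold:
\begin{itemize}
\item $P(g(X)\le\delta_n)\le\delta_n^{\beta}$;
\item the argmax bias is at most $2\|\widehat\mu-\mu\|_\infty(2\|\widehat\mu-\mu\|_\infty)^{\alpha}$;
\end{itemize}
both directly, with no peeling. If you want to keep $L^2$ rates, you must strengthen them to $o_p(n^{-(2+\beta)/(4\beta)})$ and $o_p(n^{-(2+\alpha)/(4(1+\alpha))})$.
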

Since our estimator is asymptotically normal, this provides us the following corollary: 
\begin{restatable}[Confidence Intervals for $L(\pi_e), U(\pi_e)$]{corollary}{ConfidenceIntervals}\label{corollary:ci}
Under the conditions of \cref{theorem:estimator}, we can construct the following confidence intervals for $L(\pi_e), U(\pi_e)$:
\begin{align*}
   &\widehat{L}(\pi_e) \pm z_{1-\gamma/2}\sqrt{\widehat{Var}(\widehat{\psi}_L)/n}\\
   &\widehat{U}(\pi_e) \pm z_{1-\gamma/2}\sqrt{\widehat{Var}(\widehat{\psi}_U)/n}
\end{align*}
Where $z_{1-\gamma/2}$ is the $1-\gamma/2$ quantile of the standard normal distribution, and $\widehat{Var}(\widehat{\psi}_L), \widehat{Var}(\widehat{\psi}_U)$ are the empirical variance of $\widehat{\psi}_L, \widehat{\psi}_U$, respectively.
\end{restatable}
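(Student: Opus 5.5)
The corollary follows from \cref{theorem:estimator} together with a consistent variance estimator and Slutsky's lemma. I will argue for $L(\pi_e)$; the argument for $U(\pi_e)$ is identical with $\psi_U$ in place of $\psi_L$.

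First, \cref{theorem:estimator} gives $\sqrt{n}(\widehat{L}(\pi_e)-L(\pi_e)) \xrightarrow{d} N(0,\sigma_L^2)$ with $\sigma_L^2=\Var(\psi_L)$. Assume $\sigma_L^2>0$; if it is zero, the interval is degenerate but still conservative. It then suffices to show $\widehat{Var}(\widehat{\psi}_L)\xrightarrow{P}\sigma_L^2$. Slutsky's lemma then gives
\begin{equation*}
\frac{\sqrt{n}\bigl(\widehat{L}(\pi_e)-L(\pi_e)\bigr)}{\sqrt{\widehat{Var}(\widehat{\psi}_L)}}\xrightarrow{d}N(0,1),
\end{equation*}
so $P\bigl(L(\pi_e)\in \widehat{L}(\pi_e)\pm z_{1-\gamma/2}\sqrt{\widehat{Var}(\widehat{\psi}_L)/n}\bigr)\to 1-\gamma$.

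For variance consistency, write
\begin{equation*}
\widehat{Var}(\widehat{\psi}_L)=n^{-1}\sum_i\widehat{\psi}_{L,i}^2-\widehat{L}(\pi_e)^2 .
\end{equation*}
The second term converges to $L(\pi_e)^2$ by consistency of $\widehat{L}(\pi_e)$. For the first term, compare it with the oracle version $n^{-1}\sum_i\psi_{L,i}^2$. The oracle version converges to $\mathbb{E}[\psi_L^2]$ by the weak law of large numbers. Finite variance holds because $Y$ is bounded (\cref{assumption:bounded}), propensities are bounded away from zero (\cref{assump:known_prop}), and $\mu_j$ is bounded. Using $a^2-b^2=(a-b)(a+b)$ and Cauchy--Schwarz, the gap is bounded by
\begin{equation*}
\Bigl(n^{-1}\sum_i(\widehat{\psi}_{L,i}-\psi_{L,i})^2\Bigr)^{1/2}\Bigl(n^{-1}\sum_i(\widehat{\psi}_{L,i}+\psi_{L,i})^2\Bigr)^{1/2}.
\end{equation*}
The second factor is $O_p(1)$ provided the estimated $\widehat{\mu}_j$ are bounded, for instance by truncating them to $[Y_{\min},Y_{\max}]$.

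The main obstacle is showing that $n^{-1}\sum_i(\widehat{\psi}_{L,i}-\psi_{L,i})^2=o_p(1)$. I would treat this term the way \cref{theorem:estimator} treats the corresponding remainder. Use sample splitting or cross-fitting, so that conditional on the training fold the nuisance estimates are fixed; conditional Markov then reduces the empirical average to the population quantity $\|\widehat{\psi}_L-\psi_L\|^2$. Split that difference into two parts.
\begin{enumerate*}[label=(\roman*)]
\item \emph{Fixed branches and fixed argmax/argmin indices.} On the event where the estimated branches and indices agree with the true ones, the difference is a bounded multiple of $|\widehat{\mu}_j-\mu_j|$, which is $o_p(1)$ in $L^2$ by \cref{asmp:convergence_nuisance}.
\item \emph{Misselected branches or indices.} Off that event, the difference is bounded by a constant times the indicator of a selection mismatch. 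A mismatch of $\hat{d}_L$ or $\hat{d}_U$ requires the gap $\min_{j \ne d(X,Z)}|\mu_{d(X,Z)}-\mu_j|$ to be at most $2\max_j|\widehat{\mu}_j-\mu_j|$. A mismatch of the sets $\Sletighthat$, $\Sgetighthat$ or of the branch selectors requires $\min_{i\ne j}|f_M(X,\pi_i)-f_M(X,\pi_j)|$ to be at most $2\max_i|\widehat{f}_M-f_M|$. By the margin conditions (\cref{assump:margin}) and the nuisance rates, the probability of either event tends to zero. Split on whether the nuisance error exceeds a threshold $t_n\to0$, bound one piece by $t_n^{\alpha}$ or $t_n^{\beta}$ and the other by Markov's inequality.
\end{enumerate*}
Combining (i) and (ii) gives $\widehat{Var}(\widehat{\psi}_L)\xrightarrow{P}\sigma_L^2$, which by the Slutsky step above completes the proof for both bounds.
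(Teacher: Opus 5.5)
Your proof is correct and has the same skeleton as the paper's: show the variance estimator is consistent, then apply Slutsky. The difference is in how you organize the variance step.

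The paper splits $\widehat{\Var}(\widehat{\psi}_L)-\Var(\psi_L)$ into two terms:
\begin{enumerate*}[label=(\Alph*)]
\item the empirical variance of $\widehat{\psi}_L$ minus its population variance with the training fold held fixed, handled by a law of large numbers for bounded conditionally i.i.d.\ data;
\item $\Var(\widehat{\psi}_L)-\Var(\psi_L)$, handled by citing $\widehat{\psi}_L\xrightarrow{P}\psi_L$ from the proof of \cref{theorem:estimator} and applying bounded convergence.
\end{enumerate*}

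You instead proceed in four steps. You compare empirical second moments with their oracle versions. You bound the gap by Cauchy--Schwarz through $n^{-1}\sum_i(\widehat{\psi}_{L,i}-\psi_{L,i})^2$. You pass to $\|\widehat{\psi}_L-\psi_L\|^2$ by conditional Markov under cross-fitting. Finally, you show this $L^2$ distance vanishes by separating the event where the estimated branches, tight sets and argmax/argmin indices match the true ones from the mismatch event, which \cref{assump:margin} controls.

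The paper's route is shorter, but step (B) relies on a fact the theorem's proof only establishes for averaged remainders ($R_1$, $R_2$), not for the pseudo-outcome itself. Your argument supplies that piece explicitly. Your threshold device bounds the mismatch probability by $t_n^{\alpha}$ or $t_n^{\beta}$ plus a Markov term in the nuisance error. It therefore works directly with the $L^2(P)$ errors of \cref{asmp:convergence_nuisance}, rather than a sup-norm quantity like the paper's $\delta_n$. It also shows that variance consistency needs only $L^2$-consistency of the nuisances plus the margin conditions. The rates enter only through the CLT inherited from \cref{theorem:estimator}.

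One small correction: your aside that the interval is ``still conservative'' when $\sigma_L^2=0$ does not follow from your argument. In that case the studentized statistic need not be asymptotically $N(0,1)$. For example, if $\widehat{\psi}_L$ were constant but offset from $L(\pi_e)$ by a nonzero $o(n^{-1/2})$ amount, the interval would have zero width and miss $L(\pi_e)$. It is cleaner to simply assume $\sigma_L^2>0$, as the paper implicitly does.
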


We note that our results require~\cref{assump:margin} to hold for some value of $\alpha, \beta$. The first ($\alpha$) condition controls the bias from selecting the wrong argmax/argmin in the $\bLcor, \bUcor$ branches; the second ($\beta$) condition controls a separate source of error specific to our setting, namely that $\hat\psi_L, \hat\psi_U$ are evaluated at the estimated partitions $\Sletighthat, \Sgetighthat$ rather than at the population partitions in \cref{def:bounds}, and $\hat f_M$'s error can permute the relevant ordering. The probability of a permutation is bounded polynomially by the margin condition, contributing $o_p(n^{-1/2})$ to the overall error.
\section{Experiments}\label{sec:experiments}

We now shift discussion to a semi-synthetic simulation study to demonstrate the efficiency of our proposed bounds. All experiment code is available on Github at \href{https://github.com/oberst-lab/counterfactual-correctness-experiments}{oberst-lab/counterfactual-correctness-experiments}.

We will use data from \citet{imai2023experimental}, an experiment evaluating the impact of showing judges algorithmically generated risk scores (1-6) prior to bail sentencing.  This data also records the action taken by judges, which includes assigning high bail, low bail, or release without bail to suspects (note that ``imprisonment without bail'' does not appear as an option in the data).  Downstream outcomes were recorded for each suspect, such as failure to appear (FTA) at a trial hearing. The full data includes suspect covariates, judge group, judge decision, risk score, and FTA for every individual. FTA distributions in the data are shown in \cref{tab:fta-score-distribution}.

\textbf{Simulation Setup} We use this data to construct a semi-synthetic simulation study, re-using the covariates of individuals in the dataset, as well as the original risk scores.  We introduce simulation in order to reason about the `ground truth' counterfactual outcomes under different forms of AI assistance.  We focus primarily on hypothetical alerting models that notify the judge of high-risk individuals: This framing allows us to consider the ``control arm'' of the trial as providing information on judge behavior in the absence of an alert.  We make the additional simplification, for convenience, of considering ``requires bail'' $(A = 1)$ and ``does not require bail'' as the only actions, collapsing the ``high'' and ``low'' bail options.  Using the original data, we learn
\begin{align*}
    \hat{f}_{a}(X) &:= \hat{P}(A=1 \mid X, \text{Arm} = \text{control})\\
    \hat{f}_{y}(X,A) &:= \hat{P}(\text{FTA} = 1 \mid X, A, \text{Arm} = \text{control})
\end{align*}
where both $\hat{f}_{a}$ and $\hat{f}_y$ represent our estimates of the conditional probability of (a) the judge requiring some form of bail in the absence of ML assistance, and (b) the defendant failing to appear, conditioned on the type of bail required. {We also retain the risk score (from 1-6) that is provided in the original dataset, which we denote $\mathfrak{R}$, and which is used to inform some of the models that we evaluate later on.}

We then simulate judge and defendant behavior as follows: If no alert is raised ($\hat{Z}=0$), then judges make decisions according to $\hat{f}_a(X)$, and if an alert is raised ($\hat{Z} = 1$), then their probability increases proportionally to the performance (in this case, the precision) of the model as follows
\begin{align*}
    &P(A = 1 \mid X, \Pi, \hat{Z}) \\
    &= \sigma(\text{logit}( \hat{f}_a(X)) + \mathbf{1}\{\hat{Z} = 1\} \cdot f_M(X, \Pi))
\end{align*}
where $\sigma$ is the sigmoid function, and $f_M(X, \Pi)$ is the performance of the model $\Pi$ at $X$.  We then simulate failure-to-appear using $P(\text{FTA} = 1 \mid X, A) = \hat{f}_y(X, A)$, and construct our outcome $Y$ as a net utility that rewards court appearance while imposing a penalty for requiring bail
\begin{align*}
    Y_{\text{raw}} &= 1 - c_{\text{FTA}}\cdot\text{FTA} - c_{\text{det}}\cdot A, &
    Y &= \frac{Y_{\text{raw}} + c_{\text{det}}}{1 + c_{\text{det}}},
\end{align*}
with $c_{\text{FTA}} = 1$ and $c_{\text{det}} = 0.1$. The affine rescaling is for interpretability, as $Y_\text{raw}$ is already bounded by $[-0.1, 1]$. The bail penalty $c_{\text{det}}$ reflects our intuition that over-detention is harmful and allows incorrect predictions, particularly false positives, to cause harm. 

Finally, we take $Z$ (the `ground truth' label) to be whether or not the defendant would fail to appear if released without bail. While in reality, $Z$ would only be observed for individuals released without bail, for the purpose of the illustrative simulation here, we take it to be observed for all individuals.\footnote{In~\cref{missingness} we discuss assumptions under which our method can handle missingness, and in Exp.\ 5 we consider a misspecified experiment with missing values of $Z$.}
More broadly, while we use real data to inform our simulation of judge and defendant behavior, we do not claim that our simulation is a realistic surrogate for real-world behavior: Rather, we design our simulation primarily for ease of exposition, and to illustrate the application of our method. We now briefly describe the experiments run, with additional details and figures in \cref{app:experiments}. 

\begin{table}[t]
    \centering
    \small
\begin{tabular}{lcc|ccc}
	\toprule
	Exp \# & \makecell{Width \\ (Ours)} & \makecell{Width \\ (Chen)} & \makecell{Lower Bd. \\(Ours)} & \makecell{Upper Bd. \\(Ours)} & \makecell{True \\ value} \\
	\midrule
	1      & 0.331                      & 0.704                      & 0.430                         & 0.761                         & 0.726                    \\
	2      & 0.121                      & 0.362                      & 0.616                         & 0.737                         & 0.728                    \\
	3      & 0.140                      & 0.281                      & 0.624                         & 0.764                         & 0.696                    \\
	4      & 0.454                      & 0.802                      & 0.349                         & 0.803                         & 0.695                    \\
	5      & 0.204                      & 0.281                      & 0.586                         & 0.790                         & 0.697                    \\
	\bottomrule
\end{tabular}

    \caption{Semi-synthetic experiment results (see~\cref{sec:experiments}) showing the mean width of our bounds 
    $\hat{U}(\pi_e)-\hat{L}(\pi_e)$ compared to those derived via the method of \citet{chen2025}, alongside the average upper and lower bound values and the true value. We see that our bounds are consistently tighter and always contain the true value of $Y$. }
   \label{tab:simulation_results}
\end{table}

\textbf{Evaluation of changing alert thresholds and re-training of a pre-existing model (Exp 1, 2)}

To start, we consider a simulated setting of an RCT that has two arms: a control arm in which no alerts are raised, and a trial arm in which alerts are raised when the risk score $\mathfrak{R}$ present in the \citet{imai2023experimental} dataset is greater than 3. Judge decisions and outcomes are then simulated as described above.
In Experiment 1\label{exp_1}, we evaluate a new alerting system that uses the same risk scores $\mathfrak{R}$, but raises an alert at a lower (less conservative) threshold.  In particular, alerts are raised for every individual with a risk score $\mathfrak{R}$ greater than 1.
In Experiment 2\label{exp_2}, we create a new risk score algorithm and evaluate an untrialed model that raises an alert when this new risk score is greater than 3. This algorithm is discussed in greater detail in \cref{app:add_exp}. This setting represents the motivating scenario where a freshly trained model needs to be evaluated. 

Overall results are shown in~\cref{tab:simulation_results}, where we observe that the bounds produced by~\citet{chen2025} are extremely wide (0.70 and 0.36 for Exp.\ 1 and 2, respectively) given that the outcome (and hence the target to estimate) is bounded between zero and one.  This gap reflects the fact that in both cases, the untrialed model tends to produce alerts in cases where no alerts occurred in the original trial (trialed model raises alerts on 28.1\% of individuals, whereas the untrialed model raises an alert on 80.2\% and 45.6\% of individuals in Exp.\ 1 and 2, respectively). Our method produces bounds that are consistently narrower than those of prior work (0.33 and 0.12, respectively), because we are able to produce tighter lower bounds in settings where these previously unobserved alerts are correct, and tighter upper bounds when they are incorrect.

In the Appendix, we conduct a further ablation study, using data from Experiment 1, finding that the primary driver of improvement over \citet{chen2025} is the use of the counterfactual correctness bounds, rather than e.g., stratification of $M$ on $X$. In \cref{tab:coverage-experiment}, we also show the results of 100 Monte Carlo trials of Exp.\ 1, demonstrating that all methods consistently capture the true expected outcome, but our method produces a smaller bound interval and narrower confidence intervals.

\textbf{Evaluation in the absence of an RCT (Exp 3, 4)} 

When no RCT has been conducted, but data exists from prior to the deployment of any AI system, such data is effectively a ``single-arm'' RCT that only contains a control arm that always takes the ``neutral'' action.  We illustrate the application of our method in this setting.  Here, the prediction of ``no alert'' is taken to be a neutral action (leading to similar outcomes as those in prior data), while the prediction of ``alert'', when correct/incorrect, implies that outcomes will be no worse/no better than those observed in prior data. In Experiment 3\label{exp_3}, we evaluate an untrialed model that raises an alert for individuals with a risk score greater than 3, and in Experiment 4\label{exp_4}, we evaluate an untrialed model that does the same for those with a risk score greater than 1.

Overall results are similarly shown in~\cref{tab:simulation_results}, where we again find our bounds are consistently tighter than \citet{chen2025}. We also notice that bound width grows with the alert rate of the untrialed model. Experiment 3's conservative threshold ($\mathfrak{R}>3$) raises few alerts and leads to a narrow bound ($0.140$), while Exp.\ 4's aggressive threshold ($\mathfrak{R}>1$) raises many more alerts and leads to a much wider bound ($0.454$). Additionally, in \cref{fig:simulation_results1}, we illustrate our bounds across different age groups, compared to the method from \citet{chen2025}, after modifying the latter method to use per-age group performances. We see that across all groups, our bounds capture the true value and are tighter, agreeing with \cref{tab:simulation_results}. This result suggests that our notion of counterfactual correctness is the primary driver of empirical improvement, rather than stratifying $M$ by $X$. 

\textbf{Misspecified example with missing data (Exp 5)}\label{exp_5} 

As a sanity check, we consider the setting of Experiment 3, but where we hide the counterfactual failure-to-appear values for offenders that were not released without bail.  This modification violates our assumptions, and this experiment is designed to illustrate the impact of missing values of $Z$, as discussed further in \cref{missingness}.  

In~\cref{tab:simulation_results}, we observe that our bounds are wider (0.2 in Exp.\ 5 vs 0.14 in Exp.\ 3), but still contain the true value.  We show further details in \cref{tab:exp_3_vs5} in the Appendix, where we demonstrate that the bounds still capture the true values, and the confidence intervals for the lower and upper bounds still capture the ground truth values. This result suggests that the bias introduced by the misspecification in this particular simulation is not significant enough to yield invalid results from applying our method. 

\begin{figure}
   \centering
   \includegraphics[width=\linewidth]{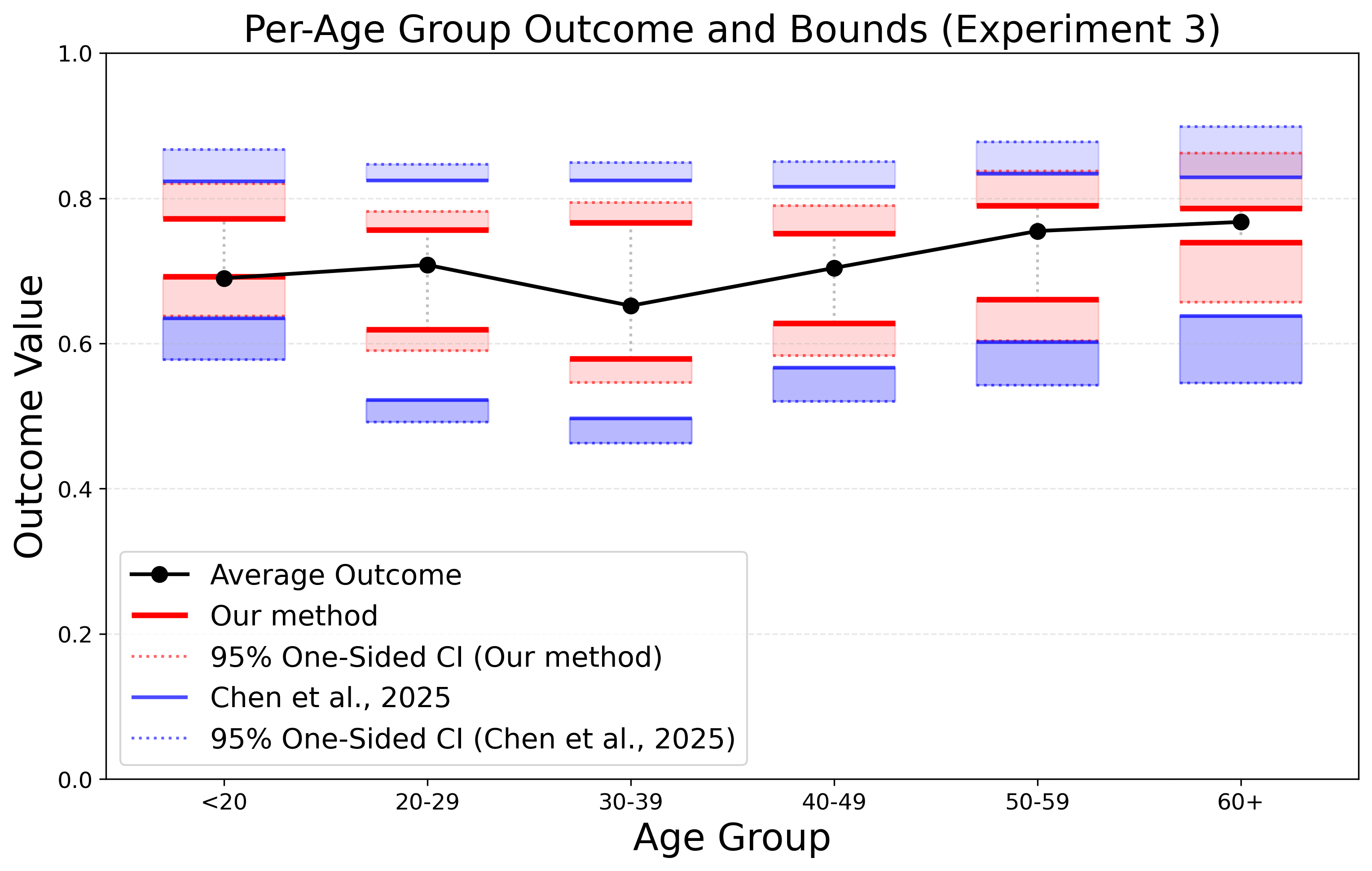}
   \caption{(Experiment 3) Comparison of bounds on downstream outcomes for an untrialed model with a single trialed model in the RCT. The figure displays the lower bound and upper bound computed using our proposed method and a modified version of the method from \citet{chen2025}, also plotted in \cref{fig:bounds_B} with performance as functions of individual age.   }
   \label{fig:simulation_results1}
\end{figure}
\section{Conclusion and Limitations}
When a newly trained model has better predictive performance than models previously evaluated in an RCT, there will naturally be limited data on how its predictions impact outcomes, insofar as it makes correct predictions where prior models failed. Our work improves over prior work in partial identification of the causal impact of AI systems, by recognizing that, when ground truth labels exist in a prior RCT, the model's predictions can be evaluated against that ground truth, and correctness used as a directional indicator of relative impact at the individual level.

Several limitations and areas for future investigation merit discussion. Since our falsification tests are in-expectation, individual-level assumption violations may go undetected. Handling cases where an identical prediction from a better model has a worse outcome, or where neutral action depends on performance, is an important direction for future work. Many deployed models also do not provide discrete decision recommendations. Instead, many may output continuous predictions. Our method can still be applied by imposing discrete buckets, or using $\bLcor,\bUcor,\bLdef,\bUdef$ branches only, but more efficient methods of evaluation may be of interest. As discussed, our method has limitations when values of $Z$ are missing, which presents avenues for future research. Finally, investigating an approach that incorporates observational data, which is often significantly more available than RCT data, would be valuable.

\newpage
\bibliography{uai2026-template}

\newpage

\onecolumn

\appendix
\crefalias{section}{appendix}
\crefalias{subsection}{subappendix}
\crefalias{subsubsection}{subsubappendix}

\title{Bounding the Causal Impact of ML-assisted Decision-Making \\ via Counterfactual Correctness (Supplementary Material)}
\maketitle

\setcounter{table}{0}
\renewcommand{\thetable}{A.\arabic{table}}

\setcounter{figure}{0}
\renewcommand{\thefigure}{A.\arabic{figure}}

\section{Appendix Overview}
\label{app:overview}
The appendix contains the following sections: 
\begin{itemize}
	\item \cref{app:proofs}: Proofs and Additional Materials.\\
	      This section provides proofs for falsification tests for the core assumptions introduced in the main text. If these tests fail, then the assumptions do not hold in the data, and any results may not be valid.
	\item \cref{app:bounds_thm}: Proof of \cref{def:bounds}.\\
	      This section provides a formal proof of the core theorem of the paper. To prove the validity of the bounds, we write the analytical form of the bounds and demonstrate the expected value of the untrialed model falls within the bounds. 
	\item \cref{app:estimator}: Proof of \cref{theorem:estimator}.\\
	      This section provides a formal proof that the empirical estimator for our bounds is consistent and asymptotically normal, enabling us to construct confidence intervals for our bounds. Since some of the branches of the bounds contain discontinuous ``max/min'' operators, the techniques of \citet{chen2025} cannot be directly applied to the entire estimator. We instead formulate an AIPW estimator for our bounds. By Theorem 2 of \citet{byun2024}, we can demonstrate that such an estimator is consistent and asymptotically normal if seven conditions are met. For each assumption, we verify the conditions are met and directly apply the theorem.
    \item \cref{app:experiments}: Further Simulation Details.\\
    This section provides some brief additional information about the simulation study. 
    \item \cref{app:related_work}: Additional Related Work.
    This section provides an overview of works in related areas that are relevant to our work, but were not discussed in the main text. 
    \item \cref{missingness}: Missing Data.\\
    This section discusses the handling of cases in which $Z$ is missing from our data. 
    \item \cref{app:flowchart}: Flowchart.\\
    This section provides a flowchart visualizing the branch-selection logic ($\bsL$) used to construct the bounds.
\end{itemize}
\newpage
\section{Proofs and Additional Materials}
\label{app:proofs}
The following assumptions are used in the proofs of the propositions and theorems in this paper.\\
\begin{restatable}[Consistency]{corollary}{ConsistencyCorollary}\label{cor:consistency}
Under \cref{asmp:dgp}, if $\hat{Z}=\hat{z},\ M=m,\ Z=z$ then
\begin{equation*}Y = Y(\hat{Z}=\hat{z}, M=m, Z=z).\end{equation*}
\end{restatable}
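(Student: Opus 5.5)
The plan is to unfold the structural equations of \cref{asmp:dgp} and substitute the observed values into $f_Y$. By \cref{asmp:dgp}, the observed outcome is $Y = f_Y(X, Z, M, \hat{Z}, \epsilon_Y)$, where $X, Z$ come from $f_{XZ}(\epsilon_{XZ})$, $\hat{Z}=\Pi(X)$, and $M = f_M(X,\Pi)$. The potential outcome $Y(\hat{Z}=\hat{z}, M=m, Z=z)$ is defined by intervening on those three nodes, i.e.\ replacing their structural equations by constants. It is therefore the pointwise function $f_Y(X, z, m, \hat{z}, \epsilon_Y)$ of the noise variables (with $X$ still generated by $f_{XZ}$), matching the convention stated in the footnote to \cref{assumptions:correct}.

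First, I will record that under the intervention the parents of $Y$ other than $(Z, M, \hat{Z})$ keep their natural values. These parents are $X$ and $\epsilon_Y$. Neither is a descendant of $\hat{Z}$, $M$, or $Z$ in the SCM: $X$ is produced only by $f_{XZ}(\epsilon_{XZ})$, and $\epsilon_Y$ is exogenous. Hence $X$ and $\epsilon_Y$ are unchanged by the intervention, and $Y(\hat{Z}=\hat{z}, M=m, Z=z) = f_Y(X, z, m, \hat{z}, \epsilon_Y)$ holds for every unit.

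Second, on the event $\{\hat{Z}=\hat{z}, M=m, Z=z\}$, I will substitute these equalities into the observed equation. This gives $Y = f_Y(X, Z, M, \hat{Z}, \epsilon_Y) = f_Y(X, z, m, \hat{z}, \epsilon_Y)$, which equals the potential outcome from the first step, so the claim follows pointwise on that event.

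The one delicate point is that the intervention on $Z$ must not alter $X$, even though $X$ and $Z$ are jointly produced by $f_{XZ}$ and may share a common cause. I would handle this by noting that intervening on $Z$ only replaces the $Z$-coordinate fed into $f_Y$; the $X$-coordinate is still $f_{XZ}(\epsilon_{XZ})$ restricted to $X$. This holds because $Z$ has no outgoing edge into $X$ (the link is only the bidirected edge in \cref{fig:dag_a}). With that observation in place, the rest is the standard consistency argument for SCMs.
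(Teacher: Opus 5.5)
Your argument is correct and matches how the paper treats this statement: the paper gives no written proof, regarding it as immediate from the structural equations of \cref{asmp:dgp}, since $Y = f_Y(X,Z,M,\hat Z,\epsilon_Y)$ and $Y(\hat Z=\hat z, M=m, Z=z) = f_Y(X,z,m,\hat z,\epsilon_Y)$ coincide on the stated event. Your ``delicate point'' about the intervention on $Z$ leaving $X$ unchanged is not actually needed for this event-wise claim: on $\{\hat Z=\hat z, M=m, Z=z\}$ every intervened node is set to its natural value, so every variable stays at its natural value however $X$ and $Z$ are related.
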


\begin{restatable}[Conditional Ignorability]{corollary}{ConditionalIgnorability}\label{cor:conditional_ignorability}
Under \cref{asmp:dgp}, since $Y(\hat{z},m)\perp\!\!\!\perp (\hat{Z},M)\mid (X,Z)$, we have
\begin{equation}
   P\left(Y(\hat{z},m)\mid X,Z\right)=P\left(Y\mid \hat{Z}=\hat{z}, M=m, X,Z\right).
\end{equation}
\end{restatable}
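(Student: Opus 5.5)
The statement to prove is the conditional ignorability corollary: $Y(\hat{z},m)\ci(\hat{Z},M)\mid(X,Z)$, and hence the conditional law of the potential outcome given $(X,Z)$ equals the observed law of $Y$ given $\hat{Z}=\hat{z},M=m,X,Z$. I will read everything off the structural equations in \cref{asmp:dgp}, using the convention that a potential outcome is the structural function evaluated at the intervened arguments, $Y(\hat{z},m)=f_Y(X,Z,m,\hat{z},\epsilon_Y)$.

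\textbf{Step 1: write each side as a function of the noises.} The pair $(X,Z)=f_{XZ}(\epsilon_{XZ})$ depends only on $\epsilon_{XZ}$. Since $\Pi=f_\Pi(f_D(\epsilon_D))$, the pair $(\hat{Z},M)=(\Pi(X),f_M(X,\Pi))$ is a deterministic function $g(X,\epsilon_D)$. Given $(X,Z)$, the potential outcome $Y(\hat{z},m)$ is a function of $\epsilon_Y$ only.

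\textbf{Step 2: the independence.} By mutual independence of $(\epsilon_D,\epsilon_{XZ},\epsilon_Y)$, we have $\epsilon_Y\ci(\epsilon_D,\epsilon_{XZ})$ and $\epsilon_D\ci\epsilon_{XZ}$. Hence, conditional on $(X,Z)$ (a function of $\epsilon_{XZ}$), $\epsilon_Y$ and $\epsilon_D$ remain independent. Formally, for measurable sets $B,C$,
\[
P\bigl(\epsilon_Y\in B,\epsilon_D\in C\mid \epsilon_{XZ}\bigr)=P(\epsilon_Y\in B)\,P(\epsilon_D\in C),
\]
and this factorization passes to conditioning on any function of $\epsilon_{XZ}$. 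Since $Y(\hat{z},m)$ is a function of $(X,Z,\epsilon_Y)$ and $(\hat{Z},M)$ is a function of $(X,\epsilon_D)$, it follows that $Y(\hat{z},m)\ci(\hat{Z},M)\mid(X,Z)$.

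\textbf{Step 3: the displayed equality.} The independence gives
\[
P\bigl(Y(\hat{z},m)\mid X,Z\bigr)=P\bigl(Y(\hat{z},m)\mid \hat{Z}=\hat{z},M=m,X,Z\bigr).
\]
On the event $\{\hat{Z}=\hat{z},M=m\}$, \cref{cor:consistency} gives $Y=Y(\hat{z},m)$, so the right-hand side equals $P(Y\mid\hat{Z}=\hat{z},M=m,X,Z)$.

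\textbf{Main obstacle.} The one subtle point is positivity. Because $\hat{Z}$ and $M$ are deterministic functions of $(X,\Pi)$, the conditioning event $\{\hat{Z}=\hat{z},M=m\}$ may have probability zero for a given $X$ value. I will therefore state the equality only for $(\hat{z},m)$ realizable by some trialed $\pi$ at that $x$, i.e.\ where $P(\Pi=\pi\mid X)>0$ (cf.\ \cref{assump:known_prop}). Conditional probabilities will be interpreted via regular versions on events of positive conditional probability.
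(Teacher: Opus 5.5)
Your proof is correct and follows the route the paper intends. The paper gives no separate proof: it asserts $Y(\hat z,m)\ci(\hat Z,M)\mid(X,Z)$ as a consequence of \cref{asmp:dgp} and implicitly combines it with consistency. Your Steps 1--3 make this explicit by writing $Y(\hat z,m)$ as a function of $(X,Z,\epsilon_Y)$ and $(\hat Z,M)$ as a function of $(X,\epsilon_D)$, then using mutual independence of the noises.

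Your positivity caveat, restricting to $(\hat z,m)$ realized by some trialed model at $x$, is a real refinement that the paper leaves implicit. It matters because the paper later invokes this corollary at $(\pi_e(X), f_M(X,\pi_e))$ for the untrialed model, where the conditioning event can have probability zero.
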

\subsection{Falsification of Assumption 3.1}
\CorrectActionsCauseNoHarm*
\FalsificationAssumptioncorrect*
\begin{proof}
Suppose \cref{asmp:dgp} holds. Choose $x\in\mathcal{X}_{c \neq w}$. Then we have the following:
\begin{align}
   \mathbb{E}&\left[Y\mid X=x, \Pi=\pi_c, Z=\pi_c(x)\right] - \mathbb{E}\left[Y\mid X=x, \Pi=\pi_w, Z=\pi_c(x)\right] \nonumber \\
   &= \mathbb{E}\left[Y\mid X=x, \hat{Z}=\pi_c(x), M=f_M(x,\pi_c), Z=\pi_c(x)\right] - \mathbb{E}\left[Y\mid X=x, \hat{Z}=\pi_w(x), M=f_M(x,\pi_w), Z=\pi_c(x)\right]\label{eq:correct_line1}\\
      &= \mathbb{E}\left[Y(\hat{Z}=\pi_c(x), M=f_M(x,\pi_c), Z=\pi_c(x)) - Y(\hat{Z}=\pi_w(x), M=f_M(x,\pi_w), Z=\pi_c(x))\mid X=x\right] \label{eq:correct_line2} \\
      &\ge 0 \label{eq:correct_line3}
\end{align}
Where \cref{eq:correct_line1} follows from \cref{asmp:dgp}, \cref{eq:correct_line2} follows from \cref{cor:consistency}, and \cref{eq:correct_line3} follows from \cref{assumptions:correct}. We now aggregate.
\begin{align}
   &\mathbb{E}[Y\mid X\in\mathcal{X}_{c\neq w}, \Pi=\pi_c, Z=\pi_c(X)] - \mathbb{E}[Y\mid X\in\mathcal{X}_{c\neq w}, \Pi=\pi_w, Z=\pi_c(X)]\nonumber\\
   &= \int_x \mathbb{E}[Y\mid X=x, \Pi=\pi_c, Z=\pi_c(x)]\, dP(x\mid X\in\mathcal{X}_{c\neq w}, \Pi=\pi_c, Z=\pi_c(X))\nonumber\\
   &\quad - \int_x \mathbb{E}[Y\mid X=x, \Pi=\pi_w, Z=\pi_c(x)]\, dP(x\mid X\in\mathcal{X}_{c\neq w}, \Pi=\pi_w, Z=\pi_c(X))\label{eq:integration2}\\
   &= \int_x \Big(\mathbb{E}[Y\mid X=x, \Pi=\pi_c, Z=\pi_c(x)] - \mathbb{E}[Y\mid X=x, \Pi=\pi_w, Z=\pi_c(x)]\Big)\, dP(x\mid X\in\mathcal{X}_{c\neq w}, Z=\pi_c(X))\label{eq:integration3}\\
   &\geq 0 \label{eq:final_res3}
\end{align}
Where \cref{eq:integration2} follows by the tower property, \cref{eq:integration3} by $\Pi\ci(X,Z)$ (dropping $\Pi$
 from both measures and combining), and \cref{eq:final_res3} by \cref{eq:correct_line3}.
\end{proof}
\subsection{Falsification of Assumption 3.2}
\PerformanceMonotonicity*
\FalsificationAssumptionThreeOne*
We will use the following lemma: 
   \begin{restatable}[Observational Equivalence]{lemma}{ObsEquivLemma}\label{lem:obs_equiv}
      For any $\pi_i\in\Hclass$ and any $x\in\mathcal{X},\, z\in\mathcal{Z}$ such that $P(X=x,\, Z=z,\, \Pi=\pi_i)>0$:
\begin{equation}
   \mathbb{E}\left[Y|X=x, \Pi = \pi_i, Z=z\right] = \mathbb{E}\left[Y(\hat{Z}=\pi_i(x), M=f_M(x, \pi_i), Z=z)|X=x, Z=z\right]\label{eq:obs_line4}
\end{equation}
  \end{restatable}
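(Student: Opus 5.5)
The plan is to pass from the observational conditional expectation to the potential-outcome expectation in three steps: rewrite the conditioning event in terms of $(\hat{Z}, M)$, apply consistency (\cref{cor:consistency}), and then drop the conditioning on $(\hat{Z},M)$ using the independence structure of \cref{asmp:dgp}.

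\textbf{Step 1 (rewrite the conditioning event).} Fix $x, z, \pi_i$ with $P(X=x, Z=z, \Pi=\pi_i)>0$. Under \cref{asmp:dgp}, $\hat{Z}=\Pi(X)$ and $M=f_M(X,\Pi)$ are deterministic. So on the event $\{X=x, \Pi=\pi_i, Z=z\}$ we have $\hat{Z}=\pi_i(x)$ and $M=f_M(x,\pi_i)$. Applying \cref{cor:consistency} on this event gives
\[
\mathbb{E}[Y\mid X=x,\Pi=\pi_i,Z=z]=\mathbb{E}\bigl[Y(\hat{Z}=\pi_i(x), M=f_M(x,\pi_i), Z=z)\mid X=x,\Pi=\pi_i,Z=z\bigr].
\]

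\textbf{Step 2 (drop $\Pi$ from the conditioning).} The potential outcome $Y(\hat{Z}=\hat z, M=m, Z=z)=f_Y(X, z, m, \hat z, \epsilon_Y)$ is a function of $(X,\epsilon_Y)$ only. The variable $\Pi=f_\Pi(f_D(\epsilon_D))$ is a function of $\epsilon_D$. Since $\epsilon_D$, $\epsilon_{XZ}$ and $\epsilon_Y$ are mutually independent, $\epsilon_D$ is independent of $(\epsilon_{XZ},\epsilon_Y)$. Hence $\Pi \ci (\epsilon_Y, X, Z)$, and therefore $\Pi \ci Y(\hat z,m,z)\mid (X,Z)$. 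This lets us remove $\Pi=\pi_i$ from the conditioning set, which yields \cref{eq:obs_line4}. Equivalently, this step is \cref{cor:conditional_ignorability} with $\Pi$ in place of $(\hat Z, M)$; the independence argument above is what justifies that substitution.

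\textbf{Main obstacle.} The main subtlety is making Step 2 rigorous when $X$ is continuous. In that case, conditioning on $X=x$ must be read as a regular conditional expectation defined almost everywhere. I would handle this by proving the identity for conditional expectations given $(X,Z,\Pi)$ as random variables, using the factorization $\mathbb{E}[g(X,\epsilon_Y)\mid X,Z,\Pi]=\mathbb{E}[g(X,\epsilon_Y)\mid X,Z]$, which follows from the independence of $\epsilon_D$ from $(\epsilon_{XZ},\epsilon_Y)$. The positivity condition in the statement then makes the pointwise version well defined.
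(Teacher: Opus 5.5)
Your proof is correct and uses the same ingredients as the paper's: determinism of $\hat Z$ and $M$ given $(X,\Pi)$, consistency, and ignorability derived from the mutually independent noises. The only difference is ordering: you keep $\Pi=\pi_i$ in the conditioning set, apply \cref{cor:consistency} there, and then drop it via randomization $\Pi \ci (X,Z,\epsilon_Y)$, whereas the paper first replaces $\Pi=\pi_i$ by $\hat Z=\pi_i(x), M=f_M(x,\pi_i)$ (using $Y\ci\Pi\mid X,\hat Z,M,Z$) and then invokes \cref{cor:conditional_ignorability}, which makes your version slightly more direct because it skips that intermediate conditional-independence step and the remark that $Y(\hat z,m,z)=Y(\hat z,m)$ given $Z=z$.
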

\begin{proof}
         \begin{align}
	\mathbb{E}\left[Y|X=x, \Pi = \pi_i, Z=z\right] & = \mathbb{E}\left[Y|X=x, \hat{Z}=\pi_i(x), M=f_M(x, \pi_i), Z=z\right]                                           \label{eq:obs_line1}\\
      & = \mathbb{E}[Y(\hat{Z}=\pi_i(x), M=f_M(x, \pi_i), Z=z)|X=x, \hat{Z}=\pi_i(x),\nonumber\\
      &\qquad M=f_M(x, \pi_i), Z=z] \label{eq:obs_line2}\\
      & = \mathbb{E}\left[Y(\hat{Z}=\pi_i(x), M=f_M(x, \pi_i), Z=z)|X=x, Z=z\right]\label{eq:obs_line4.1}
\end{align}
Where \cref{eq:obs_line1} follows because $\hat{Z}=\Pi(X)$ and $M=f_M(X,\Pi)$ are deterministic given $(X,\Pi)$, and $Y\ci \Pi\mid X,\hat{Z},M,Z$ under \cref{asmp:dgp}, since $\Pi$ affects $Y$ only through $(\hat{Z},M)$. \cref{eq:obs_line2} follows from Consistency (\cref{cor:consistency}), noting that given $Z=z$, we have $Y(\hat{z},m,z)=Y(\hat{z},m)$ under \cref{asmp:dgp}. \cref{eq:obs_line4.1} follows from Conditional Ignorability (\cref{cor:conditional_ignorability}), i.e., $Y(\hat{z},m)\ci(\hat{Z},M)\mid X,Z$.
\end{proof}
This lemma enables us to prove Proposition~\ref{prop:falsification_monotonicity} and \ref{prop:neutral}.\\
\begin{proof}
Choosing $x\in \mathcal{X}_s$ and setting $z=\pi_+(x)$, we have that:
\begin{align}
   \mathbb{E}\left[Y\mid X=x, \Pi=\pi_+, Z = \pi_+(x)\right] &- \mathbb{E}\left[Y\mid X=x, \Pi=\pi_-, Z = \pi_+(x)\right]\nonumber\\
   &= \mathbb{E}[Y(\hat{Z}= \pi_+(x), M=f_M(x, \pi_+), Z = \pi_+(x))-\\
   &\qquad
  Y(\hat{Z}= \pi_+(x), M=f_M(x,\pi_-), Z = \pi_+(x))\mid X=x, Z=\pi_+(x)]\label{eq:lemres}\\
   &\geq 0\label{eq:geqzero}
\end{align}
Where \cref{eq:lemres} follows from Lemma~\ref{lem:obs_equiv} with $z=\pi_+(x)$, noting that since $x\in\mathcal{X}_s\subseteq\mathcal{X}_{agree}$ we have $\pi_-(x)=\pi_+(x)$, so both potential outcomes share the prediction $\hat{Z}=\pi_+(x)$ and differ only in $M$. \cref{eq:geqzero} then follows from \cref{assumption:monotonicity}, which compares same-prediction outcomes that differ only in performance $M$. For this to hold over the entire $\mathcal{X}_s$, we have:
\begin{align}
   &\mathbb{E}[Y\mid X\in\mathcal{X}_s, \Pi=\pi_+, Z=\pi_+(X)] - \mathbb{E}[Y\mid X\in\mathcal{X}_s, \Pi=\pi_-, Z=\pi_+(X)]\nonumber\\
   &= \int_x \mathbb{E}[Y\mid X=x, \Pi=\pi_+, Z=\pi_+(x)]\, dP(x\mid X\in\mathcal{X}_s, \Pi=\pi_+, Z=\pi_+(X))\nonumber\\
   &\quad - \int_x \mathbb{E}[Y\mid X=x, \Pi=\pi_-, Z=\pi_+(x)]\, dP(x\mid X\in\mathcal{X}_s, \Pi=\pi_-, Z=\pi_+(X))\label{eq:mon_int_1}\\
   &= \int_x \Big(\mathbb{E}[Y\mid X=x, \Pi=\pi_+, Z=\pi_+(x)] - \mathbb{E}[Y\mid X=x, \Pi=\pi_-, Z=\pi_+(x)]\Big)\, dP(x\mid X\in\mathcal{X}_s, Z=\pi_+(X))\label{eq:mon_int_2}\\
   &\geq 0 \label{eq:final_res_mon}
\end{align}
Where \cref{eq:mon_int_1} follows from the tower property, \cref{eq:mon_int_2} follows from $\Pi\ci(X,Z)$ (dropping $\Pi$ from both measures, which makes them identical and lets the integrals combine), and \cref{eq:final_res_mon} follows from \cref{eq:geqzero}, thus contradicting our observation. 
\end{proof}
This falsification test only tests for violations of \cref{assumption:monotonicity} when the models give the correct prediction. The following falsification test also exists for the case when the models give the incorrect prediction. 
\begin{restatable}[Falsification of \cref{assumption:monotonicity}]{proposition}{FalsificationAssumptionThreeOne2}\label{prop:falsification_monotonicity2}
    Let $\pi_+,\pi_- \in\Hclass$ be two trialed models. 
    Consider a sub-population with $\mathcal{X}_{better}\subseteq\mathcal{X}$ such that for all $x\in \mathcal{X}_{better}$, $f_M(x,\pi_+) > f_M( x, \pi_-)$, and a subpopulation with covariates $\mathcal{X}_{agree} \subseteq \mathcal{X}$ such that for all $x \in \mathcal{X}_{agree}$, $\pi_+(x)= \pi_-(x)$. Then, for any $\mathcal{X}_{s} =  \mathcal{X}_{better}\cap \mathcal{X}_{agree}$
    \begin{align*}
	& \mathbb{E}[Y|X\in\mathcal{X}_{s}, \Pi = \pi_+, Z\ne\pi_+(X)] \\
        &> \mathbb{E}[Y|X\in\mathcal{X}_{s}, \Pi = \pi_-, Z\ne\pi_+(X)]
    \end{align*}
    falsifies \cref{assumption:monotonicity}.
 \end{restatable}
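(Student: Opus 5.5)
The argument mirrors the proof of \cref{prop:falsification_monotonicity}, using the second (incorrect-prediction) inequality of \cref{assumption:monotonicity} in place of the first. We argue by contraposition: assume \cref{asmp:dgp} and \cref{assumption:monotonicity} hold, and show that
\[
\mathbb{E}[Y\mid X\in\mathcal{X}_s, \Pi=\pi_+, Z\ne\pi_+(X)] \le \mathbb{E}[Y\mid X\in\mathcal{X}_s, \Pi=\pi_-, Z\ne\pi_+(X)],
\]
so that the reverse strict inequality contradicts the assumption.

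\textbf{Pointwise step.} Fix $x\in\mathcal{X}_s$ and any $z\in\mathcal{Z}$ with $z\ne\pi_+(x)$ (and positive probability). Applying \cref{lem:obs_equiv} to each of $\pi_+$ and $\pi_-$, the difference of conditional means equals
\[
\mathbb{E}\bigl[Y(\hat{Z}=\pi_+(x), M=f_M(x,\pi_+), Z=z) - Y(\hat{Z}=\pi_+(x), M=f_M(x,\pi_-), Z=z)\mid X=x, Z=z\bigr].
\]
Here we use that $x\in\mathcal{X}_{agree}$, so $\pi_-(x)=\pi_+(x)$ and both potential outcomes share the same prediction. Set $\hat z=\pi_+(x)\ne z$, $m_i=f_M(x,\pi_-)$ and $m_j=f_M(x,\pi_+)$; since $x\in\mathcal{X}_{better}$, $m_i<m_j$. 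The second line of \cref{assumption:monotonicity} then gives $Y(\hat z,m_i,z)\ge Y(\hat z,m_j,z)$ pointwise, so the integrand is nonpositive and the conditional difference is $\le 0$.

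\textbf{Aggregation step.} By the tower property, write each side as an integral over $(x,z)$ with respect to $dP(x,z\mid X\in\mathcal{X}_s, \Pi=\pi_k, Z\ne\pi_+(X))$. This step is the only place needing care. Because $\Pi\ci(X,Z)$ by randomization under \cref{asmp:dgp}, and because the conditioning event depends only on $(X,Z)$, the two measures coincide with $dP(x,z\mid X\in\mathcal{X}_s, Z\ne\pi_+(X))$. The two integrals therefore combine into a single integral of the pointwise difference, which is $\le 0$.

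The integration is now over $z$ as well as $x$, since $Z$ is not pinned to a single value but ranges over $\mathcal{Z}\setminus\{\pi_+(x)\}$. This is harmless because the pointwise bound holds for every such $z$. We also assume, as in the footnote to the falsification tests, that the conditioning event has positive probability. Together these steps give the displayed inequality, so observing the strict reverse inequality falsifies \cref{assumption:monotonicity}.
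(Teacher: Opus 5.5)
Your proof is correct and takes the same route as the paper. The paper only says that the argument for the correct-prediction falsification test carries over with the inequality reversed (via the second line of the monotonicity assumption) and with $Z=\pi_+(X)$ replaced by $Z\ne\pi_+(X)$. Your explicit aggregation over $(x,z)$ jointly, with $z$ ranging over $\mathcal{Z}\setminus\{\pi_+(x)\}$ rather than fixed at one value, fills in a detail the paper leaves implicit without changing the argument.
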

The proof is identical to the above, save for changes in the inequality direction and the value of $Z$ in the expectation.
\subsection{Falsification of Assumption 3.3}
\NeutralAction*
\FalsificationAssumptionThreeTwo*
\begin{proof}
Suppose \cref{assumption:neutral} holds. Choose $x\in \mathcal{X}_{\hat{z}_0}$ and $z\in\mathcal{Z}$, then we have the following:
\begin{align}
   \mathbb{E} & [Y|X=x, \Pi = \pi_2, Z=z] - \mathbb{E}[Y|X=x, \Pi = \pi_1, Z=z] \nonumber \\
            & =\mathbb{E}[Y(\hat{Z}=\pi_2(x), M=f_M(x,\pi_2), Z=z)- \nonumber\\
            &\qquad Y(\hat{Z}=\pi_1(x), M=f_M(x,\pi_1), Z=z)\mid X=x, Z=z] \label{eq:neutral_proof_line2} \\
            & =\mathbb{E}[Y(\hat{Z}=\hat{z}_0, M=f_M(x,\pi_2), Z=z)- Y(\hat{Z}=\hat{z}_0, M=f_M(x, \pi_1), Z=z)\mid X=x, Z=z] \label{eq:neutral_proof_line3} \\
            & =0 \label{eq:neutral_proof_line4}
\end{align}
Where \cref{eq:neutral_proof_line2} follows from Lemma~\ref{lem:obs_equiv}. \cref{eq:neutral_proof_line3} follows from the definitions of $\pi_1$ and $\pi_2$. \cref{eq:neutral_proof_line4} follows from \cref{assumption:neutral}. We then aggregate: 
\begin{align}
    \mathbb{E}[&Y|X\in \mathcal{X}_{\hat{z}_0}, \Pi = \pi_2, Z=z]-\mathbb{E}[Y|X\in \mathcal{X}_{\hat{z}_0}, \Pi = \pi_1, Z=z]\\
    &=\int_x\mathbb{E}[Y|X=x, \Pi = \pi_2, Z=z]dP(x|\Pi = \pi_2, X\in\mathcal{X}_{\hat{z}_0}, Z=z)-\nonumber\\
    &\qquad\int_x\mathbb{E}[Y|X=x, \Pi = \pi_1, Z=z]dP(x|\Pi = \pi_1, X\in\mathcal{X}_{\hat{z}_0}, Z=z)\label{eq:integration20}\\
    &=\int_x\mathbb{E}[Y|X=x, \Pi = \pi_2, Z=z]dP(x|X\in\mathcal{X}_{\hat{z}_0}, Z=z)-\nonumber\\
    &\qquad\int_x\mathbb{E}[Y|X=x, \Pi = \pi_1, Z=z]dP(x|X\in\mathcal{X}_{\hat{z}_0}, Z=z)\\
    &=\int_x\mathbb{E}[Y|X=x, \Pi = \pi_2, Z=z]-\mathbb{E}[Y|X=x, \Pi = \pi_1, Z=z]dP(x|X\in\mathcal{X}_{\hat{z}_0}, Z=z)\\
    & =0\label{eq:final_res2}
\end{align}
Where the logic is the same as before, contradicting our observation. 
\end{proof}

\newpage       
\section{Proof of Theorem 4.1}
\label{app:bounds_thm}
\IndicatorDef*
\BoundsTheorem*
\subsection{Proof of Theorem 4.1 (Lower Bound)}
\label{app:mu_ex}
Since $\bsL$ is a function defined by first-match \texttt{cases} (\cref{def:indicators}), exactly one of the indicators $\mathbf{1}\{\bsL=k\}$, $k\in\{\lneu,\lmon,\lcor,\ldef\}$, equals $1$, so they are mutually exclusive and exhaustive. We will sometimes omit the arguments for brevity. \\\\
Thus the equality follows:
\begin{align}
	1 & =  \bLneu +\bLmon +\bLcor +\bLdef\label{eq: split}
\end{align}

We have the following:
\begin{align}
   \mathbb{E}&[Y(\hat{Z} = \pi_e(X), M = f_M(X,\pi_e), Z = z)]\nonumber\\
    & = \mathbb{E}[\mathbb{E}[Y(\hat{Z} = \pi_e(X), M = f_M(X,\pi_e), Z=z) \mid X]] \label{eq:law_iterated_expectation}\\
    & = \mathbb{E}[\mathbb{E}[Y \mid X, \hat{Z} = \pi_e(X), M = f_M(X,\pi_e), Z=z]] \label{eq:conditional_expectation} \\
    & = \mathbb{E}[\mathbb{E}[Y \mid X, \hat{Z} = \pi_e(X), M = f_M(X,\pi_e), Z=z](\bLneu + \nonumber                                          \\
    & \quad \bLmon +\nonumber                                                                                                     \\
    & \quad\bLcor +\nonumber                                                                                                      \\
    & \quad\bLdef)] \label{eq:indicator_expansion}                                                                                                               \\
    & = \mathbb{E}[\bLneu \mathbb{E}[Y \mid X, \hat{Z} = \pi_e(X), M = f_M(X,\pi_e), Z=z] \label{eq:case_agree_neutral} +                          \\
    & \quad \bLmon \mathbb{E}[Y \mid X, \hat{Z} = \pi_e(X), M = f_M(X,\pi_e), Z=z] \label{eq:case_agree_nonneutral_next_lowest} +                          \\
    & \quad \bLcor \mathbb{E}[Y \mid X, \hat{Z} = \pi_e(X), M = f_M(X,\pi_e), Z=z]  \label{eq:case_no_agree_optimal} +                            \\
    & \quad \bLdef \mathbb{E}[Y \mid X, \hat{Z} = \pi_e(X), M = f_M(X,\pi_e), Z=z] ]\label{eq:case_otherwise}
\end{align}
Where \cref{eq:law_iterated_expectation} follows from the law of iterated expectations. \cref{eq:conditional_expectation} follows from conditional ignorability (Corollary~\ref{cor:conditional_ignorability}) and the fact that under \cref{asmp:dgp}, $Y(\hat{z},m,z) = Y(\hat z, m)$ under the conditioning. \cref{eq:indicator_expansion} follows from \cref{eq: split}. \cref{eq:case_otherwise}-\cref{eq:case_agree_nonneutral_next_lowest} follow from distributing the expectation over the sum. \\\\
We will now demonstrate that each line is greater than or equal to the corresponding line in the lower bound. \\

  \cref{eq:case_agree_neutral} We use the analogous proof from \cite{chen2025}. \\
  We first must demonstrate $Y\ci M \mid X, \hat{Z} = \hat{z}_0$: 
  \begin{align}
    P(Y \mid X, \hat{Z} = \hat{z}_0, M=m,Z=z) & = P(Y(\hat{Z} = \hat{z}_0, M = m, Z=z) \mid X = x, M = m, \hat{Z} = \hat{z}_0,Z=z) \label{eq:case_agree_neutral_consistency}\\
    &= P(Y(\hat{Z} = \hat{z}_0, M =m', Z=z) \mid X = x, M = m, \hat{Z} = \hat{z}_0,Z=z) \label{eq:case_agree_neutral_assp}\\
    &= P(Y(\hat{Z}=\hat{z}_0, M = m', Z=z) \mid X = x, M=m', \hat{Z} = \hat{z}_0,Z=z) \label{case_agree_neutral_independence}\\
      & = P(Y \mid X=x, \hat{Z} = \hat{z}_0, M = m',Z=z) \label{eq:case_agree_neutral_end}
  \end{align}
  Where \cref{eq:case_agree_neutral_consistency} follows from Corollary~\ref{cor:consistency}. \cref{eq:case_agree_neutral_assp} follows from \cref{assumption:neutral}. \cref{case_agree_neutral_independence} follows from the structure of \cref{thmt@@DataGeneratingProcess} implying $Y(\hat{z},m) \ci M, \hat{Z} \mid X$. \cref{eq:case_agree_neutral_end} follows from Corollary~\ref{cor:consistency}. Thus we have demonstrated $Y\ci M \mid X, \hat{Z} = \hat{z}_0$. \\
   Next, we recall that the lower selector takes the value $\lneu$ exactly when
    \begin{equation*}\bsL=\lneu:\quad \pi_e(x) = \hat{z}_0,\ \Sagree(x)\ne \varnothing.\end{equation*}
    Thus we have:
\begin{align}
    &\bLneu \mathbb{E}[Y \mid X, \hat{Z} = \pi_e(X), M = f_M(X,\pi_e),Z=z]\nonumber\\
     &= \bLneu \mathbb{E}[Y \mid X, \hat{Z} = \hat{z}_0, M = f_M(X,\pi_e),Z=z] \label{case_agree_neutral1}   \\
    &= \bLneu \mathbb{E}[Y \mid X, \hat{Z} = \hat{z}_0,Z=z] \label{eq:case_agree_neutral2}\\
      &= \bLneu \mathbb{E}[\mathbb{E}[Y \mid X = x, \hat{Z}= \hat{z}_0,Z=z]\mid X = x, \hat{Z} = \pi_e(x), \Pi \in  \Sagree(x)] \label{eq:case_agree_neutral3}\\
      &= \bLneu \mathbb{E}[\mathbb{E}[Y \mid X = x, \hat{Z}= \hat{z}_0,M,Z=z]\mid X = x, \hat{Z} = \pi_e(x), \Pi \in  \Sagree(x)] \label{eq:case_agree_neutral4}\\
      &= \bLneu \mathbb{E}[\mathbb{E}[Y \mid X = x, \hat{Z}= \pi_e(x),M,Z=z]\mid X = x, \hat{Z} = \pi_e(x), \Pi \in  \Sagree(x)] \label{eq:case_agree_neutral5}\\
      &= \bLneu \mathbb{E}[\mathbb{E}[Y \mid X = x, \hat{Z}= \pi_e(x),M, \Pi \in \Sagree(x),Z=z]\mid X = x, \hat{Z} = \pi_e(x), \Pi \in  \Sagree(x)] \label{eq:case_agree_neutral6}\\
      &= \bLneu \mathbb{E}[Y \mid X = x, \hat{Z}= \pi_e(x), \Pi \in \Sagree(x),Z=z]\label{eq:case_agree_neutral7}\\
      &= \bLneu \mathbb{E}[Y \mid X = x, \Pi \in \Sagree(x),Z=z]\label{eq:case_agree_neutral8}
\end{align}
Where \cref{case_agree_neutral1} follows from implication of the indicators. \cref{eq:case_agree_neutral2} follows from $Y\ci M \mid X, \hat{Z} = \hat{z}_0$ demonstrated above. \cref{eq:case_agree_neutral3} follows from the law of total expectation. \cref{eq:case_agree_neutral4} follows from \cref{thmt@@NeutralAction}. \cref{eq:case_agree_neutral5} follows from implication of the indicators. \cref{eq:case_agree_neutral6} follows from $Y\ci \Pi \mid X, \hat{Z}, M$ implied by the structure of \cref{thmt@@DataGeneratingProcess}. \cref{eq:case_agree_neutral7} follows from the law of total expectation. \cref{eq:case_agree_neutral8} follows from implication of the indicators. \\

  \cref{eq:case_agree_nonneutral_next_lowest} We have the following:
  First we recall that the lower selector takes the value $\lmon$ exactly when
   \begin{equation*}\bsL=\lmon:\quad \pi_e(x)\ne \hat{z}_0,\ \Sle(x,z)\ne\varnothing.\end{equation*}
   Thus we have: 
\begin{align}
	 \bLmon& \mathbb{E}[Y \mid X, \hat{Z} = \pi_e(X), M = f_M(X,\pi_e),Z=z]   \nonumber\\
	 & \geq \bLmon \mathbb{E}[Y|X,\Pi \in \Sletight(X,Z),Z=z]\label{eq:case_agree_nonneutral_next_lowest3}
\end{align}
  Where \cref{eq:case_agree_nonneutral_next_lowest3} follows from \cref{assumption:monotonicity} and the definition of $\Sletight(x)$ and the fact that it takes only the next-worst policies and not all worse policies. Thus, we have that the expected value of the outcome under the untrialed model is greater than or equal to the lower bound.

  \cref{eq:case_no_agree_optimal} We have the following: \\
  We recall that the lower selector takes the value $\lcor$ exactly when
  \begin{equation*}\bsL=\lcor:\quad \pi_e(x) = z,\ \Sle(x,z)=\varnothing,\ \Sdisagree(x,z)\ne\varnothing.\end{equation*}
  For all $i\in \Hclass$, we have the following:
\begin{align}
	 \bLcor \mathbb{E}[Y \mid X, \hat{Z} = \pi_e(X), M = f_M(X,\pi_e),Z=z]
	 & = \bLcor \mathbb{E}[Y|X,\hat{Z} = z, M = f_M(X,\pi_e),Z=z]                        \label{eq:case_no_agree_optimal2}\\
    &\geq \bLcor \max_{i\in \Sdisagree(X,Z)}\mathbb{E}[Y|X,\Pi = i,Z=z] \label{eq:case_no_agree_optimal5}
\end{align}
\cref{eq:case_no_agree_optimal2} holds by implication of the indicators.
\cref{eq:case_no_agree_optimal5} follows by \cref{assumptions:correct}, since $\pi_e(x)=z$, while every $\pi_i \in \Sdisagree(X,Z)$ has $\pi_i(x) \ne z$ by definition, so each such model takes an incorrect prediction. \\

  \cref{eq:case_otherwise} We have the following:
  \begin{equation}
      \bLdef \mathbb{E}[Y \mid X, \hat{Z} = \pi_e(X), M = f_M(X,\pi_e),Z=z] \geq Y_{min} \label{eq:case_otherwise2} 
  \end{equation} 
  Which holds by \cref{thmt@@BoundedOutcomes}. \\

\subsection{Proof of Theorem 4.1 (Upper Bound)}
Analogous to the lower bound, $\bsU$ is a function defined by first-match \texttt{cases} (\cref{def:indicators}), so exactly one of the indicators $\mathbf{1}\{\bsU=k\}$, $k\in\{\lneu,\lmon,\lcor,\ldef\}$, equals $1$, and they are mutually exclusive and exhaustive. \\
\begin{align}
	1 & =  \bUneu +\bUmon + \bUcor + \bUdef\label{eq: usplit}
\end{align}
From the previous computation, we have the following:
\begin{align}
	\mathbb{E}[Y(\hat{Z} = \pi_e(X), M = f_M(X,\pi_e), Z = z)]
	 & = \mathbb{E}[\bUneu \mathbb{E}[Y \mid X, \hat{Z} = \pi_e(X), M = f_M(X,\pi_e),Z=z] \label{eq:ucase_agree_neutral} +                          \\
	 & \quad \bUmon \mathbb{E}[Y \mid X, \hat{Z} = \pi_e(X), M = f_M(X,\pi_e),Z=z] \label{eq:ucase_agree_nonneutral_next_lowest} +                          \\
	 & \quad \bUcor \mathbb{E}[Y \mid X, \hat{Z} = \pi_e(X), M = f_M(X,\pi_e),Z=z]  \label{eq:ucase_no_agree_optimal} +                            \\
	 & \quad \bUdef \mathbb{E}[Y \mid X, \hat{Z} = \pi_e(X), M = f_M(X,\pi_e),Z=z]] \label{eq:ucase_otherwise}
\end{align}
Using the equivalent indicator statements established in \cref{eq: usplit}, we have that each line is less than or equal to the corresponding line in the upper bound. \\

  \cref{eq:ucase_agree_neutral} Identical to the proof in the lower bound. \\

  \cref{eq:ucase_agree_nonneutral_next_lowest} We recall that the upper selector takes the value $\lmon$ exactly when
   \begin{equation*}\bsU=\lmon:\quad \pi_e(x)\ne \hat{z}_0,\ \Sge(x,z) \ne \varnothing.\end{equation*}
   Thus we have:

\begin{align}
	 & \bUmon \mathbb{E}[Y \mid X, \hat{Z} = \pi_e(X), M = f_M(X,\pi_e), Z=z] \nonumber\\
	 & \leq \bUmon \mathbb{E}[Y|X,\Pi \in \Sgetight(X,Z), Z=z]\label{eq:ucaseupper2}
\end{align}
\cref{eq:ucaseupper2} holds by \cref{assumption:monotonicity} and the definition of $\Sgetight(x,z)$ and conditional independence. \\\\

  \cref{eq:ucase_no_agree_optimal} We recall that the upper selector takes the value $\lcor$ exactly when
  \begin{equation*}\bsU=\lcor:\quad \pi_e(x) \ne z,\ \Sge(x,z)=\varnothing,\ \Scorrect(x,z)\ne \varnothing.\end{equation*}
  Thus, for all $i\in \Hclass$, we have the following: 
\begin{align}
	\bUcor \mathbb{E}[Y \mid X, \hat{Z} = \pi_e(X), M = f_M(X,\pi_e), Z=z]
	 & = \bUcor \mathbb{E}[Y|X,\hat{Z} \ne z, M = f_M(X,\pi_e), Z=z]                       \label{eq:ucase_no_agree_optimal1} \\
	 & \leq \bUcor \min_{i\in \Scorrect(X,Z)}\mathbb{E}[Y|X,\Pi = i, Z=z]\label{eq:ucase_no_agree_optimal4}
\end{align}
\cref{eq:ucase_no_agree_optimal1} holds by implication of the indicators. \cref{eq:ucase_no_agree_optimal4} follows from \cref{assumptions:correct}, since $\pi_e(x)\ne z$ while every $\pi_i \in \Scorrect(X,Z)$, we have $\pi_i(x) = z$. \\

  \cref{eq:ucase_otherwise} Holds by the definition of $Y_{max}$. \\

Thus, we have that the expected value of the outcome under the untrialed model is less than or equal to the upper bound.
\newpage

\section{Proof of Theorem 5.1} 
\label{app:estimator}
\MainResult*

\begin{proof}
We first investigate the lower bound. Recall the following lines (\cref{eq:d_hat}, \cref{eq:lambda_hat}, \cref{eq:S_tilde_leq_hat}, \cref{eq:phi_hat}).
We also define $\hat{\mu}_{\cS}(X,Z)$ as the estimator for $E[Y|X,\Pi\in \cS,Z]=\mu_{\cS}(X,Z)$, and $\Sdisagree(x,z)$ is the set of policies, indexed by $i$, whose outputs do not agree with the true label at $x$.\\

Recall that conceptually, $d_L(X,Z)$ is the index of the model with the highest expected outcome among those that do not agree with $\pi_e$ at $X$. $\Sletight(x,z)$ is the best-performing set of agreeing models with performance less than or equal to the untrialed model. ${\varphi}_L\bigl(X,Z; \hat{d}_L\bigr)$ is the estimator for the outcome under the model with index $d_L(X,Z)$.

First, we will demonstrate that $\hat{L}(\pi_e)$ is a consistent estimator of $L(\pi_e)$, as follows. Recall our definition of the estimator from \cref{def:estimator}, using the sets defined in \cref{def:partitions} and \cref{thmt@@BoundsTheorem}:
\begin{align}
\widehat{L}(\pi_e)=
\mathbb{E}_n\left[\widehat{\psi}_L(Y_i, X_i, \Pi_i, Z_i; \widehat{f}_M)\right]
&=
\mathbb{E}_n\!\Big[
    \bLneuhat\, \hat{h}(X,Y,\Sagree)
    \label{eq:myref}
    \\[4pt]
&\quad
    {}+\,\bLmonhat\, \hat{h}(X,Y,\Sletighthat)
    \label{eq:myref2}
\\[4pt]
&\quad
    {}+\,\bLcorhat\, {\varphi_L}(X,Z;\hat d)
    \nonumber
\\[4pt]
&\quad
    {}+\,\bLdefhat\, Y_{\min}
\Big].
    \nonumber
\end{align}
Where we use the following substitution for brevity: 
\begin{align*}
	\hat{h}(X,Y, {\cS}) &:= \left(
        \widehat{\mu}_{{\cS}}(X,Z)
        +
        \frac{\mathbf{1}\{\Pi \in {\cS}\}}{P(\Pi \in \cS\mid X)}
        \bigl(Y-\widehat{\mu}_{{\cS}}(X,Z)\bigr)
    \right)
\end{align*}
Where we use $\Pi$ to denote the random variable and $\Hclass$ as its support. $\cS$ is used to denote a subset of $\Hclass$, and $\pi$ to denote an instance of $\Pi$. (See \cref{def:partitions} where we defined notation).

We first demonstrate the following Lemma, in which ${h}$, ${\varphi_L}$, and $d_L$ denote the functionals of \cref{def:est_defs,def:estimator} evaluated with the true nuisance functions $\mu_j$ and true performance $f_M$ in place of their estimates:
\begin{restatable}[Unbiased Estimator]{lemma}{UnbiasedLemma}\label{lem:unbiased}
\begin{align}
{L}(\pi_e)=
\mathbb{E}\left[{\psi}_L(Y_i, X_i, \Pi_i, Z_i; f_M)\right]
&=
\mathbb{E}\!\Big[
    \bLneu\, {h}(X,Y,\Sagree)
    \nonumber
    \\[4pt] 
&\quad
    {}+\,\bLmon\, {h}(X,Y,\Sletight)
    \nonumber
\\[4pt]
&\quad
    {}+\,\bLcor\, {\varphi_L}(X,Z;d)
    \nonumber
\\[4pt]
&\quad  
    {}+\,\bLdef\, Y_{\min}
\Big].
    \nonumber
\end{align}
\end{restatable}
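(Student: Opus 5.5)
The plan is to show that, with the true nuisance functions $\mu_j$, $\mu_{\cS}$ and the true performance $f_M$ plugged in, the pseudo-outcome $\psi_L$ has expectation exactly equal to the population bound $L(\pi_e)$ of \cref{def:bounds}. We argue branch by branch, conditioning on $(X,Z)$ and using the tower property.

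The first observation is that, once $f_M$ is the true performance function, the selector $\bsL(X,Z)$ and the sets $\Sagree(X)$, $\Sletight(X,Z)$, $\Sdisagree(X,Z)$ and $d_L(X,Z)$ are deterministic functions of $(X,Z)$. Their randomness comes only through $(X,Z)$. Hence
\begin{equation*}
\mathbb{E}[\psi_L]=\mathbb{E}\big[\textstyle\sum_k \mathbf{1}\{\bsL=k\}\,\mathbb{E}[\,\text{branch}_k\mid X,Z]\big],
\end{equation*}
and it suffices to compute each conditional expectation.

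The core step is an AIPW unbiasedness identity: for any fixed (possibly $(X,Z)$-dependent) set $\cS$,
\begin{equation*}
\mathbb{E}[h(X,Y,Z,\cS)\mid X,Z]=\mu_{\cS}(X,Z).
\end{equation*}
To prove it, write $\mathbb{E}\big[\mathbf{1}\{\Pi\in\cS\}(Y-\mu_{\cS})\mid X,Z\big]=P(\Pi\in\cS\mid X,Z)\big(\mathbb{E}[Y\mid X,Z,\Pi\in\cS]-\mu_{\cS}\big)=0$. We then divide by $P(\Pi\in\cS\mid X)$, which is known and positive by \cref{assump:known_prop}.

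One subtlety must be flagged explicitly: the propensity in the denominator conditions on $X$ only, while the indicator is conditioned on $(X,Z)$. This is harmless because the correction term is zero regardless of the weight. Alternatively, one can invoke $\Pi\ci(X,Z)$ from \cref{asmp:dgp}, since $D$ depends only on $\epsilon_D$, so $P(\Pi\in\cS\mid X,Z)=P(\Pi\in\cS\mid X)$.

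Applying the identity with $\cS=\Sagree(X)$ gives the $\lneu$ term, and with $\cS=\Sletight(X,Z)$ gives the $\lmon$ term. Applying it with the singleton $\cS=\{d_L(X,Z)\}$ shows $\mathbb{E}[\varphi_L(X,Z;d_L)\mid X,Z]=\mu_{d_L(X,Z)}(X,Z)=\max_{i\in\Sdisagree(X,Z)}\mu_i(X,Z)$, by the definition of $d_L$ as the argmax over the true $\mu$. This recovers the $\lcor$ term. The $\ldef$ term is the constant $Y_{\min}$.

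Summing these contributions and taking the outer expectation reproduces exactly the expression for $L(\pi_e)$ in \cref{def:bounds}.

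The main obstacle is keeping the conditioning clean:
\begin{enumerate*}[label=(\roman*)]
\item the sets and $d_L$ must be $\sigma(X,Z)$-measurable so that they can be pulled out of the inner expectation, which is why true $f_M$ and $\mu$ are used;
\item there must be positivity on every selected set; and
\item the mismatch between conditioning on $X$ in the propensity and on $(X,Z)$ in $\mu$ must be handled, which randomization of $D$ resolves.
\end{enumerate*}
The upper-bound analogue for $\psi_U$ follows identically, with $\Sgetight$ in place of $\Sletight$ and the argmin over $\Scorrect$ in place of the argmax over $\Sdisagree$.
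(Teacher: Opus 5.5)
Your proposal is correct and follows essentially the same route as the paper: split $\psi_L$ by the $(X,Z)$-measurable branch indicators, condition on $(X,Z)$, and show each AIPW correction term has conditional mean zero, so that only $\mu_{\Sagree}$, $\mu_{\Sletight}$, $\mu_{d_L(X,Z)}=\max_{i\in\Sdisagree(X,Z)}\mu_i$, and $Y_{\min}$ survive. Your observation that the correction vanishes whatever the weight, because $1/P(\Pi\in\cS\mid X)$ is $(X,Z)$-measurable, is a slightly more direct justification than the paper's appeal to $P(\Pi=j\mid X,Z)=P(\Pi=j\mid X)$; the argument is otherwise the same.
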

By linearity of expectation, we have: 
\begin{align}
&=
\mathbb{E}\!\Big[
    \bLneu\, {h}(X,Y,\Sagree)\Big]
    \label{eq:lote1}
    \\[4pt] 
&\quad
    {}+\mathbb{E}\!\Big[\,\bLmon\, {h}(X,Y,\Sletight)\Big]
    \label{eq:lote2}
\\[4pt] 
&\quad
    {}+\mathbb{E}\!\Big[\,\bLcor\, {\varphi_L}(X,Z;d)\Big]
    \label{eq:lote3}
\\[4pt]
&\quad
    {}+\mathbb{E}\!\Big[\,\bLdef\, Y_{\min}
\Big].
    \label{eq:lote4}
\end{align}
For \cref{eq:lote1}-\cref{eq:lote3}, each term consists of a regression term and a correction term as follows: 
\begin{equation}
    \mathbb{E}[\mathbf{1}\{\cdot\}(\mu(X,Z) + \lambda(X,Z))]
\end{equation}
$\mu$ is defined as the corresponding conditional expectation term in \cref{def:bounds}. For $\lambda$:
\begin{align*}
    \mathbb{E}[\mathbf{1}\{\cdot\}\lambda(X,Z)]&=\mathbb{E}[\mathbb{E}[\mathbf{1}\{\cdot\}\lambda(X,Z)\mid X,Z]]\\
    &=\mathbb{E}[\mathbf{1}\{\cdot\}\mathbb{E}[\lambda(X,Z)\mid X,Z]]\\
    &=\mathbb{E}[\mathbf{1}\{\cdot\}0]\\
    &=0
\end{align*}
Where the first line follows by the tower property and the fact that the branch selector $\mathbf{1}\{\cdot\}$ is measurable with respect to $(X,Z)$. The second line follows from condition 2 of \cref{lem:byunv}.
Thus, we have:
\begin{align*}
    {L}(\pi_e)=
\mathbb{E}\left[{\psi}_L(Y_i, X_i, \Pi_i, Z_i; f_M)\right]
&=
\mathbb{E}\!\Big[
    \bLneu\, {\mu}_{{\Sagree}}(X,Z)
    \nonumber
    \\[4pt]
&\quad
    {}+\,\bLmon\, {\mu}_{{\Sletight}}(X,Z)
    \nonumber
\\[4pt]
&\quad
    {}+\,\bLcor\, {\mu_{d_L(X,Z)}}(X,Z)
    \nonumber
\\[4pt]
&\quad
    {}+\,\bLdef\, Y_{\min}
\Big].
    \nonumber
\end{align*}
Which is precisely $L(\pi_e)$. The proof for the upper bound is analogous.

We have (omitting the arguments $(Y_i, X_i, \Pi_i, Z_i)$ for brevity, retaining only the performance function after the semicolon; the unhatted $\psi_L(\,\cdot\,; f_M)$ denotes the pseudo-outcome computed with the true nuisance functions and true performance):
\begin{align}
	\widehat{L}(\pi_e)- L(\pi_e) & = \mathbb{E}_n\left[\widehat{\psi}_L(Y_i, X_i, \Pi_i, Z_i; \widehat{f}_M)\right] - \mathbb{E}\left[\psi_L(Y, X, \Pi, Z; f_M)\right]              \label{lemma_use}\\
	                             & = \underbrace{\mathbb{E}_n\left[\widehat{\psi}_L(\,\cdot\,; \widehat{f}_M)\right] -\mathbb{E}_n\left[\widehat{\psi}_L(\,\cdot\,; f_M)\right]}_{R_1}
	+\underbrace{\mathbb{E}_n\left[\widehat{\psi}_L(\,\cdot\,; f_M)\right]- \mathbb{E}\left[\psi_L(\,\cdot\,; f_M)\right]}_{R_2}\nonumber
\end{align}
Where \cref{lemma_use} follows from \cref{lem:unbiased}.

We will demonstrate
\begin{align*}
   \widehat{L}(\pi_e)- L(\pi_e) &= R_1+R_2\\
   &= O_P(n^{-1/2})
\end{align*}
We will focus on $R_2$ first.\\
\begin{align*}
	R_2 
	= & \underbrace{(\mathbb{E}_n[\bLneu\hat{h}(X,Y,\Sagree)] - \mathbb{E}[\bLneu\mathbb{E}[Y|X,\Pi\in \Sagree,Z]])    }_{R_{2,1}}                                                                                                                                                                                                                                \\
	    & +     \underbrace{(\mathbb{E}_n[\bLmon\hat{h}(X,Y,\Sletight)] - \mathbb{E}[\bLmon\mathbb{E}[Y|X,\Pi \in \Sletight,Z]])}_{R_{2,2}}                                                                                                                                     \\
	    & + \underbrace{(\mathbb{E}_n[\bLcor{\varphi_L}(X,Z; \hat{d})] - \mathbb{E}[\bLcor\max_{i\in\Sdisagree(X,Z)}\mathbb{E}[Y|X,\Pi = i,Z]])}_{R_{2,3}}                                                                                             \\
	    & + \underbrace{Y_{min}(\mathbb{E}_n[\bLdef] - \mathbb{E}[\bLdef])   }_{R_{2,4}}  \\
	    & = R_{2,1} + R_{2,2} + R_{2,3} + R_{2,4}
\end{align*}
We will leave $R_{2,4}$ as is. We will consider $R_{2,3}$ first. We note that the proofs for $R_{2,1}, R_{2,2}$ are identical except for the sets $\cS$ being considered, as they can be interpreted as being a maximum over a subset of $\Hclass$ that contains one element. For example: 
\begin{equation}
   \mathbb{E}[\bLneu\mathbb{E}[Y|X,\Pi\in \Sagree,Z]] = \mathbb{E}[\bLneu\max_{j\in \{1\}}\mathbb{E}[Y|X,\Pi\in \Sagree,Z]]
\end{equation}

The term $R_{2,3}$ corresponds to the following branch in the lower bound:
\begin{equation*}\mathbb{E}[\bLcor\max_{i\in\Sdisagree(X,Z)}\mathbb{E}[Y|X,\Pi = i,Z]]\end{equation*}
We note this term has an \enquote{expectation of max} form. This structure has non-smooth properties that prevent straightforward estimation via plugins or other methods. As a result, we can utilize an augmented inverse probability-weighted estimator. Previous literature exists on similar topics. For example, \cite{byun2024} considers a problem of using AIPW to estimate the expected maximum of a set of functions. We will adapt their results to our setting. 

\begin{definition}\label{def:byun_class}
	Consider the following class of estimators.  Let
   $\theta_j(W;P), \lambda_j(W;P)$ for all $j\in \{1, \dots, J\}$ represent some arbitrary set of smooth functions and their associated influence functions, where $W$ is the observed data and $P$ is the true data distribution. Use of $\widehat{P}$ indicates the parameter was taken with respect to an estimated distribution. We further define the following functionals:  
     \begin{align}
         \theta_j & := \mathbb{E}[\theta_j(W;P)] \label{eq:theta_j}\\
         \widehat{\theta}_j & := \mathbb{E}_n[\theta_j(W;\widehat{P})] \label{eq:theta_hat_j}\\
         \psi & := \mathbb{E}\left[\max_{j\in \{1, \dots, J\}} \theta_j(W;P) \right] = \mathbb{E}\left[\theta_{d(W)}(W;P)\right]\label{eq:psi}\\
         d(W) &:= \arg\max_{j\in \{1, \dots, J\}} \theta_j(W;P) \label{eq:d}\\
         \widehat{\psi}_j & := \mathbb{E}_n[\theta_j(W;\widehat{P}) + \lambda_j(W;\widehat{P})] \label{eq:psi_hat_j}\\
         \widehat{\psi} & := \mathbb{E}_n[\varphi(W;\widehat{P}, \widehat{d})] \label{eq:psi_hat}\\ 
         \varphi(W;\widehat{P}, \widehat{d}) & := \theta_{\widehat{d}(W)}(W;\widehat{P}) + \lambda_{\widehat{d}(W)}(W;\widehat{P}) \label{eq:varphi_def}\\
         \widehat{d}(W) &:= \arg\max_{j\in \{1, \dots, J\}} \theta_j(W;\widehat{P}) \label{eq:d_hat2}
      \end{align}

\end{definition}
We note that these definitions are copied verbatim from \citet{byun2024}. We can now state their main lemma:
\begin{restatable}[Lemma 10 and 11 of \citet{byun2024}]{lemma}{ByunLemmaVerbatim}\label{lem:byunv}
	{For the class of estimators defined in \cref{def:byun_class}, if the following conditions hold}:  
\begin{enumerate}
   \item For every $j \in \{1, \dots, J\}, \lambda_j(W;P)$ and $\theta_j(W;P)$ are both uniformly bounded by constants with respect to $n$.
   \item For every \( j \in \{1, \dots, J\} \), \(\mathbb{E}[\lambda_j(W;P)]=0\)
   \item For every \( j \in \{1, \dots, J\} \), \(\|\widehat{\theta}_j- \theta_j\| = o_P(1)\)
   \item There exists \( \alpha > 0 \) such that: \(P\left[\min_{j\ne d(W)}|\theta_{d(W)}(W)-\theta_j(W)|\leq t\right]\lesssim t^{\alpha}\)
   \item In \cref{eq:psi_hat}, the expectation is taken with respect to \(\widehat{P}_1\), while the estimator \(\varphi(W;\widehat{P}_2, \widehat{d})\) uses an independent sample \(\widehat{P}_2\).
\end{enumerate}  
Then the following is true: 
\begin{align}
   \widehat{\psi}-\psi &= \mathbb{E}_n[\varphi(W;\ P, d)] - \mathbb{E}[\varphi(W; P, d)] \\
   &\quad + O_P\left(\|\widehat{\theta}_j- \theta_j\|_{\infty}^{1+\alpha} + \max_{j\in \{1, \dots, J\}}\mathbb{E}\left[\theta_j(W;\widehat{P}) + \lambda_j(W;\widehat{P}) - \theta_j(W;P)\right]\right) \\
   &\quad + o_P(n^{-1/2})
\end{align}
If two additional conditions hold:
\begin{enumerate}
   \item[6.] \(\|\widehat{\theta}_j- \theta_j\|_{\infty}^{1+\alpha} = o_P(n^{-1/2})\)
   \item[7.] For each \( j \in \{1, \dots, J\} \), \(\mathbb{E}[\theta_j(W;\widehat{P}) + \lambda_j(W;\widehat{P}) - \theta_j(W;P)] = o_P(n^{-1/2})\)
\end{enumerate}
Then the following simplification holds:
\begin{align}
   \widehat{\psi}-\psi &= \mathbb{E}_n[\varphi(W;\ P, d)] - \mathbb{E}[\varphi(W; P, d)] + o_P(n^{-1/2})
\end{align}
And the error is asymptotically normal: 
\begin{equation}
   \sqrt{n}(\widehat{\psi}-\psi) \xrightarrow{d} N(0, Var(\varphi(W; P, d)))
\end{equation}
\end{restatable}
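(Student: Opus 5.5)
The plan is to decompose the estimation error into a centered empirical average, an empirical-process remainder and a conditional bias term, and to control the non-smooth $\arg\max$ through the margin condition (condition 4). Throughout I condition on the auxiliary sample $\widehat{P}_2$ used to build $\theta_j(\cdot;\widehat{P})$, $\lambda_j(\cdot;\widehat{P})$ and $\widehat{d}$ (condition 5). Given that sample, $\varphi(\cdot;\widehat{P},\widehat{d})$ is a fixed function, and $\mathbb{E}_n$ averages it over the independent sample $\widehat{P}_1$. I write
\begin{align*}
\widehat{\psi}-\psi &= \bigl(\mathbb{E}_n-\mathbb{E}\bigr)[\varphi(W;P,d)] + \bigl(\mathbb{E}_n-\mathbb{E}\bigr)[\varphi(W;\widehat{P},\widehat{d})-\varphi(W;P,d)] \\
&\quad + \mathbb{E}\bigl[\varphi(W;\widehat{P},\widehat{d})-\varphi(W;P,d)\bigr].
\end{align*}
The last expectation is taken over a fresh $W$ with $\widehat{P}$ held fixed. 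Condition 2 gives $\mathbb{E}[\varphi(W;P,d)]=\mathbb{E}[\theta_d(W;P)]=\psi$, which justifies the first term.

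\textbf{Empirical-process term.} Conditionally on $\widehat{P}_2$, the middle term is a mean-zero average of i.i.d.\ bounded variables. By Chebyshev it is $O_P\bigl(n^{-1/2}\|\varphi(\cdot;\widehat{P},\widehat{d})-\varphi(\cdot;P,d)\|\bigr)$. I will show this $L^2$ norm is $o_P(1)$ by splitting on the event $\{\widehat{d}=d\}$ and its complement.
\begin{itemize}
\item On $\{\widehat{d}=d\}$, the difference is controlled by $\|\theta_j(\cdot;\widehat{P})-\theta_j\|$ and the analogous norm for $\lambda_j$, which I take to be $o_P(1)$ by condition 3. This reads condition 3 as nuisance consistency; note it also needs convergence of the influence-function estimates.
\item On $\{\widehat{d}\neq d\}$, the integrand is uniformly bounded by condition 1, and the event has vanishing probability by the margin argument below.
\end{itemize}
Together these make the middle term $o_P(n^{-1/2})$.

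\textbf{Bias term.} I split the last expectation as
\[
\mathbb{E}\bigl[\theta_{\widehat{d}}(W;\widehat{P})+\lambda_{\widehat{d}}(W;\widehat{P})-\theta_{\widehat{d}}(W;P)\bigr] + \mathbb{E}\bigl[\theta_{\widehat{d}}(W;P)-\theta_{d}(W;P)\bigr].
\]
For the first piece, write $\widehat{d}=\sum_j \mathbf{1}\{\widehat{d}=j\}\,j$ and use that $\widehat{d}$ depends only on the covariate part of $W$. Iterating expectations over the remaining part, the piece is bounded by a sum over the finitely many $j$ of the per-$j$ bias terms $\mathbb{E}[\theta_j(W;\widehat{P})+\lambda_j(W;\widehat{P})-\theta_j(W;P)]$, interpreted conditionally. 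This gives the $\max_j$ term up to the constant $J$.

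For the second piece, which is always $\le 0$, note what happens on $\{\widehat{d}\ne d\}$. There $\theta_{\widehat d}(\widehat P)\ge\theta_d(\widehat P)$, so
\[
0\le \theta_d-\theta_{\widehat d}\le 2\|\widehat\theta-\theta\|_\infty,
\]
and the margin $\min_{j\ne d}|\theta_d-\theta_j|$ is itself at most $2\|\widehat\theta-\theta\|_\infty$. Hence
\[
\bigl|\mathbb{E}[\theta_{\widehat d}-\theta_d]\bigr|\le 2\|\widehat\theta-\theta\|_\infty\, P\bigl(\text{margin}\le 2\|\widehat\theta-\theta\|_\infty\bigr)\lesssim \|\widehat\theta-\theta\|_\infty^{1+\alpha}
\]
by condition 4. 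The same probability bound shows $P(\widehat d\ne d)\to0$, which was needed in the empirical-process step. This yields the first displayed expansion.

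\textbf{Conclusion under conditions 6 and 7.} Conditions 6 and 7 make both bias contributions $o_P(n^{-1/2})$, leaving $\widehat\psi-\psi=(\mathbb{E}_n-\mathbb{E})[\varphi(W;P,d)]+o_P(n^{-1/2})$. Since $\varphi(W;P,d)$ is bounded by condition 1, the Lindeberg CLT and Slutsky's lemma give $\sqrt n(\widehat\psi-\psi)\xrightarrow{d}N(0,\Var(\varphi(W;P,d)))$.

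The main obstacle is the bias step with a data-dependent selector $\widehat d(W)$. The correction $\lambda_{\widehat d}$ only has the right conditional mean if $\widehat d$ is measurable with respect to the conditioning variables and independent of the evaluation sample. This is exactly why cross-fitting (condition 5) and covariate-measurability of $\widehat d$ must be used explicitly there.
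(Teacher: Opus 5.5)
The paper does not prove this lemma. It imports it verbatim from \citet{byun2024} and only checks the hypotheses for its own setting in \cref{lem:byun}. Your argument is the standard proof from that literature. You use cross-fitting plus a conditional Chebyshev bound for the empirical-process term. You then split the bias into the conditional bias of $\theta_{\widehat d}(W;\widehat P)+\lambda_{\widehat d}(W;\widehat P)$ and the selection error $\mathbb{E}[\theta_{\widehat d}(W;P)-\theta_d(W;P)]$, which the margin condition turns into $\|\widehat\theta-\theta\|_\infty^{1+\alpha}$. The argument is correct.

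What writing it out buys over the citation is that it shows, exactly where you hedge, that the hypotheses must be read more strongly than stated. Let $V$ be the part of $W$ on which the $\theta_j$, hence $d$ and $\widehat d$, depend; in the paper's application $V=(X,Z)$.

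\begin{itemize}
\item \emph{Condition 2.} Your identity $\mathbb{E}[\varphi(W;P,d)]=\psi$ needs more than the unconditional condition 2, because $d$ varies with $V$. The natural form is $\mathbb{E}[\lambda_j(W;P)\mid V]=0$, which is what \cref{lem:byun} states.
\item \emph{The bias remainder.} It must be $\max_j\mathbb{E}\bigl|b_j(V)\bigr|$, with $b_j(V):=\mathbb{E}[\theta_j(W;\widehat P)+\lambda_j(W;\widehat P)-\theta_j(W;P)\mid V]$. A signed mean $\mathbb{E}[b_j(V)]$ can vanish by cancellation while $\sum_j\mathbb{E}[\mathbf{1}\{\widehat d=j\}\,b_j(V)]$ is not $o_P(n^{-1/2})$. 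So condition 7 as literally written would not suffice, and your step giving the $\max_j$ term up to the factor $J$ is valid only for the absolute version.
\item \emph{Condition 3.} It literally compares the scalars $\widehat\theta_j=\mathbb{E}_n[\theta_j(W;\widehat P)]$ and $\theta_j$. As you note, it must be upgraded to $L^2$ consistency of both $\theta_j(\cdot;\widehat P)$ and $\lambda_j(\cdot;\widehat P)$.
\end{itemize}

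The paper's application satisfies all three strengthened forms. Known propensities make $b_j\equiv 0$ (the factor $1-1$ in its verification of condition 7). Positivity gives $\|\lambda_j(\cdot;\widehat P)-\lambda_j(\cdot;P)\|\lesssim\|\widehat\mu_j-\mu_j\|$.

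One small repair is needed. You get $P(\widehat d\neq d)\to 0$ from the sup-norm margin bound, but conditions 1--5 supply no sup-norm consistency; only condition 6 does. Instead, set $e(W):=\max_j|\theta_j(W;\widehat P)-\theta_j(W;P)|$ and use the pointwise inclusion $\{\widehat d\neq d\}\subseteq\{\min_{j\neq d}|\theta_d(W;P)-\theta_j(W;P)|\le 2e(W)\}$. With the margin condition and Markov's inequality this gives $P(\widehat d\neq d)\lesssim t^{\alpha}+4t^{-2}\sum_j\|\theta_j(\cdot;\widehat P)-\theta_j(\cdot;P)\|^2$ for every $t>0$. Letting $t\to 0$ slowly shows that $L^2$ consistency suffices for the empirical-process step.
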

We recognize that our problem fits into this framework with the following identifications:
\begin{align*}
    W & \leftrightarrow (X,Z)\\
   \theta_j(W;P) & \leftrightarrow {\mu}_j(X,Z)\\
   \widehat{\theta}_j(W;\widehat{P}) & \leftrightarrow \hat{\mu}_j(X,Z)\\
   \theta_j & \leftrightarrow \mathbb{E}[{\mu}_j(X,Z)]\\
   \widehat{\theta_j} & \leftrightarrow \mathbb{E}_n[\hat{\mu}_j(X,Z)]\\
   \lambda_j(W;P) & \leftrightarrow \lambda_j(X,Z)\\
   d(W) & \leftrightarrow d_L(X,Z)\\
   \widehat{d}(W) & \leftrightarrow \hat{d}(X,Z)\\
   J & \leftrightarrow \Sdisagree(x,z)
\end{align*}
Where any unspecified functionals are defined analogously to their counterparts in \cref{eq:theta_j} - \cref{eq:d_hat2}. Here the conditioning argument $W$ of \citet{byun2024} is identified with the pair $(X,Z)$, since each branch of \cref{def:bounds} conditions on $(X,Z)$. Because model assignment is randomized, $\Pi \perp\!\!\!\perp Z \mid X$ and hence $P(\Pi=j\mid X,Z) = P(\Pi=j\mid X)$, so the propensity nuisance is unchanged and only the outcome regression $\mu_j$ acquires the $Z$ argument.
Thus, we can apply \cref{lem:byunv} to our setting,
\begin{restatable}{lemma}{ByunLemma}
\label{lem:byun}
Suppose the following conditions hold:
\begin{enumerate}
   \item For every $j \in \Sdisagree(x,z)$, $\lambda_j(X,Z)$ and $\hat{\mu}_j(X,Z)$ are both uniformly bounded by constants.
   \item $\mathbb{E}[\lambda_j(X,Z)\mid X,Z] = 0$ for all $j \in \Sdisagree(x,z).$
   \item For every \( j \in \Sdisagree(x,z) \),
\[
| \mathbb{E}_n[\hat{\mu}_j(X,Z)] - \mathbb{E}[{\mu}_j(X,Z)] | = o_P(1)
\]
   \item There exists \( \alpha > 0 \) such that:
\[
P\left[\min_{j\ne d_L(X,Z)}\left|\mu_{d_L(X,Z)}(X,Z)-{\mu}_{j}(X,Z)\right|\leq t\right] \leq t^{\alpha}
\]
   \item Independent samples are used for $\mathbb{E}_n\left[\widehat{\psi}_L(Y_i, X_i, \Pi_i, Z_i; \widehat{f}_M)\right]$ and ${\varphi_L}(X,Z; \hat{d})$.
\end{enumerate}
Then the difference between the estimator and the true parameter can be expressed as:
\begin{align*}
	\widehat{\psi}^l-\psi^l & = \mathbb{E}_n[{\varphi_L}(X,Z; d)] - \mathbb{E}[{\varphi_L}(X,Z; d)]                                                                                                                                    \\
	                        & + O_P\left(\| \mathbb{E}_n\left[\hat{\mu}_j(X,Z)\right] - \mathbb{E}[{\mu}_j(X,Z)]\|_{\infty}^{1+\alpha} + \max_{j\in \Sdisagree(x,z)}\mathbb{E}\left[\widehat{\mu}_{j}(X,Z) + \hat{\lambda}_{j}(X,Z) - {\mu}_{j}(X,Z)\right]\right) \\
	                        & + o_P(n^{-1/2})
\end{align*}
We can further simplify this with the following 2 additional conditions:
\begin{enumerate}
   \item[6.] The following holds:\[
\| \mathbb{E}_n[\hat{\mu}_j(X,Z)] - \mathbb{E}[{\mu}_j(X,Z)] \|_\infty^{1+\alpha}
= o_P(n^{-1/2})\]
$\quad \text{where } \alpha \text{ is the same as in \cref{assump:margin}}.
$
   \item[7.] For each \( j \in \Sdisagree(x,z) \),
\[
\mathbb{E}[\hat{\mu}_j(X,Z) + \hat{\lambda}_j(X,Z) - {\mu}_j(X,Z)] = o_P(n^{-1/2}).
\]
\end{enumerate}
Thus the following holds: 
\begin{align*}
	\widehat{\psi}^l-\psi^l & = \mathbb{E}_n[{\varphi_L}(X,Z; d)] - \mathbb{E}[{\varphi_L}(X,Z; d)] + o_P(n^{-1/2})
\end{align*}
\end{restatable}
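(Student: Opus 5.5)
The statement to prove is \cref{lem:byun}. My plan is to treat it as a direct instantiation of \cref{lem:byunv}, using the identifications listed just before it: $W\leftrightarrow(X,Z)$, $\theta_j\leftrightarrow\mu_j$, $\lambda_j\leftrightarrow\lambda_j$, $d\leftrightarrow d_L$, and index set $\{1,\dots,J\}\leftrightarrow\Sdisagree(x,z)$. The work is therefore to check that each of the hypotheses 1--5, and then 6--7, of \cref{lem:byunv} is implied by the correspondingly numbered hypothesis of \cref{lem:byun}. Once that is done, the two conclusions transfer verbatim.

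First I will fix the index set. $\Sdisagree(x,z)$ varies with $(x,z)$, whereas \cref{lem:byunv} uses a fixed $\{1,\dots,J\}$. I would handle this by working over the full $\Hclass$ and setting $\theta_j(X,Z)=-\infty$, or in practice a value below $Y_{\min}$, whenever $j\notin\Sdisagree(X,Z)$. Because $\Sdisagree$ is a deterministic function of $(X,Z)$ and the trialed predictions, the argmax is unchanged. Equivalently, I can multiply through by the $(X,Z)$-measurable indicator $\bLcor$, which is all that enters $R_{2,3}$.

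With that in place, the conditions match up as follows.
\begin{enumerate*}[label=(\roman*)]
\item Boundedness follows from \cref{assumption:bounded}, which bounds $\mu_j$, and from \cref{assump:known_prop}, whose positive known propensities bound the weights in $\lambda_j$.
\item For mean zero, I condition on $(X,Z)$ and use $\Pi\ci Z\mid X$ from randomization: $\mathbb{E}[\mathbf{1}\{\Pi=j\}(Y-\mu_j)\mid X,Z]=P(\Pi=j\mid X)\,(\mu_j-\mu_j)=0$. Iterating expectations then gives $\mathbb{E}[\lambda_j]=0$.
\item Consistency is the stated hypothesis 3.
\item The margin condition is exactly the first part of \cref{assump:margin}.
\item Sample splitting is hypothesis 5.
\end{enumerate*}
Applying the first conclusion of \cref{lem:byunv} then gives the displayed expansion with the $O_P$ remainder.

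For the simplified form, condition 6 is hypothesis 6. Condition 7 needs the bias term $\mathbb{E}[\hat\mu_j+\hat\lambda_j-\mu_j]$ to vanish at the required rate. With known propensities this term is in fact exactly zero conditional on the training fold. The reason is that $\mathbb{E}[\hat\lambda_j\mid X,Z]=\mu_j-\hat\mu_j$, so the sum telescopes; this is the standard double-robustness property with no propensity error. Substituting gives the $o_P(n^{-1/2})$ representation.

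The main obstacle I expect is the mismatch between the norms in \cref{lem:byunv}, namely $\|\hat\theta_j-\theta_j\|_\infty$ of the functions, and what the statement of \cref{lem:byun} actually assumes, namely differences of empirical means. I would read hypotheses 3 and 6 as statements about the functions $\hat\mu_j-\mu_j$ in the appropriate norm, in line with \cref{asmp:convergence_nuisance}. I would also note that the random, $(X,Z)$-dependent index set is the other delicate point, and that it is handled by the indicator/padding device described above.
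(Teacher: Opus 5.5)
Your proposal is correct and takes essentially the paper's route: the paper also obtains this lemma by substituting the identifications $W\leftrightarrow(X,Z)$, $\theta_j\leftrightarrow\mu_j$, $d\leftrightarrow d_L$, $J\leftrightarrow\Sdisagree(x,z)$ into \cref{lem:byunv}, and it checks the mean-zero and bias conditions with the same randomization ($\Pi\ci Z\mid X$) and known-propensity calculations you give, under which the condition-7 bias is exactly zero. Your padding of the $(X,Z)$-dependent index set and your reading of hypotheses 3 and 6 as function-level statements about $\widehat{\mu}_j-\mu_j$ go beyond what the paper makes explicit, and the second is genuinely needed: agreement of empirical means alone does not prevent $\hat{d}_L$ from differing from $d_L$ on a set of non-negligible probability.
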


We will demonstrate the conditions hold for $R_{2,3}$. 
	      \begin{enumerate}
		      \item (Boundedness): For every $j\in \Sdisagree(x,z)$, $\lambda_{j}(X,Z)$ and $ \hat{\mu}_j(X,Z)$ are both uniformly bounded by constants. \\

		            This is true by \cref{thmt@@BoundedOutcomes} and can be enforced by clipping the value of the nuisance function. \\
		      \item (Zero-mean Correction Term): $\mathbb{E}[\lambda_{j}(X,Z)|X,Z] = 0$ for all $j\in \Sdisagree(x,z)$. \\
                  \begin{align}
                  \mathbb{E}[\lambda_{j}(X,Z)|X,Z]
                                        & = \mathbb{E}\left[\frac{\mathbf{1}\{\Pi = j\}}{P(\Pi=j|X)}(Y- {g}_j(X,Z))\Big|X,Z\right]\label{eq:biascorr-47} \\
                                        & = \frac{\mathbb{E}\left[\mathbf{1}\{\Pi= j\}(Y- {g}_j(X,Z))|X,Z\right]}{P(\Pi= j|X)} \label{eq:biascorr-48} \\
                                        & = \frac{P\left(\Pi= j | X,Z\right)\left(\mathbb{E}\left[Y|X,\Pi= j,Z\right] - {g}_j(X,Z)\right)}{P(\Pi= j|X)} \label{eq:biascorr-49} \\
                                        & = \left(\mathbb{E}\left[Y|X,\Pi= j,Z\right] - {g}_j(X,Z)\right) \label{eq:biascorr-50} \\
                                        & = 0\label{eq:biascorr-51}
                  \end{align}
                  \cref{eq:biascorr-47} follows by the definition of $\lambda_{j}(X,Z)$. \cref{eq:biascorr-48} follows because the propensity is constant under the conditioning. \cref{eq:biascorr-49} follows because the expectation of the indicator is the probability of the event, and ${g}_j(X,Z)$ is a constant under the conditions. \cref{eq:biascorr-50} follows because randomized model assignment gives $\Pi \perp\!\!\!\perp Z \mid X$, so $P(\Pi= j|X,Z) = P(\Pi= j|X)$ and the ratio of propensities is one. \cref{eq:biascorr-51} follows by the definition of ${g}_j(X,Z)$ as a plug-in estimator for $\mathbb{E}[Y|X,\Pi= j,Z]$. \\
		      \item (Consistent Plug-in Estimator): For every $j\in \Sdisagree(x,z)$, \begin{equation*}\| \mathbb{E}_n\left[\hat{\mu}_j(X,Z)\right] - \mathbb{E}[{\mu}_j(X,Z)]\| = o_P(1)\end{equation*}\\
		            This will be satisfied by the validation of Condition 7, so discussion is deferred to there. \\
		      \item (Margin Condition): There exists $\alpha>0$ such that:
		            \begin{equation*}P\left[\min_{j\ne d_L(X,Z)}\left|\mu_{d_L(X,Z)}(X,Z)-{\mu}_{j}(X,Z)\right|\leq t\right] \leq t^{\alpha}\end{equation*}
		            This condition asserts that the probability of a small gap between the outcome regressions $\mu_j$ of the best two models is bounded polynomially. This holds by the first margin condition of \cref{assump:margin}.\\
		      \item (Independent Samples):\\

		            This holds by construction. As described in \cref{app:experiments}, data are split into $K\geq 2$ folds, with all nuisance functions ($\widehat{\mu}_j$ and $\widehat{f}_M$, etc.) fit on the first and the bounds estimated on the second. \\ \\
		      \item (Convergence of plug-in estimators)
		            \begin{equation*}\| \mathbb{E}_n\left[\hat{\mu}_j(X,Z)\right] - \mathbb{E}[{\mu}_j(X,Z)]\|_{\infty}^{1+\alpha}=o_P(n^{-\frac{1}{2}}), \text{where $\alpha$ is the same as in \cref{assump:margin}}. \end{equation*}
		            This is stating that the maximum difference between the empirical expectation of our machine learning-derived estimate from the true expected outcome, across all $j$, converges at $o_P(n^{-\frac{1}{2}})$ rate. We will prove this by demonstrating that all plug-in estimators converge sufficiently fast. \\

		            We want to show:
		            \begin{equation*}\forall j, \|\mathbb{E}_n\left[\hat{\mu}_j(X,Z)\right] - \mathbb{E}[{\mu}_j(X,Z)]\|^{1+\alpha} = o_P(n^{-\frac{1}{2}})\end{equation*}

		            Consider the following:

		            For each $j\in \Sdisagree(x,z)$, we have:
		            \begin{align}
			            \mathbb{E}_n\left[\hat{\mu}_j(X,Z)\right] - \mathbb{E}[{\mu}_j(X,Z)] & = \mathbb{E}_n\left[\hat{\mu}_j(X,Z)\right] - \mathbb{E}\left[\hat{\mu}_j(X,Z) \right] + \mathbb{E}\left[\hat{\mu}_j(X,Z)\right] - \mathbb{E}[{\mu}_j(X,Z)]     \\
			                                                                       & = (\mathbb{E}_n\left[\hat{\mu}_j(X,Z)\right] - \mathbb{E}\left[\hat{\mu}_j(X,Z) \right]) + (\mathbb{E}\left[\hat{\mu}_j(X,Z)\right] - \mathbb{E}[{\mu}_j(X,Z)])
		            \end{align}
		            Looking solely at the first term, we have:
		            \begin{align}
			            \mathbb{E}_n\left[\hat{\mu}_j(X,Z)\right] - \mathbb{E}\left[\hat{\mu}_j(X,Z) \right] & = \frac{1}{n}\sum_{i=1}^{n}\left(\hat{g}_j(X_i,Z_i) - \mathbb{E}\left[\hat{g}_j(X_i,Z_i)\right]\right) \\
			            \sqrt{n}\left(\mathbb{E}_n\left[\hat{\mu}_j(X,Z)\right] - \mathbb{E}\left[\hat{\mu}_j(X,Z) \right]\right) & \xrightarrow{d} N(0, \sigma^2)                                                       \\
			            \implies \mathbb{E}_n\left[\hat{\mu}_j(X,Z)\right] - \mathbb{E}\left[\hat{\mu}_j(X,Z) \right] & = O_p(n^{-\frac{1}{2}})
		            \end{align}
		            Which follows from the CLT. \\

		            The second term is the bias of the machine learning estimator. In order for the whole norm to converge at $o_P(n^{-\frac{1}{2}})$, we require this term to converge at a rate of $o_P(n^{-\frac{1}{2(1+\alpha)}})$. For a reasonable $\alpha$, such as $\alpha=1$, this is satisfied by a convergence rate of $o_P(n^{-\frac{1}{4}})$. It is reasonable to assume our model converges at this rate. \\
		            We now have:
\begin{align}
   \|O(n^{-\frac{1}{2}}) + o_P(n^{-\frac{1}{2(1+\alpha)}})\|^{1+\alpha}
     & \leq \big(\|O(n^{-\frac{1}{2}})\| + \|o_P(n^{-\frac{1}{2(1+\alpha)}})\|\big)^{1+\alpha} \label{eq:tri}\\
     & \leq 2^{\alpha}\big(\|O(n^{-\frac{1}{2}})\|^{1+\alpha} + \|o_P(n^{-\frac{1}{2(1+\alpha)}})\|^{1+\alpha}\big) \label{eq:cauchy}\\
     & = O(n^{-\frac{1+\alpha}{2}}) + o_P(n^{-\frac{1}{2}}) \label{eq:alpha}\\
     & = o_P(n^{-\frac{1}{2}}) \label{eq:alpha2}
\end{align}
		            Where \cref{eq:tri} follows from the triangle inequality. \cref{eq:cauchy} follows from the $C_r$ inequality with $r=1+\alpha$ and the constant gets absorbed. \cref{eq:alpha} evaluates the rates. \cref{eq:alpha2} since $\alpha>0$. Since this holds for all $j\in \Sdisagree(x,z)$, we have shown our result. \\

		      \item (Convergence of bias-corrected estimator) For each \begin{equation*}j\in \Sdisagree(x,z), \mathbb{E}\left[\widehat{\mu}_{j}(X,Z) + \hat{\lambda}_{j}(X,Z) - {\mu}_{j}(X,Z)\right] = o_P(n^{-\frac{1}{2}}).\end{equation*}
		            Where $\Sdisagree(x,z)$ is the set of models in the branch. \\

		            For clarity, we rewrite the full bias-corrected estimator as:
		            \begin{equation*}{\varphi_L}(X,Z; \hat{d}) = \widehat{\mu}_{\hat{d}}(X,Z) + \frac{\mathbf{1}\left\{\pi=\hat{d}\right\}}{P(\Pi= \hat{d}|X)}(Y-\widehat{\mu}_{\hat{d}}(X,Z))\end{equation*}
		            Thus, we have the following nuisance functions:
		            \begin{equation*}{\mu}_{j}(X,Z), \quad P(\Pi= j|X)\end{equation*}
		            We will assume that ${\mu}_j(X,Z)$ converges at a rate of $o_P(n^{-\frac{1}{4}})$. We also assume that $P(\Pi= j|X)$ is known. \\
		            We have, for arbitrary $j\in \Sdisagree(x,z)$:
		            \begin{align}
			            \mathbb{E}\left[\widehat{\mu}_{j}(X,Z) + \hat{\lambda}_{j}(X,Z) - {\mu}_{j}(X,Z)\right]
			             & = \mathbb{E}\left[\widehat{\mu}_{j}(X,Z)-{\mu}_{j}(X,Z) + \frac{\mathbf{1}\left\{\Pi= j\right\}}{{P}(\Pi= j|X)}(Y-\widehat{\mu}_{j}(X,Z)) \right]  \label{eq:biascorr-1}                             \\
			             & = \mathbb{E}\left[\mathbb{E}\left[\widehat{\mu}_{j}(X,Z)-{\mu}_{j}(X,Z) + \frac{\mathbf{1}\left\{\Pi= j\right\}}{{P}(\Pi= j|X)}(Y-\widehat{\mu}_{j}(X,Z))\mid X,Z\right]\right]  \label{eq:biascorr-2} \\
			             & = \mathbb{E}\left[\widehat{\mu}_{j}(X,Z)-{\mu}_{j}(X,Z)+\frac{1}{{P}(\Pi= j|X)}\mathbb{E}\left[\mathbf{1}\left\{\Pi= j\right\}(Y-\widehat{\mu}_{j}(X,Z)) \mid X,Z\right]\right] \label{eq:biascorr-3}  \\
			             & = \mathbb{E}\left[\widehat{\mu}_{j}(X,Z)-{\mu}_{j}(X,Z)+\frac{P\left(\Pi= j|X,Z\right)}{{P}(\Pi= j|X)}\mathbb{E}\left[(Y-\widehat{\mu}_{j}(X,Z)) \mid X, \Pi= j, Z\right]\right]  \label{eq:biascorr-4}      \\
			             & = \mathbb{E}\left[\widehat{\mu}_{j}(X,Z)-{\mu}_{j}(X,Z)+\frac{P\left(\Pi= j|X,Z\right)}{{P}(\Pi= j|X)}\left({\mu}_j(X,Z)-\widehat{\mu}_{j}(X,Z)\right)\right] \label{eq:biascorr-6}                          \\
			             & = \mathbb{E}\left[\left(\widehat{\mu}_{j}(X,Z)-{\mu}_{j}(X,Z)\right)\left(1-\frac{P\left(\Pi= j|X,Z\right)}{{P}(\Pi= j|X)}\right)\right] \label{eq:biascorr-7}                                 \\
			             & = \mathbb{E}\left[\left(\widehat{\mu}_{j}(X,Z)-{\mu}_{j}(X,Z)\right)\left(1-1\right)\right] \label{eq:biascorr-7b}                                                                     \\
			             & = 0                                                                                                                                                                      \\
			             & = o_P(n^{-\frac{1}{2}}) \label{eq:biascorr-13}
		            \end{align}
		            \cref{eq:biascorr-1} follows from \cref{assump:known_prop}. \cref{eq:biascorr-2} follows from the law of iterated expectation. \cref{eq:biascorr-3} follows because the relevant expressions are constants under the conditioning. \cref{eq:biascorr-4} follows from removing the indicator from the definition of conditional expectation.
		            \cref{eq:biascorr-6} follows by the definition of ${\mu}_j(X,Z)$.
		            \cref{eq:biascorr-7} is an algebraic rearrangement; \cref{eq:biascorr-7b} uses $P(\Pi= j|X,Z) = P(\Pi= j|X)$, which holds because randomized model assignment gives $\Pi \perp\!\!\!\perp Z \mid X$. \cref{eq:biascorr-13} follows vacuously from the definition of convergence in probability.
		            \\
		            Thus, we have verified Condition 7. This also verifies Condition 3. Since the bias corrected estimator converges at a rate of $o_P(n^{-\frac{1}{2}})$ to the true value, it follows that the uncorrected plug-in estimator is at least consistent, since the correction term has mean 0. \\
	      \end{enumerate}
         		            We have thus verified all necessary conditions. As a result, the following equality holds: 
                           \begin{equation*}
                               R_{2,3} = \mathbb{E}_n[\bLcor\varphi(X,Z;\hat{d})]-\mathbb{E}[\bLcor\varphi(X,Z;\hat{d})] + o_P(n^{-1/2})
                           \end{equation*}
                           It follows that the same applies to $R_{2,1}, R_{2,2}, R_{2,4}$, since the $\hat{h}(X,Y,\cS)$ functionals are identical in structure to $\varphi(x,z;\hat{d})$, except for the subscript of $\hat{\mu}$, which instead is a single nuisance function that converges to the true value at $o_P(n^{-\frac{1}{2}})$ under \cref{asmp:convergence_nuisance}, allowing the whole function $\hat{h}(X,Y,\cS)$ to converge at $o_P(n^{-\frac{1}{2}})$ rate under analogous logic.
                           
                           Thus we have: 
                           \begin{align*}
                               R_2 =&\mathbb{E}_n[\bLneu h(X,Y,\Sagree)]-\mathbb{E}[\bLneu h(X,Y,\Sagree)] +\\
                               &\mathbb{E}_n[\bLmon h(X,Y,\Sletight)]-\mathbb{E}[\bLmon h(X,Y,\Sletight)]+ \\
                               &\mathbb{E}_n[\bLcor\varphi(X,Z;{d})]-\mathbb{E}[\bLcor\varphi(X,Z;{d})] +\\
                               &  \mathbb{E}_n[\bLdef Y_{min}]-\mathbb{E}[\bLdef Y_{min}] + o_P(n^{-1/2})
                           \end{align*}
We will now consider $R_1 = \mathbb{E}_n\left[\widehat{\psi}_L(\,\cdot\,; \widehat{f}_M)\right] -\mathbb{E}_n\left[\widehat{\psi}_L(\,\cdot\,; f_M)\right]$:\\\\
The theoretical construction of ${\psi}_L$ involves the construction of the set $\Sle(x,z)$, defined in \cref{def:partitions} as:
\begin{align*}
\Sle(x,z) &= \{\pi_i \in \Sagree(x) \mid f_M(x, \pi_i) \leq f_M(x,\pi_e),\ \pi_e(x)=z\}\\
&\quad\cup\{\pi_i \in \Sagree(x) \mid f_M(x, \pi_i) \geq f_M(x,\pi_e),\ \pi_e(x)\ne z\}
\end{align*}
Where $f_M(x, \pi_i)$ is the true performance of model $\pi_i$ at covariate $x$. If we instead use the estimated performance $\widehat{f}_M$ to construct the set, as in \cref{def:est_defs}, we have:
\begin{align*}
\Slehat(x,z) &= \{\pi_i \in \Sagree(x) \mid \widehat{f}_M(x, \pi_i) \leq \widehat{f}_M(x,\pi_e),\ \pi_e(x)=z\}\\
&\quad\cup\{\pi_i \in \Sagree(x) \mid \widehat{f}_M(x, \pi_i) \geq \widehat{f}_M(x,\pi_e),\ \pi_e(x)\ne z\}
\end{align*}
However, we only concern ourselves with the model(s) whose performance is closest to that of the untrialed model, per \cref{def:partitions,def:est_defs}:
\begin{equation*}\Sletight(x,z) = \argmin_{\pi_i\in \Sle(x,z)} |f_M(x, \pi_i)-f_M(x, \pi_e)|\end{equation*}
\begin{equation*}\Sletighthat(x,z) = \argmin_{\pi_i\in \Slehat(x,z)} |\widehat{f}_M(x, \pi_i)-\widehat{f}_M(x, \pi_e)|\end{equation*}
However, error in our estimation may permute the ordering of the models relative to the untrialed model, in terms of their estimated performance, resulting in:
\begin{equation*}\Sletighthat(x,z) \ne \Sletight(x,z)\end{equation*}
This leads to two distinct cases:
\begin{enumerate}
	\item Case 1: $\Sletighthat(x,z) = \Sletight(x,z)$
	\item Case 2: $\Sletighthat(x,z) \ne \Sletight(x,z)$
\end{enumerate}
In case 1, we have that:
\begin{equation*}\widehat{\psi}_L(\,\cdot\,; \widehat{f}_M) = \widehat{\psi}_L(\,\cdot\,; f_M)\end{equation*}
Since $\widehat{\psi}_L$ depends on the performance function only through the set $\Sletight(x,z)$. If the estimates have no impact on the set, then the estimates have no impact on the value of $\widehat{\psi}_L$.
In case 2, we know that the error is bounded: $\left|\widehat{\psi}_L(\,\cdot\,; f_M)-\widehat{\psi}_L(\,\cdot\,; \widehat{f}_M)\right|\leq \left|Y_{max}-Y_{min}\right|=B$, by \cref{assumption:bounded}.\footnote{This assumption is technically stronger than necessary, as only sufficient separation between the performance of the two closest models with $\pi_e$ on both sides is necessary. }

To prove convergence, we utilize the second margin condition of \cref{assump:margin}: there exists $\beta > 0$ such that for any $t\geq 0$, we have:
\begin{equation*}P\left[\min_{i\ne j}\left|f_M(x;\pi_j)-f_M(x;\pi_i)\right|\leq t\right]\leq t^\beta\end{equation*}
For all $x \in \mathcal{X}$ and $\pi_i,\pi_j\in\Hclass \cup\{\pi_e\} $. \\

  This condition states that the probability that the minimum performance gap between any two models is less than $t$ decays at least polynomially in $t$.
Under this assumption, we can show that the first term converges to 0 as $n\to \infty$ by demonstrating the indicator converges to 0:\\\\
We first define: 
\begin{equation*}
    \delta_n := 2\max_{i}\left|\left| f_M(x;\pi_i)-\widehat{f}_M(x;\pi_i)\right|\right|_{\infty}
\end{equation*}
\begin{equation*}
    g(X) := \min_{i,j}\left|{f}_M(X;\pi_i)-  f_{M}( X;\pi_j)\right|
\end{equation*}

It follows that:
\begin{align}
	|R_1| & \leq \mathbb{E}_n\left[ |\widehat{\psi}_L(\,\cdot\,; \widehat{f}_M) -\widehat{\psi}_L(\,\cdot\,; f_M)|\right] \label{eq:diff-1}                                                                                                                                                                       \\
	    & = \mathbb{E}_n\left[|\mathbf{1}\left\{\Sletighthat(X,Z) \ne \Sletight(X,Z)\right\} \left(\widehat{\psi}_L(\,\cdot\,; \widehat{f}_M) -\widehat{\psi}_L(\,\cdot\,; f_M)\right) |\right]\label{eq:diff-3}                                                                                        \\
	    & \leq \mathbb{E}_n\Bigg[|\mathbf{1}\left\{\min_{i,j}\left|{f}_M(X;\pi_i)-  f_{M}( X;\pi_j)\right|\leq 2\max_{i}\left|\left| f_M(x;\pi_i)-\widehat{f}_M(x;\pi_i)\right|\right|_{\infty}\right\} \nonumber\\\
       &\qquad \left(\widehat{\psi}_L(\,\cdot\,; \widehat{f}_M) -\widehat{\psi}_L(\,\cdot\,; f_M)\right) |\Bigg] \label{eq:diff-4} \\
       & = \mathbb{E}_n\Bigg[|\mathbf{1}\left\{g(X)\leq \delta_n\right\} \left(\widehat{\psi}_L(\,\cdot\,; \widehat{f}_M) -\widehat{\psi}_L(\,\cdot\,; f_M)\right) |\Bigg] \label{eq:diff-abb} \\
       &\leq \mathbb{E}_n\Bigg[|\mathbf{1}\left\{g(X)\leq \delta_n\right\} B |\Bigg] \label{eq:diff-B} \\
       & = \mathbb{E}_n\Bigg[\mathbf{1}\left\{g(X)\leq \delta_n\right\} \Bigg] B\label{eq:diff-B_aug}
\end{align}
Where $B$ is an arbitrary constant, $\pi_i,\pi_j\in\Hclass\cup \{\pi_e\}$. \cref{eq:diff-1} is Jensen's inequality. \cref{eq:diff-3} follows because the difference is only non-zero when the sets differ. \cref{eq:diff-4} follows because the sets differ only when model error causes permutations in the ordering of $\pi_e$ and the 2 closest performing models on either side. This occurs when the estimation error is larger than half of the minimum distance between the untrialed model and any other model, as maximum error in both values can \enquote{bridge the gap}. \cref{eq:diff-abb} is by the definitions of $\delta_n$ and $g(X)$. \cref{eq:diff-B} follows from the boundedness of the difference from \cref{assumption:bounded}, where $B = Y_{max}-Y_{min}$. \cref{eq:diff-B_aug} follows because the indicator is non-negative.  From here, we cannot directly apply the margin condition of \cref{assump:margin}, because the expectation is empirical; instead, we have:
\begin{align}
P\!\left(\sqrt{n}\,|R_1| > \epsilon\right)
&\;\le\; P\!\left(\mathbb{E}_n\!\bigl[\mathbf{1}\{g(X_i)\le\delta_n\}\bigr] \ge \tfrac{\epsilon}{B\sqrt{n}}\right)
   \label{eq:diff-split-2}\\
   &\;\le\; \mathbb{E}\!\left(\mathbb{E}_n\!\bigl[\mathbf{1}\{g(X_i)\le\delta_n\}\bigr] \right)\tfrac{B\sqrt{n}}{\epsilon}
   \label{eq:diff-split-markov}\\
   &\;=\; \mathbb{E}\!\left(\frac{1}{n}\sum_{i=1}^n\bigl[\mathbf{1}\{g(X_i)\le\delta_n\}\bigr] \right)\tfrac{B\sqrt{n}}{\epsilon}\label{eq:diff-split-empirical-exp}\\
      &\;=\; P\!\left(g(X_i)\le\delta_n \right)\tfrac{B\sqrt{n}}{\epsilon}\label{eq:diff-split-linearity}\\
      &\;\le\; \delta_n^{\beta}\tfrac{B\sqrt{n}}{\epsilon}\label{eq:diff-split-assp}\\
&\;\longrightarrow\; 0 . \label{eq:diff-bct}
\end{align}
 \cref{eq:diff-split-2} uses $|R_1|\le B\mathbb{E}_n[\mathbf{1}\{g(X_i)\le\delta_n\}]$. \cref{eq:diff-split-markov} follows from Markov's inequality. \Cref{eq:diff-split-empirical-exp} is by definition. \cref{eq:diff-split-linearity} is by linearity of expectation. \cref{eq:diff-split-assp} follows from the margin condition (\cref{assump:margin}), and \cref{eq:diff-bct} follows since $\delta_n = o_P(n^{-\frac{1}{2\beta}})$ by \cref{asmp:convergence_nuisance}, so $\delta_n^\beta\sqrt{n} \to 0$ in probability. Because this holds for every $\epsilon>0$, $R_1 = o_P(n^{-1/2})$.

We note that \emph{this is different} than the required convergence rate for the plug-in estimators for ${\mu}_j(X,Z)$, which was $o_P(n^{-\frac{1}{2(1+\alpha)}})$. This is because in \citet{byun2024}, or our 6th condition of \cref{thmt@@ByunLemma}, at the step analogous to \cref{eq:diff-4}, the term in the right hand side inside of the indicator is the same as the term outside of the indicator. After application of the margin condition, this results in a squared convergence rate. In our case, the term outside of the indicator is $B$, a constant, which does not converge. We are unable to improve this, since the convergence of $\widehat{f}_M$ is not necessarily related to the convergence of $\widehat{\psi}_L$, whose error is bounded by $B$.  
We have now shown:
\begin{align}
	\widehat{L}(\pi_e) - L(\pi_e) =&\mathbb{E}_n[\bLneu h(X,Y,\Sagree)]-\mathbb{E}[\bLneu h(X,Y,\Sagree)] +\nonumber\\
                               &\mathbb{E}_n[\bLmon h(X,Y,\Sletight)]-\mathbb{E}[\bLmon h(X,Y,\Sletight)] +\nonumber\\
                               &\mathbb{E}_n[\bLcor\varphi(X,Z;{d})]-\mathbb{E}[\bLcor\varphi(X,Z;{d})] +\nonumber\\
                               &  \mathbb{E}_n[\bLdef Y_{min}]-\mathbb{E}[\bLdef Y_{min}] \nonumber\\
	 & + o_P(n^{-1/2}) + \underbrace{o_P(n^{-1/2})}_{R_1}\label{eq:final-construction-split}\\
	 =&\mathbb{E}_n[\bLneu h(X,Y,\Sagree)+\bLmon h(X,Y,\Sletight)+\\
     &\bLcor\varphi(X,Z;{d})+\bLdef Y_{min}]-\\
     &\mathbb{E}[\bLneu h(X,Y,\Sagree)+\bLmon h(X,Y,\Sletight)+\\
     &\bLcor\varphi(X,Z;{d})+\bLdef Y_{min}]\\
	 & + o_P(n^{-1/2})\label{eq:final-construction}
\end{align}
In \cref{eq:final-construction-split}, the term outside the underbrace corresponds to $R_2$. In \cref{eq:final-construction}, we have the difference of an empirical mean and the corresponding true mean. By the CLT, this is a random variable that converges to a normal distribution. By Slutsky's, the $o_P(n^{-1/2})$ terms vanish asymptotically. 

It directly follows that since $\widehat{L}(\pi_e)$ is an empirical function of i.i.d.\ random variables, we can apply the CLT to show that:
\begin{align}
	\sqrt{n}(\widehat{L}(\pi_e) - L(\pi_e)) & \sim N\left(0, \sigma_L^2\right)
\end{align}
By the CLT, where $\sigma_L^2 = Var({\psi}_L)$. It follows:
\ConfidenceIntervals*

\begin{proof}
By \cref{theorem:estimator}, $\sqrt{n}(\widehat{L}(\pi_e)-L(\pi_e)) \xrightarrow{d} N(0,\sigma_L^2)$ with $\sigma_L^2 = \Var(\psi_L)$. We must show $\widehat{\Var}(\widehat{\psi}_L) \xrightarrow{P} \sigma_L^2$. Decompose
\begin{equation*}
\widehat{\Var}(\widehat{\psi}_L) - \Var(\psi_L)
= \underbrace{\widehat{\Var}(\widehat{\psi}_L) - \Var(\widehat{\psi}_L)}_{(A)}
+ \underbrace{\Var(\widehat{\psi}_L) - \Var(\psi_L)}_{(B)}.
\end{equation*}
For term $(A)$, $\widehat{\psi}_L$ is uniformly bounded by a constant independent of $n$: \cref{assumption:bounded} bounds $Y\in[Y_{\min},Y_{\max}]$, and the influence terms $\widehat{\lambda}_j$ are bounded by Condition~1 of \cref{thmt@@ByunLemma}. Under cross-fitting, the evaluation-fold observations are i.i.d. conditional on the training fold, on which $\widehat{\psi}_L$ is a fixed bounded function; a law of large numbers for this bounded conditionally-i.i.d. sample therefore gives $\widehat{\Var}(\widehat{\psi}_L)-\Var(\widehat{\psi}_L)=o_P(1)$, so $(A)=o_P(1)$.

For term $(B)$, the rate condition \cref{asmp:convergence_nuisance} and margin condition \cref{assump:margin} give $\widehat{\psi}_L \xrightarrow{P} \psi_L$ (as established in the proof of \cref{theorem:estimator}). Since $\widehat{\psi}_L$ is uniformly bounded, the bounded convergence theorem upgrades convergence in probability to moment convergence, $\mathbb{E}[\widehat{\psi}_L]\to\mathbb{E}[\psi_L]$ and $\mathbb{E}[\widehat{\psi}_L^2]\to\mathbb{E}[\psi_L^2]$, hence $\Var(\widehat{\psi}_L)\to\Var(\psi_L)$ and $(B)=o_P(1)$.

Thus, $\widehat{\Var}(\widehat{\psi}_L)\xrightarrow{P}\sigma_L^2$, and by Slutsky's theorem the interval attains asymptotic coverage $1-\gamma$. 
\end{proof}
We now consider the upper bound. \\
\begin{align}
U(\pi_e)& = \mathbb{E}[\bUneu\mathbb{E}[Y|X,\Pi \in \Sagree(X), Z] \nonumber\\
	         & \quad + \bUmon\mathbb{E}[Y|X,\Pi \in \Sgetight(X,Z), Z]\nonumber\\
	         & \quad + \bUcor\min_{i\in \Scorrect(X,Z)}\mathbb{E}[Y|X,\Pi= i, Z]  \nonumber\\
	         & \quad + \bUdef Y_{max} \nonumber
	].   
\end{align}
Where the sets in the indicators are defined as they are in \cref{thmt@@BoundsTheorem}. We note that the structure of this expression is identical to that of the lower bound, except for the use of minima instead of maxima, and the different definitions of the sets in the indicators. Since the proof was agnostic to the specific definitions of the sets, and a minimum can be interpreted as a maximum over negative values, the proof follows identically. 
\end{proof}

\newpage
\begin{figure*}[t]
    \centering
    \begin{subfigure}[b]{0.24\textwidth}
        \centering
        \includegraphics[width=\textwidth]{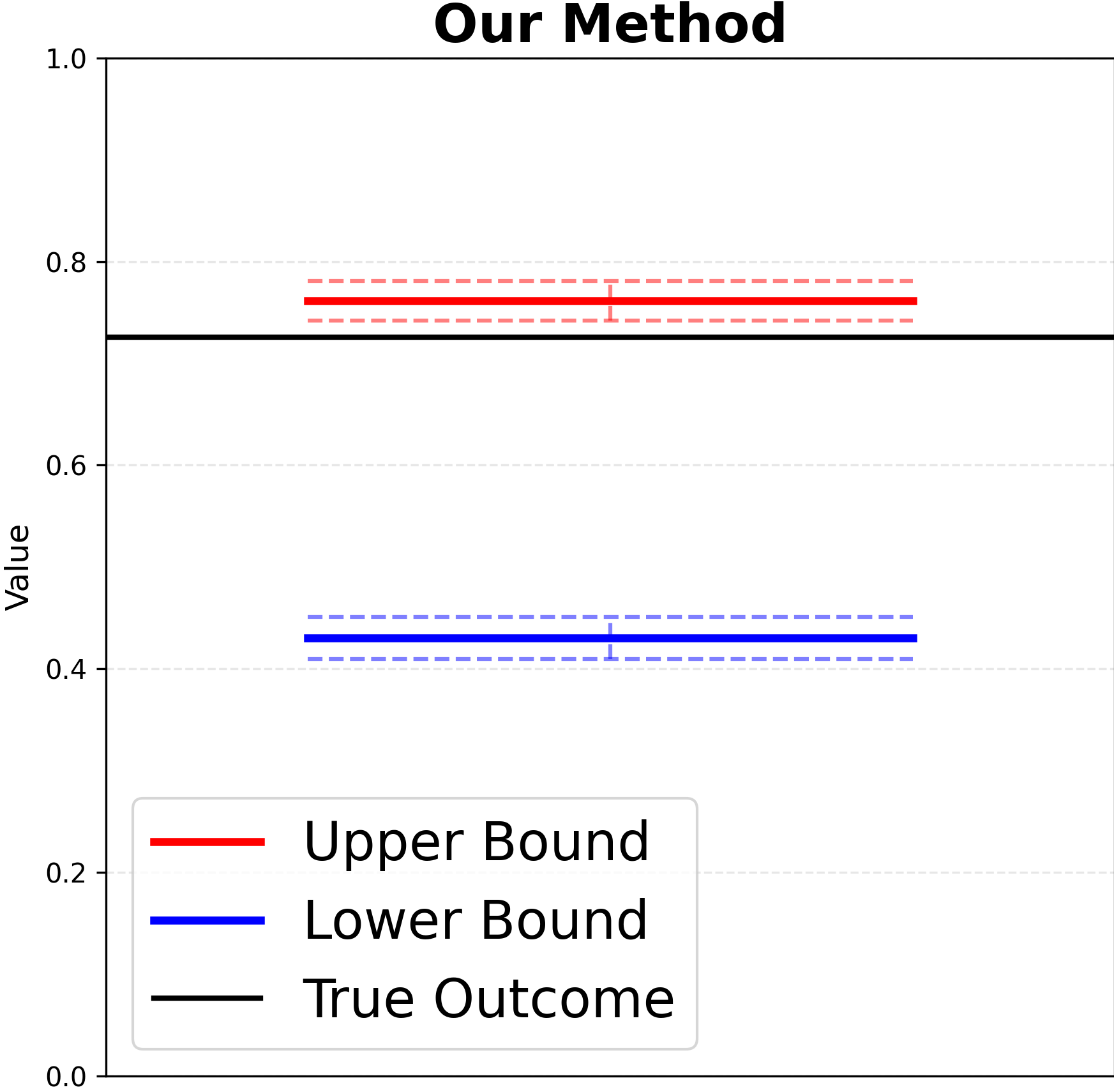}
        \caption{}
        \label{fig:abl_bounds_A}
    \end{subfigure}
    \hfill
    \begin{subfigure}[b]{0.24\textwidth}
        \centering
        \includegraphics[width=\textwidth]{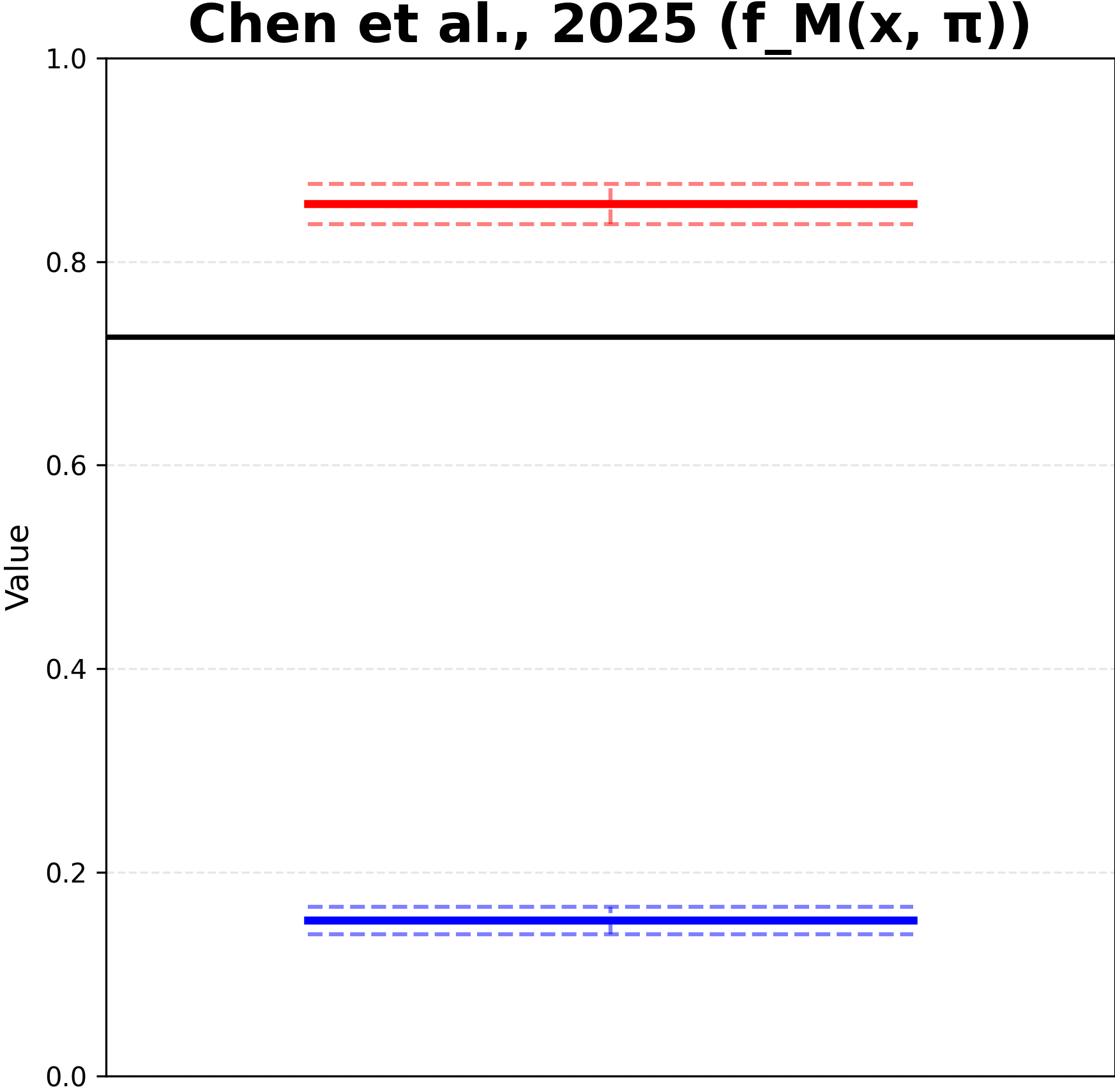}
        \caption{}
        \label{fig:bounds_B}
    \end{subfigure}
    \hfill
    \begin{subfigure}[b]{0.24\textwidth}
        \centering
        \includegraphics[width=\textwidth]{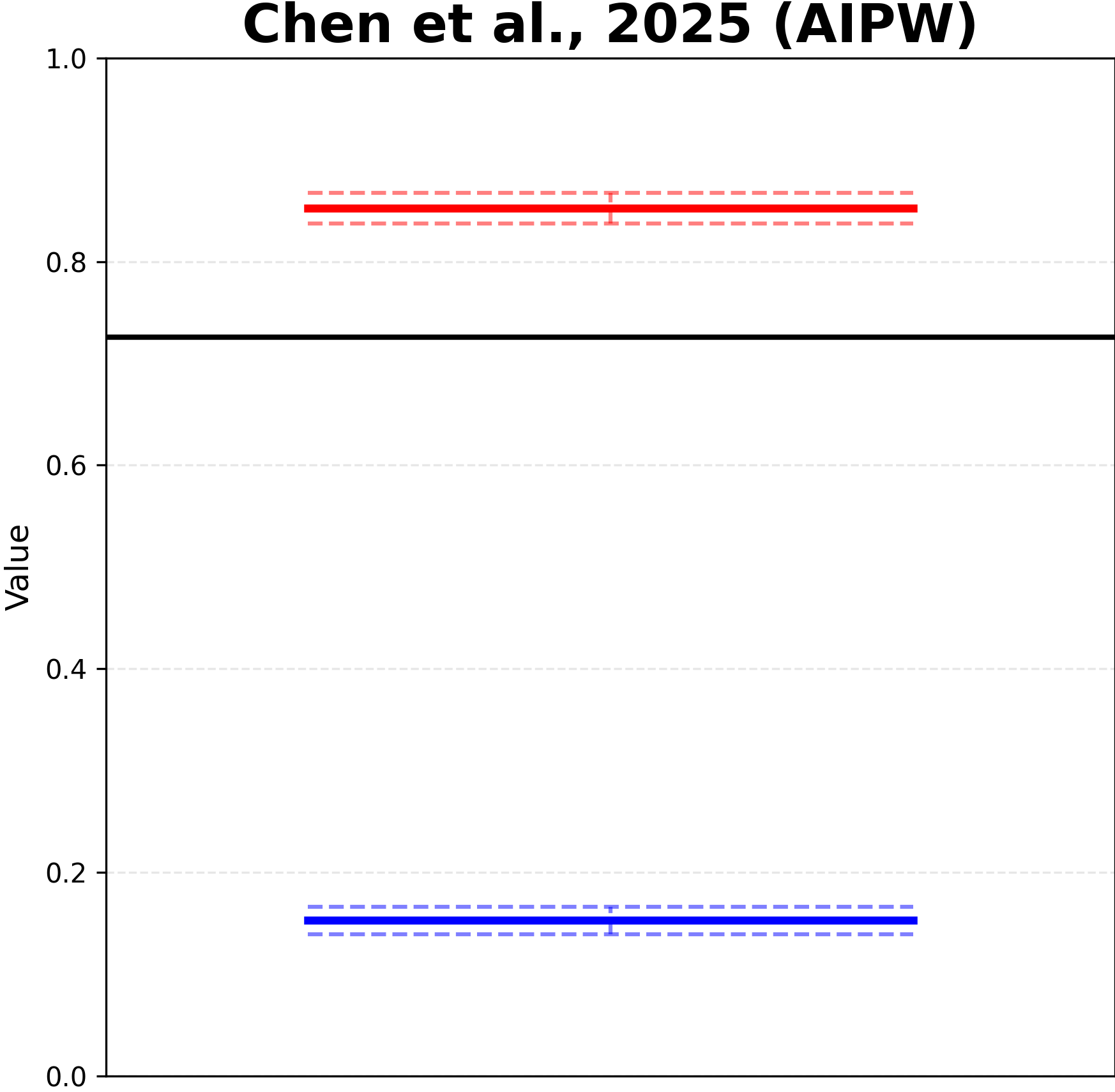}
        \caption{}
        \label{fig:bounds_C}
    \end{subfigure}
    \hfill
    \begin{subfigure}[b]{0.24\textwidth}
        \centering
        \includegraphics[width=\textwidth]{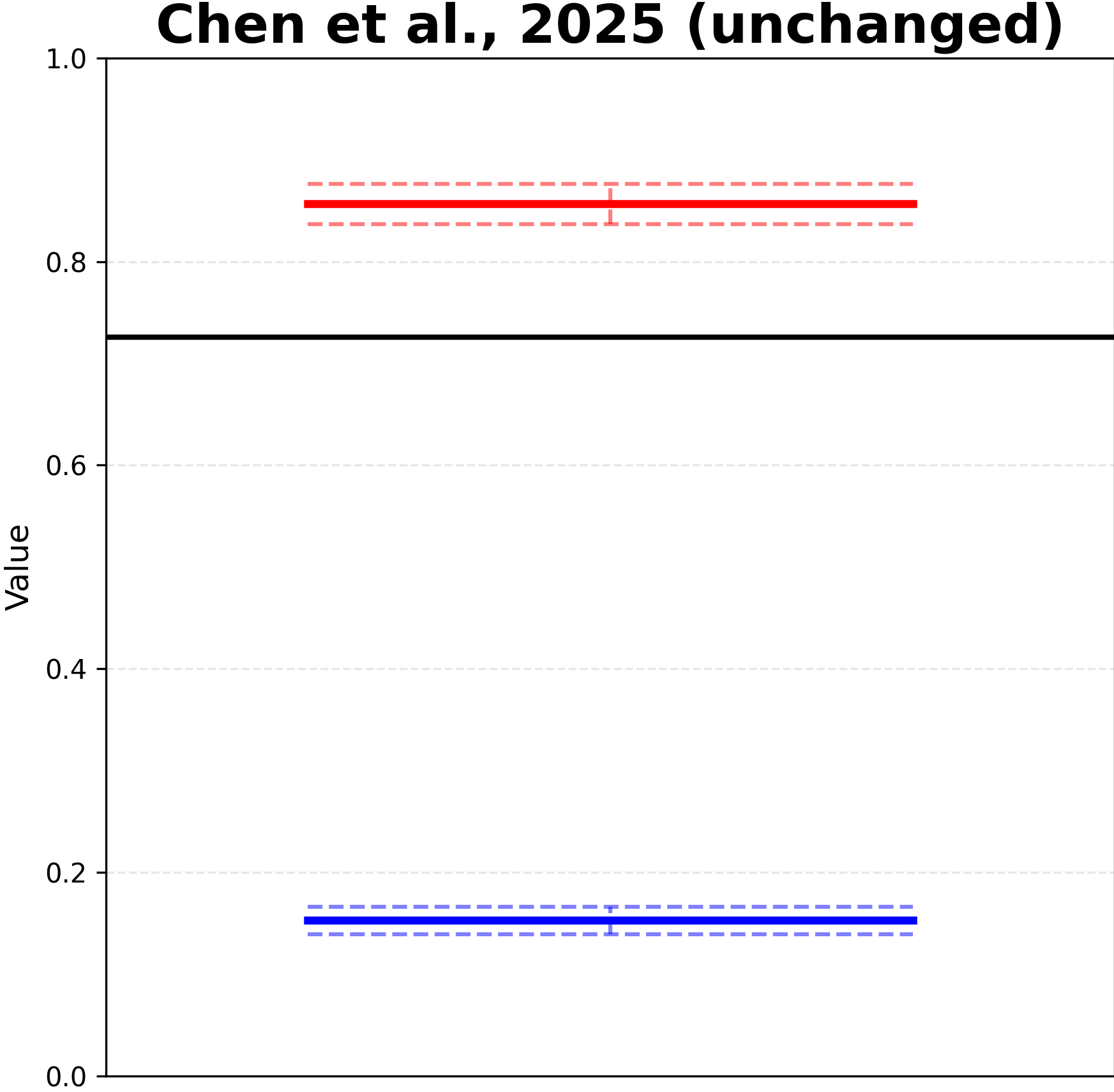}
        \caption{}
        \label{fig:abl_bounds_D}
    \end{subfigure}
   \caption{Ablation study of generated bounds on downstream outcomes. The figure displays the lower bound and upper bound computed using our proposed method in \cref{fig:abl_bounds_A}, the existing method in \citet{chen2025} using performance stratified across age groups in \cref{fig:bounds_B}, the existing method using AIPW estimation in \cref{fig:bounds_C}, and the existing method without any modifications in  \cref{fig:abl_bounds_D}. 90\% two-sided confidence intervals are shown.}
   \label{fig:abl_simulation_results}
\end{figure*}
\section{Further Simulation Details}
\label{app:experiments}
\begin{figure}[t]
    \centering
        \includegraphics[width=\columnwidth/3]{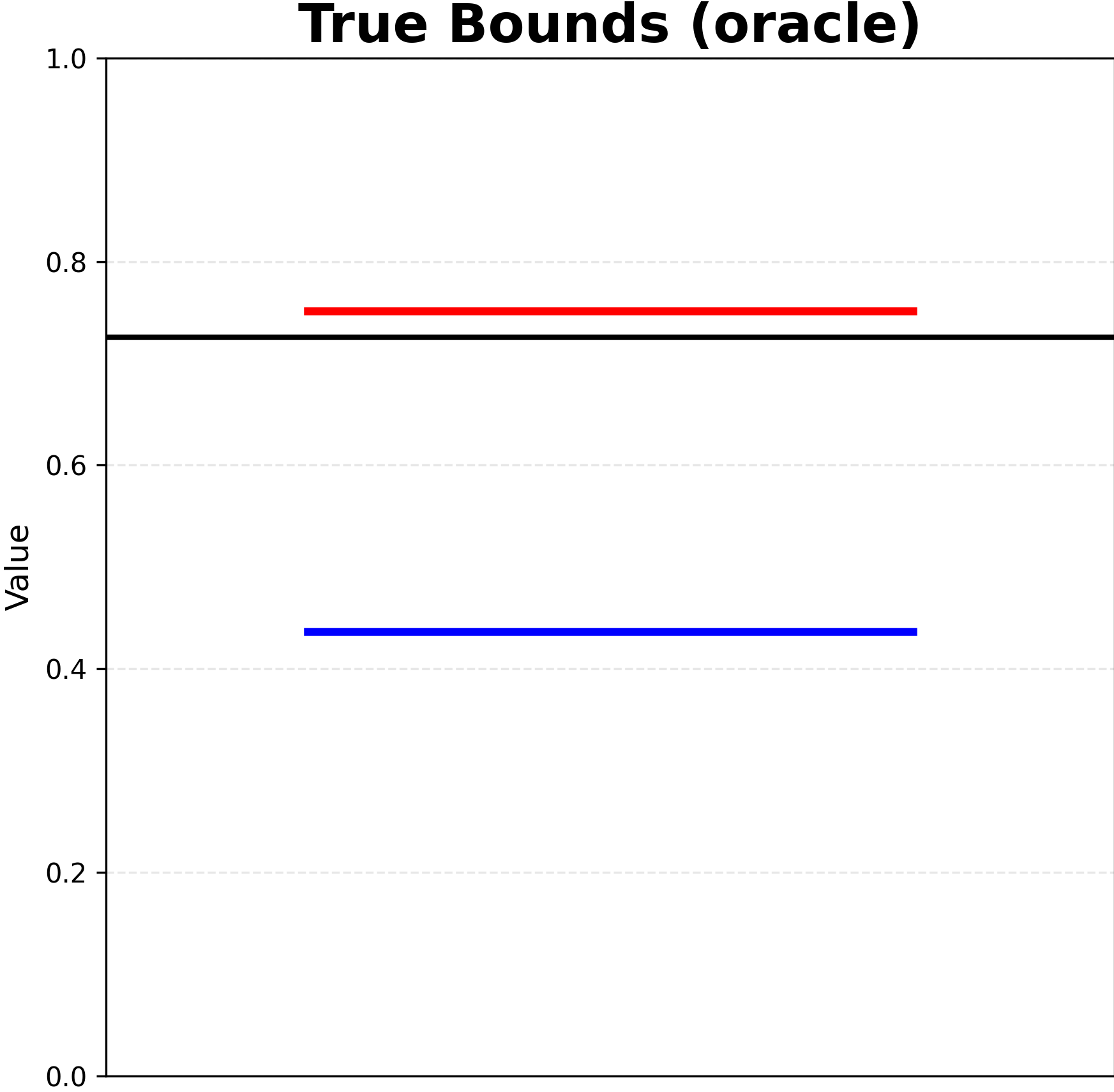}
        \caption{The figure displays the lower and upper bound computed with our method using the true theoretical values, as if we had access to the counterfactual outcomes for all models.}
        \label{fig:abl_bounds_E}
\end{figure}

\begin{table}[ht]
\centering
\small
\begin{tabular}{lcccccc}
\toprule
Method & Bound width & Lower (mean) & Lower 90\% CI & Upper (mean) & Upper 90\% CI & True value \\
\midrule
Our Method & 0.331 & 0.430 & [0.409, 0.451] & 0.761 & [0.742, 0.781] & 0.726 \\
True Bounds (oracle) & 0.315 & 0.437 & -- & 0.751 & -- & 0.726 \\
\midrule
Chen et al., 2025 ($f_M(x, \pi)$ stratified) & 0.704 & 0.153 & [0.139, 0.166] & 0.857 & [0.837, 0.877] & 0.726 \\
Chen et al., 2025 (AIPW) & 0.700 & 0.153 & [0.139, 0.166] & 0.852 & [0.837, 0.867] & 0.726 \\
Chen et al., 2025 (unchanged) & 0.704 & 0.153 & [0.139, 0.166] & 0.857 & [0.837, 0.877] & 0.726 \\
\bottomrule
\end{tabular}
\caption{(Experiment 1) Numerical summary of the ablation study results.}
\label{tab:abl_simulation_results}
\end{table}

\begin{table}[t]
    \centering
    \small
\begin{tabular}{llcccccc}
\toprule
Experiment & Bounds & Bound width & Lower (mean) & Lower 90\% CI & Upper (mean) & Upper 90\% CI & True value \\
\midrule
Experiment 3 & Estimated & 0.140 & 0.624 & [0.607, 0.641] & 0.764 & [0.749, 0.779] & 0.696 \\
Experiment 3 & Ground truth & 0.144 & 0.615 & -- & 0.760 & -- & 0.696 \\
\midrule
Experiment 5 & Estimated & 0.204 & 0.586 & [0.568, 0.604] & 0.790 & [0.776, 0.804] & 0.697 \\
Experiment 5 & Ground truth & 0.209 & 0.578 & -- & 0.787 & -- & 0.697 \\
\bottomrule
\end{tabular}
    \caption{Semi-synthetic experiment results comparison between experiment 3 and experiment 5, which differ only in the presence of missing data. }
   \label{tab:exp_3_vs5}
\end{table}
The experimental trial consisted of 1,891 defendants from the dataset whose various covariates, such as demographics, charge history, and staff recommendations were recorded. The original dataset also contained an indicator of whether the judge was shown the risk score predicted by the Public Safety Assessment (PSA) model, the judge's detention decision, and the outcome of whether the defendant failed to appear in court (FTA). We use these data to construct all the variables in our RCT, as outlined in the experiments section of the main text. 

In the original setting of \citet{imai2023experimental}, judges were able to give high or low bail, and models provided integer risk scores on a scale of 1-6 on not only FTA, but also new criminal activity and new violent criminal activity. The score is also always shown to the judge. Our method is still applicable to these settings, however we choose to artificially binarize the model prediction and modify the setting to an alerting context so that a meaningful neutral action ($\hat{z}_0$) exists. We chose to binarize the judge's decision and omit the other scores largely for simplicity. 

\citet{van2025accurate} discusses a setting in which performance is dependent on the data distribution on which it is measured. Thus, under a performance-dependent intervention, the dataset will shift, causing confounding that may lead to orthogonal changes between potential outcomes and performance. We highlight this due to its similarity to our setting, but emphasize that it does not apply. This is because performance is not measured relative to the potential outcomes, but instead relative to counterfactual outcomes that do not change with any interventions.  

The nuisance models used to learn the relationships in the data to simulate judge decisions and outcomes were simple logistic regressions, without any feature engineering outside of the existing features in the dataset. The nuisance models used in the AIPW estimation were lasso regularized linear regressions with cross-validated $L_1$ penalty. 
\subsection{Additional Experiment Details}\label{app:add_exp}
We now present additional details to some of the experiments discussed. We present some of the figures for each experiment, but any graphs and tables presented have analogous versions for every experiment. They are available in the GitHub repository. In no experiment do the falsification tests fail. 
\paragraph{Experiment 1}\label{app_exp_1}
Suppose the alerting model discussed in \hyperref[exp_3]{Experiment 3} already exists and a two-arm RCT was conducted comparing the FTA of the alerting model against a control arm (never alert). We are considering lowering the threshold of the alerting model so that it alerts when the same risk score is >1 instead, without running an RCT. We use our method to develop bounds on the downstream FTA from the data in the two-arm RCT.

We use the same experimental setup as in \hyperref[exp_3]{Experiment 3}, except that we now have an additional trialed model. Thus, we first randomly assign each defendant to either the control arm (never alert) or the trialed model (alert if risk score > 3). The values of $A, Y$ are generated only for the appropriate model per the random assignment.

We show our results of this experiment in \cref{fig:exp_1}. We see that our bounds remain tighter than the existing methods, though both are significantly wider than in \hyperref[exp_4]{Experiment 4}. The difference between the two methods is also less pronounced. This is because the additional trialed model decreases the likelihood that the untrialed model disagrees with all trialed models, which activates the $\bLcor,\bUcor$ terms less often. This demonstrates that our method is more effective in settings with more limited data, and has diminishing returns as the number of trialed models increases.

\paragraph{Experiment 2}\label{app_exp_2}
Suppose a model that alerts for offenders with a \citet{imai2023experimental} risk score greater than 3 already exists, and a two-arm RCT was conducted comparing the FTA of the alerting model against a control arm (never alert). We develop a new model that still alerts when the risk score is $>3$; however, it generates those risk scores in a fundamentally different way. We want to evaluate the effectiveness of the new risk score algorithm without running an RCT. We use our method to develop bounds on the downstream FTA from the data in the two-arm RCT.

We first randomly assign each defendant to either the control arm (never alert) or the trialed model (alert if risk score > 3). The values of $A, Y$ are generated only for the appropriate model per the random assignment. The untrialed model doesn't use the risk scores from the \citet{imai2023experimental} data. Instead, it finds the top 5 non-demographic features most correlated with FTA in the data. It then does a simple tally of the number of those features that are present for a given defendant, and uses that as the risk score, where 0 features present corresponds to a risk score of 1, and 5 features present corresponds to a risk score of 6.

We show the results of this experiment in \cref{fig:exp_2}. We see that our bounds are somewhat looser but still comparable to the bounds in \cref{fig:exp_1}. This demonstrates that our method still remains more effective than existing methods even when the trialed model is fundamentally different from the untrialed model.

\paragraph{Experiment 5}\label{app_exp_5}
This experiment is identical to Experiment 3, but with missing values of $Z$. $Z$ is defined as the counterfactual outcome under no bail from the judge. Thus, it is only observed for individuals where $A = 0$, but never observed for individuals where $A = 1$. In this experiment, we set all values of $Z$ for such individuals to $Z=-1$, which prevents the logic in $\bLcor, \bUcor, \bLmon, \bUmon$ from activating. As discussed in \cref{app: exp_5_miss}, this is a possibly misspecified experiment.

We see that our bounds are somewhat looser but still comparable to the bounds in \cref{fig:simulation_results1}. They remain tighter than the existing methods, which are not affected by the missingness of $Z$. We see that in this case, the violation of MAR does not significantly affect the bounds. This is because the fallback to $\bLdef, \bUdef$ branches remain valid, the $\bLneu, \bUneu$ branches for missing $Z$ values are only impacted through the estimation of $\mu(X, Z)$, and the $\bLcor, \bUcor, \bLmon, \bUmon$ branches do not get activated sufficiently to significantly affect the bounds.    

\clearpage
\vspace*{\fill}
\begin{center}
   \includegraphics[width=\linewidth]{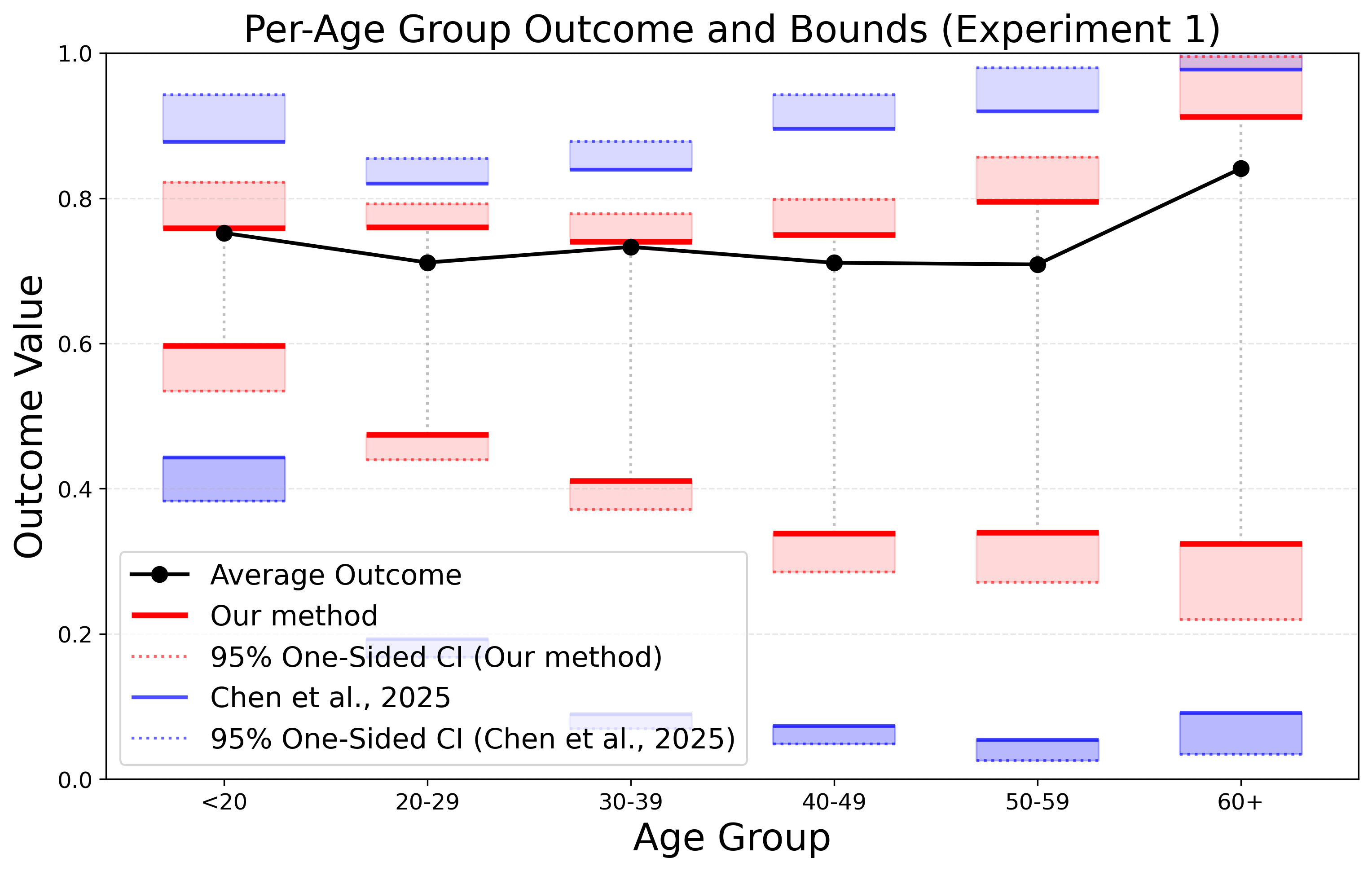}
   \captionof{figure}{(Experiment 1) Bounds on downstream outcomes for an untrialed model with two trialed models in the RCT, plotted as functions of individual age.}
   \label{fig:exp_1}
\end{center}
\vspace*{\fill}

\clearpage
\vspace*{\fill}
\begin{center}
   \includegraphics[width=\linewidth]{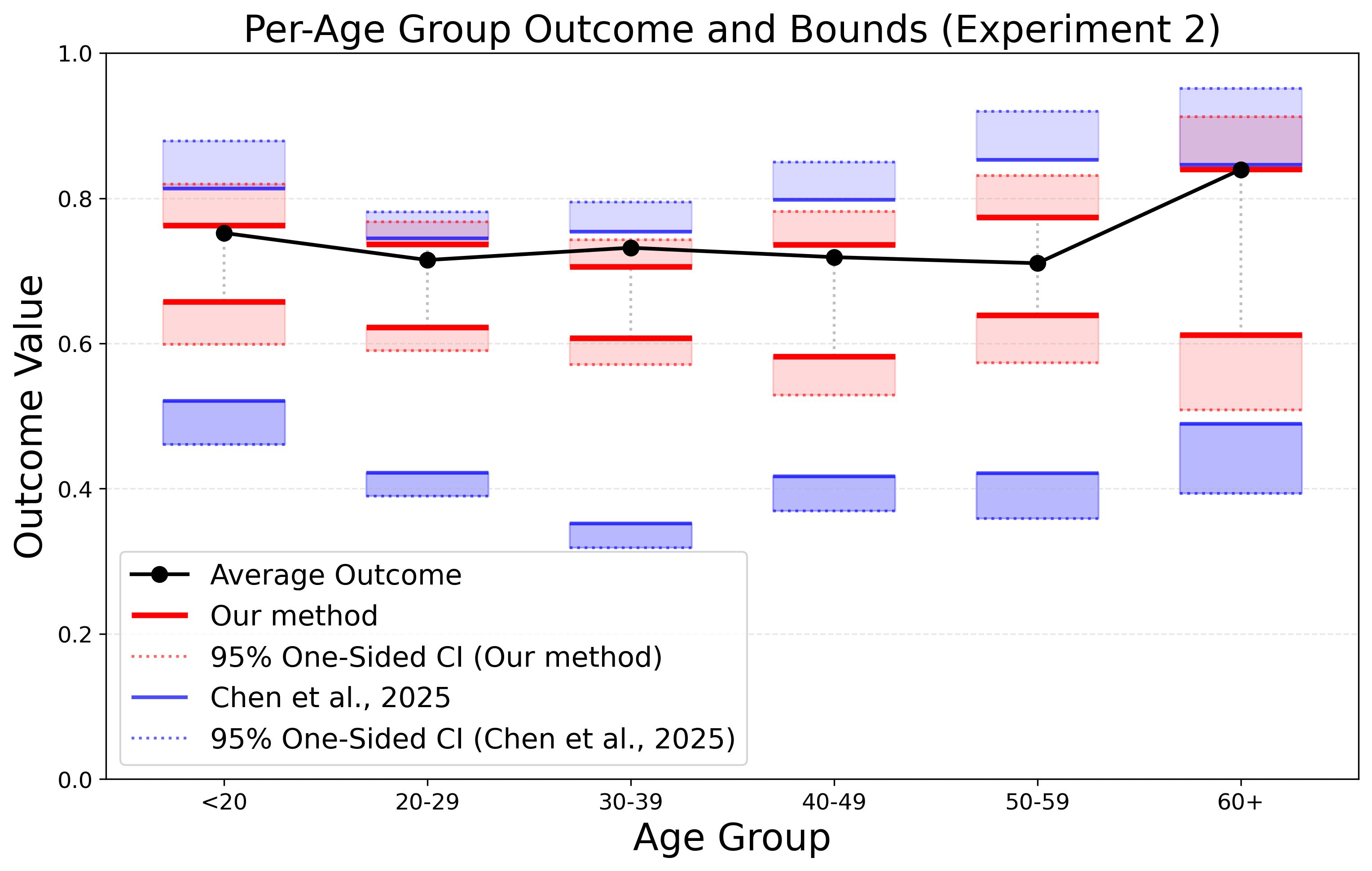}
   \captionof{figure}{(Experiment 2) Bounds on downstream outcomes for an untrialed model under a covariate tally alerting rule, plotted as functions of age.}
   \label{fig:exp_2}
\end{center}
\vspace*{\fill}

\clearpage
\vspace*{\fill}
\begin{center}
   \includegraphics[width=\linewidth]{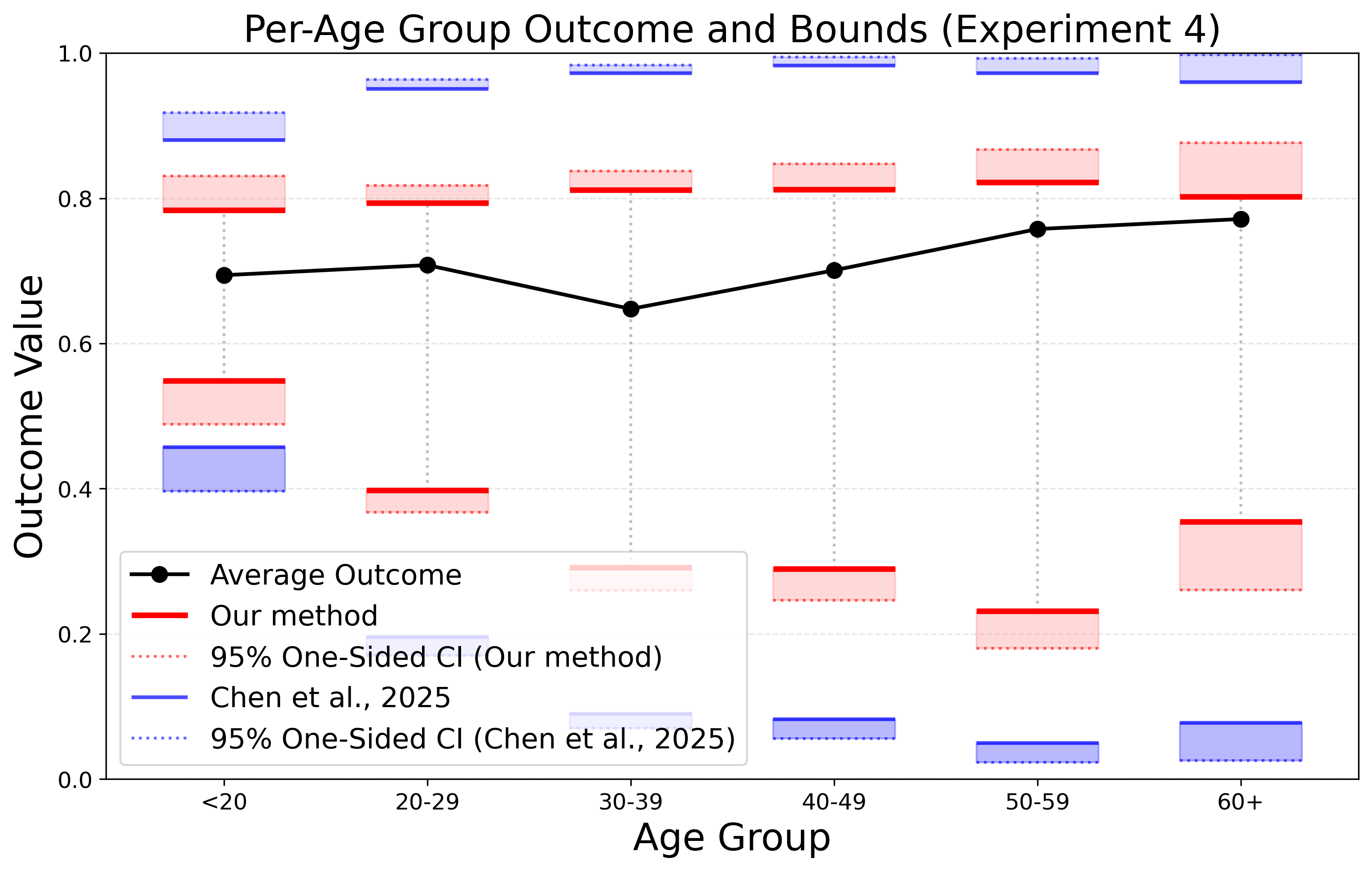}
   \captionof{figure}{(Experiment 4) Bounds on downstream outcomes for an untrialed model with a single trialed model in the RCT, plotted as functions of age.}
   \label{fig:exp_4}
\end{center}
\vspace*{\fill}

\clearpage
\vspace*{\fill}
\begin{center}
   \includegraphics[width=\linewidth]{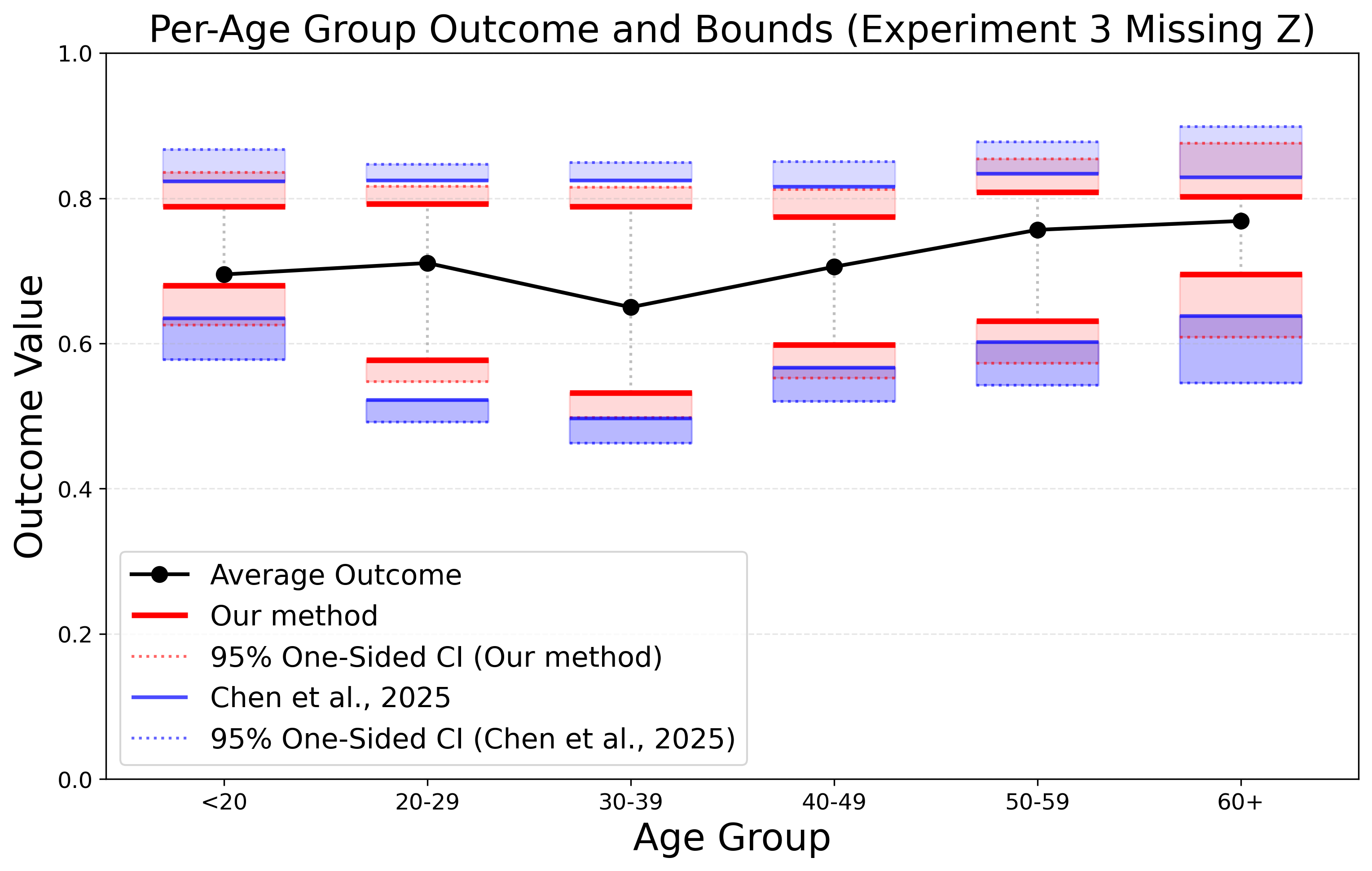}
   \captionof{figure}{(Experiment 5) Bounds on downstream outcomes for an untrialed model with a single trialed model in the RCT, plotted as functions of age, with missing data.}
   \label{fig:exp_5}
\end{center}
\vspace*{\fill}

\clearpage

\begin{table}[ht]
\centering
\small
\begin{tabular}{lcccccc}
\toprule
Patient\_ID & age\_group & UPPER & LOWER & UPPER (ground-truth) & LOWER (ground-truth) & TRUE\_EXPECTED \\
\midrule
0 & 4 & 1.000 & 1.000 & 0.681 & 0.681 & 0.681 \\
1 & 4 & 0.909 & 0.000 & 0.651 & 0.000 & 0.652 \\
2 & 3 & 1.000 & 1.000 & 0.818 & 0.818 & 0.818 \\
3 & 2 & 1.000 & 0.909 & 1.000 & 0.472 & 0.476 \\
4 & 2 & 1.000 & 1.000 & 0.777 & 0.777 & 0.777 \\
5 & 4 & 1.000 & 1.000 & 0.715 & 0.715 & 0.715 \\
6 & 3 & 1.000 & 0.000 & 0.612 & 0.000 & 0.613 \\
7 & 2 & 0.091 & 0.000 & 0.672 & 0.000 & 0.672 \\
8 & 3 & 1.000 & 1.000 & 0.687 & 0.687 & 0.687 \\
9 & 4 & 1.000 & 0.000 & 0.676 & 0.000 & 0.677 \\
10 & 2 & 0.091 & 0.000 & 0.665 & 0.000 & 0.666 \\
11 & 4 & 1.000 & 0.000 & 1.000 & 0.570 & 0.572 \\
12 & 2 & 1.000 & 1.000 & 1.000 & 0.647 & 0.649 \\
\bottomrule
\end{tabular}
\caption{The bound values generated for the first 13 individuals in the dataset (Experiment 3). The UPPER and LOWER columns refer to the values of the upper and lower bounds generated using this paper's method. These sometimes fall outside $[0,1]$ or fail to capture the true value due to estimation error. The ground truth columns are the bounds generated using the true theoretical values. The TRUE\_EXPECTED column is the true expected outcome for the individual, which was unknown to the methods. }
\label{fig:Exp_table}
\end{table}

\begin{table}[ht]
\centering
\small
\begin{tabular}{lcccccc}
\toprule
Method & Outcome cov. & Interval width & Lower CI cov. & Lower CI width & Upper CI cov. & Upper CI width \\
\midrule
Our Method & 1.00 & 0.313 & 0.97 & 0.042 & 0.91 & 0.039 \\
True Bounds (oracle) & 1.00 & 0.315 & -- & -- & -- & -- \\
Chen et al. ($f_M(x, \pi)$) & 1.00 & 0.692 & 0.00 & 0.027 & 0.00 & 0.039 \\
Chen et al. (unchanged) & 1.00 & 0.692 & 0.00 & 0.027 & 0.00 & 0.039 \\
Chen et al. (AIPW) & 1.00 & 0.693 & 0.00 & 0.027 & 0.00 & 0.031 \\
\bottomrule
\end{tabular}
\caption{Monte Carlo coverage and width across 100 seeds (Experiment 1). The {Outcome cov.} column is how often the final bound interval [lower, upper] covers the true outcome. The {Lower/Upper CI cov.}: how often the two-sided 90\% CI around the lower (resp.\ upper) bound covers the true lower (resp.\ upper) bound. Widths are means. }
\label{tab:coverage-experiment}
\end{table}

\begin{table}[ht]
\centering
\small
\begin{tabular}{lcc}
\toprule
Policy & Precision & Alert fraction \\
\midrule
Control & -- & 0.000 \\
$\mathfrak{R} > 1$ & 0.349 & 0.802 \\
$\mathfrak{R} > 3$ & 0.456 & 0.281 \\
CustomRiskScore & 0.413 & 0.456 \\
\bottomrule
\end{tabular}
\caption{Overall precision and alert frequency of each alerting policy used across the experiments, evaluated on the full dataset ($n = 1891$). The alert fraction is the proportion of defendants for whom the policy raises an alert. Precision is the fraction of raised alerts for which the defendant would fail to appear if released. The Control policy never alerts, so its precision is undefined (\texttt{--}).  ``$\mathfrak{R} > 1$'' and ``$\mathfrak{R} > 3$'' are the models that threshold the \citet{imai2023experimental} PSA failure-to-appear risk score, and CustomRiskScore is the custom alerting system introduced in \hyperref[app_exp_2]{Experiment 2}.}
\label{tab:model-precision-alert}
\end{table}

\begin{table}[ht]
\centering
\small
\begin{tabular}{lcc}
\toprule
FTAScore & Count & Proportion \\
\midrule
1 & 375 & 0.198 \\
2 & 531 & 0.281 \\
3 & 454 & 0.240 \\
4 & 286 & 0.151 \\
5 & 189 & 0.100 \\
6 & 56 & 0.030 \\
\bottomrule
\end{tabular}
\caption{Baseline distribution of the \citet{imai2023experimental} PSA failure-to-appear risk score (integer values 1--6) across all defendants in the dataset ($n = 1891$). This is the score thresholded by the $\mathfrak{R} > 1$ and $\mathfrak{R} > 3$ policies in \cref{tab:model-precision-alert}.}
\label{tab:fta-score-distribution}
\end{table}

\begin{table*}[ht]
\centering
\small
\begin{tabular}{llccccc}
\toprule
Bound & Branch & Experiment 1 & Experiment 2 & Experiment 3 & Experiment 4 & Experiment 5 \\
\midrule
Lower & def & 698 (36.91\%) & 238 (12.59\%) & 289 (15.28\%) & 987 (52.19\%) & 406 (21.47\%) \\
Lower & cor & 529 (27.97\%) & 356 (18.83\%) & 242 (12.80\%) & 529 (27.97\%) & 125 (6.61\%) \\
Lower & neu & 375 (19.83\%) & 1028 (54.36\%) & 1360 (71.92\%) & 375 (19.83\%) & 1360 (71.92\%) \\
Lower & mon & 289 (15.28\%) & 269 (14.23\%) & 0 (0.00\%) & 0 (0.00\%) & 0 (0.00\%) \\
Upper & def & 287 (15.18\%) & 124 (6.56\%) & 242 (12.80\%) & 529 (27.97\%) & 372 (19.67\%) \\
Upper & cor & 987 (52.19\%) & 507 (26.81\%) & 289 (15.28\%) & 987 (52.19\%) & 159 (8.41\%) \\
Upper & neu & 375 (19.83\%) & 1028 (54.36\%) & 1360 (71.92\%) & 375 (19.83\%) & 1360 (71.92\%) \\
Upper & mon & 242 (12.80\%) & 232 (12.27\%) & 0 (0.00\%) & 0 (0.00\%) & 0 (0.00\%) \\
\bottomrule
\end{tabular}
\caption{Frequency of each bound branch across all individuals ($n = 1891$) for Experiments 1--4 (fully observed $Z$) and Experiment 5 (\hyperref[app_exp_5]{Experiment 3 with missing $Z$}). Each cell reports the count of individuals and, in parentheses, the percentage; each individual activates exactly one lower and one upper branch.}
\label{tab:branch-activation-combined}
\end{table*}

\newpage
\section{Related Work}
\label{app:related_work}

\textbf{Offline Policy Evaluation} Our work is related to offline policy evaluation \citep{Su2020AdaptiveESA, Wu2023DistributionalOPA, Rothfuss2023HallucinatedACA, Namkoong2020OffpolicyPEA}. If we interpret a model as a deterministic policy that dictates treatments, we are seeking to evaluate the effectiveness of a new decision-making policy, using recorded outcomes under different policies. However, our work differs due to the presence of a human decision-maker, whose compliance with the policy is dependent on many factors such as their perception of the policy's effectiveness. Furthermore, in offline policy evaluation, if the unknown policy makes a prediction not present in data, we cannot estimate its effect due to a positivity violation. However, our setting specifically focuses on using counterfactual correctness to resolve this situation. 

\textbf{Transportability and Z-Identification}
Transportability studies compute the causal effect of an intervention in a ``target'' environment from data on the same intervention in a ``source'' environment. The two environments differ structurally, often due to a selection node \citep{Pearl2015GeneralizingEFA, Mao2022CausalTFA, Shi2021DataIIA, Lal2023AFFA, stuart2011use, pearl2011transportability}. 
Conversely, Z-identification studies compute the causal effect of an intervention based on data of a different intervention in the same environment \citep{correa2020calculus, bareinboim2012zid, lee2019gid, correa2020soft}.
We are related to both fields, but more closely to z-identification, as we are estimating the causal effect of an untrialed model (different intervention) deployed in the existing environment of the RCT. 
We differ from both approaches in that we make parametric assumptions, like monotonicity, and our goal is not to give conditions for unbiased point estimation from infinite data. Instead, we give an interval containing the effect from finite data when the above algorithms preclude unbiased point estimation. 

\textbf{Partial Identification}
Partial identification is concerned with bounding causal effects when point estimation is impossible. It begins with Manski's assumption-free natural bounds and progresses to sharper bounds under graphical assumptions via linear and polynomial programming \citep{manski1990nonparametric, robins1989analysis, chickering1997finitedatapid, balke1994probabilistic, balke1997bounds, richardson2014parametricpid, zhang2021continuousivpid, zhang2021generalpid, bellot2024pagbounding, bellot2024pagboundingestimation,  duarte2021automatedapproachcausalinference}.
We differ from these approaches because our bounds use non-graphical assumptions pertinent to the application (as well as graphical assumptions), and we use experimental (interventional) data instead of observational data. Furthermore, the causal effect we bound occurs in a structurally different environment. 

\textbf{Evaluation of Machine Learning Models under Distribution Shift}
Distribution shift occurs when the environment in which a model is deployed differs structurally (not randomly) from the environment in which it is trained, violating the assumption that train and test sets are identically distributed \citep{Taori2020MeasuringRTA, Hendrycks2020TheMFA, Wiles2021AFAA, Koh2020WILDSABA, Miller2021AccuracyOTA}. In our setting, the presence of a human decision-maker causes the deployment environment to be differently distributed than the training environment, because the human can choose to implement a different action from the machine's suggestion.  We overcome this challenge by making assumptions about how the human will act, and the RCT data that we learn from is unconfounded. 
 
\textbf{Learning to Defer}
In our work, we mention deferral as a possible example of a neutral outcome. Our setting is closely related to learning to defer, in which an AI classifier may abstain on a given input and pass the decision to a human expert, with the goal of training the classifier to complement rather than duplicate the human \citep{Verma2022CalibratedLTA, Mozannar2023WhoSPA}. Like our work, L2D recognizes that the human decision-maker, not the model, is the final arbiter of the outcome, and that the two should be modeled jointly. However, L2D differs from our setting in several respects. First, L2D is concerned more with specifically how to train models, while our work focuses on the evaluation of already trained models. Second, L2D models an explicit, learnable rejector that routes each input to either the model or the human, while in our setting the human's response to a model prediction is unobserved and mediated by their trust in the model's perceived performance. Finally, L2D typically assumes the human prediction is available on deferred examples, whereas our framework must reason about counterfactual outcomes that are never directly observed.
 
\section{Missing Data}\label{missingness}

In the main text, we assume that $Z$ is observed for all subjects, and derive our results in that context.  In many practical settings, however, $Z$ may be missing for some subjects, due to limitations of data collection and/or due to being defined in counterfactual terms (e.g., in our motivating example for the semi-synthetic experiments, $Z$ is a counterfactual FTA indicator that is unobserved when bail is granted).

In this section, we discuss an example missingness structure under which our method remains valid. We limit our discussion to a simple missingness mechanism, to demonstrate how our method can hold under missingness. There may exist more interesting and unique structures in which our method remains valid. However, we leave the exploration of these structures to future work.

Let $R \in \{0,1\}$ be an observation indicator with $R = 1$ when $Z$ is observed and $R = 0$ otherwise, so the analyst observes $(X, \Pi, \hat{Z}, M, Y, R, Z)$ rather than fully observing $Z$.

Suppose we have the augmented DAG in \cref{fig:missingness_dag}, which is a simple extension of the DAG in \cref{fig:dag_a} to include the missingness indicator $R$, where:
$$R = f_R(X,\epsilon_R),\qquad \epsilon_R \perp (\epsilon_{XZ},\epsilon_Y,\dots).$$
The analyst observes $(X,\Pi,\hat Z,M,Y,R)$ and observes $Z$ only when $R=1$.

\begin{figure}[t]
\centering
\scalebox{0.80}{
\begin{tikzpicture}[>=stealth, node distance=2cm]
\tikzstyle{square} = [draw, very thick, minimum size=7mm, inner sep=3pt]
\begin{scope}
\path[->, very thick]
node[draw, square] (p) {$\Pi$}
node[above of=p] (d) {$D$}
node[right of=p, draw, square] (a) {$\hat{Z}$}
node[right of=a] (y) {$Y$}
node[right of=y, draw, square] (m) {$M$}
node[above of=a] (x) {$X$}
node[above of=y ] (z) {$Z$}
node[right of=z, red, draw, circle, dotted] (q) {$A$}
node[above of=x, draw, circle] (r) {$R$}
(p) edge[blue, bend right=20] (m)
(x) edge[red] (m)
(x) edge[blue, bend left=20] (q)
(p) edge[blue] (a)
(a) edge[blue] (q)
(q) edge[blue] (y)
(m) edge[blue] (q)
(d) edge[blue] (p)
(x) edge[blue] (a)
(x) edge[red, dotted, <->] (z)
(x) edge[blue] (y)
(z) edge[red] (y)
(x) edge[blue] (r)
;
\end{scope}
\end{tikzpicture}
}
\caption{Augmented version of \cref{fig:dag_a} including missingness. Node styles and edge colors follow
\cref{fig:dag_a}.}
\label{fig:missingness_dag}
\end{figure}

We define augmented versions of our bounds, $L'(\pi_e)$ and $U'(\pi_e)$, where $L'(\pi_e) \le L(\pi_e)$ and $U'(\pi_e) \ge U(\pi_e)$. In these augmented versions, we modify the definitions of $\bLcor, \bUcor, \bLmon, \bUmon$ to additionally require $R = 1$, so that these branches are only activated when $Z$ is observed. Thus, for individuals with $R = 0$, the only branches that can activate are $\bLdef, \bUdef, \bLneu, \bUneu$. The branches $\bLdef, \bUdef$ take the constant values $Y_{\min}, Y_{\max}$; under \cref{assumption:bounded} they are always valid, and can only loosen the inequality, never violate it, regardless of the missingness mechanism. The branches $\bLneu, \bUneu$, however, require estimation of a nuisance function:
$$\hat h(X,Y,Z,\Sagree)=\widehat\mu_{\Sagree}(X,Z)+\frac{\mathbf{1}\{\Pi\in\Sagree\}}{P(\Pi\in\Sagree\mid X)}\bigl(Y-\widehat\mu_{\Sagree}(X,Z)\bigr),$$
which references the individual's $Z$ through $\widehat\mu_{\Sagree}(X,Z)$ and is therefore not directly computable when $Z$ is missing.

To resolve this, we consider the neutral contribution to the population bound, noting that the neutral selector is a function of $X$ only. Let
\begin{itemize}
  \item $b(X) := \bLneu(X)$ --- a function of $X$ only,
  \item $g(X,Z) := \mathbb E[Y\mid X,\Pi\in\Sagree,Z]$ --- a function of $(X,Z)$.
\end{itemize}
Thus, the neutral contribution to the lower bound is $\mathbb E\big[b(X)\,g(X,Z)\big]$, where the expectation is taken over the joint distribution of $(X,Z)$. We consider only the lower bound, but the upper bound is identical. Because $b(X)$ depends on $X$ alone, the law of iterated expectations gives
\begin{align*}
\mathbb E\big[b(X)\,g(X,Z)\big]
&= \mathbb E\big[\,\mathbb E[b(X)\,g(X,Z)\mid X]\,\big]\\
&= \mathbb E\big[\,b(X)\,\mathbb E[g(X,Z)\mid X]\,\big].
\end{align*}
From the randomization we have $\Pi\perp Z\mid X$, so the inner term satisfies
\begin{align*}
\mathbb E[g(X,Z)\mid X]&=\mathbb E\big[\mathbb E[Y\mid X,\Pi\in\Sagree,Z]\mid X\big]\\
&= \mathbb E\big[\mathbb E[Y\mid X,\Pi\in\Sagree,Z]\mid X,\Pi\in\Sagree\big]\\
&= \mathbb E[Y\mid X,\Pi\in\Sagree].
\end{align*}
We then define $\nu(X):=\mathbb E[Y\mid X,\Pi\in\Sagree]$. Since $\nu(X)$ depends only on $X$, it can be estimated regardless of whether $Z$ is observed, and we estimate $\widehat\nu(X)$ in place of $\widehat\mu_{\Sagree}(X,Z)$. Thus the neutral term of our estimator becomes
$$\hat h_{\text{marg}}(X,Y, \Sagree)=\widehat\nu(X)+\frac{\mathbf{1}\{\Pi\in\Sagree\}}{P(\Pi\in\Sagree\mid X)}\bigl(Y-\widehat\nu(X)\bigr).$$

We use that estimator for all individuals. For the other branches on the individuals with $R = 1$, we must show that the nuisance functions in \cref{theorem:estimator} do not become inconsistent under missingness. To estimate our bounds, we require nuisance functions of the form $\mathbb E[Y\mid X,\Pi=\pi_j,Z]$. This cannot be estimated directly, since $Z$ is missing for $R = 0$ subjects; instead, we can only estimate
$$\mathbb E[Y\mid X,\Pi=\pi_j,Z,R=1].$$
Because in \cref{fig:missingness_dag} we have $R=f_R(X,\epsilon_R)$ with $\epsilon_R$ independent of the rest, it follows that
$$R \perp (Z,Y)\mid X \;\Rightarrow\; R\perp Y \mid X,\Pi,Z,$$
and therefore
$$\mathbb E[Y\mid X,\Pi=\pi_j,Z,R=1] \;=\; \mathbb E[Y\mid X,\Pi=\pi_j,Z] \;=\; \mu_j(X,Z).$$
For the fitted $\widehat\mu_j$ to be consistent, and for the margin and rate conditions of \cref{theorem:estimator} to carry over, the observed-$Z$ subsample must be representative of the population $X$-support. This requires the following assumption:

\begin{assumption}[Missingness Positivity]\label{assump:missing_positivity}
There exists $\eta > 0$ such that $\pi_R(x):=P(R=1\mid X=x)\ \ge\ \eta$ for all $x\in\mathcal X$.
\end{assumption}

This prevents missingness structures in which $Z$ is never observed for some values of $X$. For example, in a diagnostic context, if $Z$ were never observed for patients above age 80, the assumption would be violated. Under these conditions, the nuisance functions estimated on the $R=1$ subset converge to the original nuisance functions, and our estimator remains consistent. The model-assignment propensity $P(\Pi\mid X)$ is fixed by RCT design and does not involve $R$, so \cref{assump:known_prop} is unaffected; under \cref{assump:missing_positivity} the observed-$Z$ subsample shares the population $X$-support, so the margin conditions and nuisance rates transfer, with the observed fraction affecting only the constant in the rate. Therefore, under this missingness structure and \cref{assump:missing_positivity}, $\widehat L'(\pi_e)$ and $\widehat U'(\pi_e)$ are consistent and asymptotically normal (\cref{theorem:estimator}) for the valid bounds $L'(\pi_e)\le \mathbb E[Y(\pi_e)]\le U'(\pi_e)$.
\subsection{Relation to Experiment 5}\label{app: exp_5_miss}
In \hyperref[app_exp_5]{Experiment 5}, these conditions are clearly not satisfied. The missingness of $Z$ depends directly on the value of $A$. In this experiment, $Z$ is defined as the counterfactual outcome under no judge intervention, when it is available. When $A=1$, the judge gives bail to the defendant, and $Z$ is never observed. Conversely, when $A=0$, the judge does not give bail to the defendant, and $Z$ is always observed. 

This clearly does not match the missingness structure described in \cref{fig:missingness_dag} and violates \cref{assump:missing_positivity}. Regardless, as mentioned, we run it for demonstration purposes. 
\subsection{Specific Violated Assumptions}
We also discuss how our method can handle violations of specific assumptions posited in our method. Since each branch corresponds to a specific assumption, if any of the assumptions is violated, the corresponding indicator and its branch can be removed from the bounds. This would cause any values of $x$ that would have activated the branch to activate $b_L/b_U = \text{def}$ instead, resulting in a valid albeit looser bound. For example, if we are in a diagnostic setting, and there is no adequate neutral action (model cannot defer to physician), then we have violated \cref{assumption:neutral}, but we can run the method as is, since the neutral branches ($\bsU=\lneu$, $\bsL=\lneu$) will never activate regardless, and the bounds will be constructed using the other assumptions. Under violations of \cref{assumptions:correct} or \cref{assumption:monotonicity}, the corresponding branches (1 or 2, respectively) can instead default to $Y_{\min}, Y_{\max}$ bounds, maintaining validity, which trivially follows from \cref{assumption:bounded}.

\newpage
\section{Flowchart}
\label{app:flowchart}

\begin{figure}[h]
\centering
\begin{tikzpicture}[
    node distance=1.3cm and 1.7cm,
    base/.style={draw, thick, align=center, minimum height=0.9cm},
    process/.style={base, rectangle, minimum width=1.8cm},
    decision/.style={base, diamond, aspect=1.6, inner sep=1pt, minimum width=1.6cm},
    terminal/.style={base, rectangle, rounded corners=10pt, minimum width=2.2cm},
    arrow/.style={-{Stealth[length=2.5mm]}, thick, rounded corners=4pt},
    yes/.style={arrow, draw=green!50!black},
    no/.style={arrow, draw=red!70!black},
    el/.style={font=\footnotesize, inner sep=2pt}
]

\node[process] (x) {$X$};
\node[decision, below=of x] (agree) {Agreers\\exist?};

\node[decision, right=of agree]    (neutral) {Neutral\\prediction?};
\node[terminal, right=of neutral]  (avg)     {$\bsL=\lneu$};

\node[decision, below=of neutral]  (worse)   {Worse agreeing\\models exist?};
\node[terminal, right=of worse]    (bagree)  {$\bsL=\lmon$};

\node[decision, below=of worse]    (optdec)  {Optimal\\prediction?};
\node[terminal, right=of optdec]   (term)    {$\bsL=\ldef$};

\node[decision, below=of optdec]   (disagree){Disagreers\\exist?};

\node[decision, below=of agree]    (optact)  {Optimal\\prediction?};
\node[terminal, left=of optact]    (ymin)    {$\bsL=\ldef$};
\node[terminal, below=2.5cm of optact] (bdisagree) {$\bsL=\lcor$};

\draw[arrow] (x) -- (agree);

\draw[yes] (agree)   -- node[el, above]{Yes} (neutral);
\draw[no]  (agree)   -- node[el, right]{No}  (optact);

\draw[yes] (neutral) -- node[el, above]{Yes} (avg);
\draw[no]  (neutral) -- node[el, right]{No}  (worse);

\draw[yes]  (worse)   -- node[el, above]{Yes}  (bagree);
\draw[no] (worse)   -- node[el, right]{No} (optdec);

\draw[no]  (optdec)  -- node[el, above]{No}  (term);
\draw[yes] (optdec)  -- node[el, right]{Yes} (disagree);

\draw[no]  (optact)  -- node[el, above]{No}  (ymin);
\draw[yes] (optact)  -- node[el, right]{Yes} (bdisagree);

\draw[yes] (disagree.west) -- node[el, above]{Yes} ++(-1,0) |- (bdisagree.east);

\draw[no]  (disagree.south) -- node[el, right]{No} ++(0,-0.8) -| (ymin.south);

\end{tikzpicture}
\caption{Flowchart with logic equivalent to that of $L(\pi_e)$}
\label{fig:lower-bound-flowchart}
\end{figure}
\end{document}